%% file: DCFlow_arxiv.tex
\documentclass{arxiv_article}

\usepackage{amsmath,amssymb,amsthm}
\usepackage{xspace}
\usepackage{fancyvrb}
\usepackage{array}
\usepackage{tabularx}
\usepackage{ragged2e}
\usepackage{enumitem}

\usepackage{booktabs}
\usepackage{colortbl}
\usepackage{mathtools}
\usepackage{xcolor}
\usepackage{algorithm}
\usepackage{algpseudocode}

\usepackage{adjustbox}
\usepackage{nicematrix}
\usepackage{makecell}
\usepackage{graphicx}
\usepackage{subfigure}
\usepackage{listings} % Added for prompt display

\theoremstyle{plain}

\usepackage{wrapfig}
\usepackage{float}

\usepackage{titletoc}

\input{math_commands.tex}

\usepackage{microtype}
\usepackage{graphicx}
\graphicspath{{figures/}{./}}
\usepackage{subcaption}
\usepackage{booktabs}
\usepackage{bm}
\usepackage{multirow}
\usepackage[table]{xcolor}
\definecolor{blue}{RGB}{0, 70, 174}

\usepackage{amsmath,amssymb,mathtools,amsthm}
\usepackage{array}
\usepackage{enumitem}

\definecolor{lightpink}{rgb}{1,0.9,0.9}
\definecolor{lightblue}{rgb}{0.8,0.9,1.0}
\definecolor{deepblue}{rgb}{0,0,0.5}
\definecolor{deepgreen}{rgb}{0,0.5,0}

\newcolumntype{C}[1]{>{\centering\arraybackslash}p{#1}}
\definecolor{oatpink}{HTML}{FDE2E4}
\definecolor{mypurple}{HTML}{4A148C}

\renewcommand\affiliationformat[2][]{ {\footnotesize $^{#1}$#2} }
\title{DC-Flow: Target-Conditioned Cyclic Peptide Design via Delayed-Coupling Flow Matching}
\author[1,2,3]{Yitian Wang}
\author[1]{Minjie Cheng}
\author[2]{Yaning Cui}
\author[2,3]{Maohua Yang}
\author[2,3]{Dongdong Wang\thanks{Corresponding authors. Email: wangdd@dp.tech, hongtengxu@ruc.edu.cn}}
\author[1,4,5]{Hongteng Xu\footnotemark[1]}

\affiliation[1]{Gaoling School of Artificial Intelligence, Renmin University of China}
\affiliation[2]{DP Technology\qquad $^{3}$Cyclon Therapeutics}
\affiliation[4]{Beijing Key Laboratory of Research on Large Models and Intelligent Governance}
\affiliation[5]{Engineering Research Center of Next-Generation Intelligent Search and Recommendation, MOE}

\newcommand{\method}{\textsc{DC-Flow}}
\newcommand{\Rec}{\mathcal{R}}
\newcommand{\B}{\mathcal{B}}
\newcommand{\Sseq}{\mathcal{S}}

\newcommand{\tbb}{\tau_{\mathrm{b}}}
\newcommand{\tseq}{\tau_{\mathrm{s}}}
\newcommand{\tsc}{\tau_{\mathrm{c}}}

\theoremstyle{plain}
\theoremstyle{remark}

\begin{document}

\abstract{Target-conditioned cyclic peptide design requires coordinating interfacial side chains with macrocyclic closure. 
Existing end-to-end generators concurrently update geometric and chemical variables from noise, assigning residues to disordered coordinates. 
Meanwhile, predictor-assisted workflows often get stuck in local traps because they lack feedback in the propose-and-filter stage and ignore 3D structural proposals during sequence search. 
To overcome the drawbacks of these two paradigms, we propose \method{}, a staged generative framework built around \textbf{D}elayed-\textbf{C}oupling Flow Matching. 
In principle, \method{} postpones sequence and side-chain activation until a target-conditioned backbone takes shape, resolving structural ambiguity before chemical assignment. 
Then, it feeds delayed activations back into the backbone, achieving co-evolution of backbone geometry and emerging residue identities by coupling their flows. 
This mechanism enables more reliable macrocyclic closure and favorable interface packing. 
For predictor-assisted workflows, \method{} supplies 3D proposals with in-trajectory guidance, enabling cascaded test-time scaling to prune redundant backbones and terminate unpromising refinement early. 
Extensive experiments show that \method{} improves cyclization fidelity, diversity, and interface energetics, substantially increasing the yield of confident cyclic peptide binders under a fixed final-candidate budget.
Our code is available at \url{https://github.com/mianqiu00/DCFlow}.}
\maketitle

\section{Introduction} 

As head-to-tail macrocycles, cyclic peptides can engage broad, shallow, polar protein-protein interaction (PPI) interfaces that often lack pockets suitable for small molecules~\citep{dougherty2019penetration,vinogradov2019macrocyclic}. 
Well-designed macrocycles can protect against protease attack, reduce access to protease-susceptible backbone states~\citep{malde2019proteinbound}, and exhibit molecular chameleonicity.\footnote{For some cyclic peptides, their polar groups remain exposed in water but fold to hide their polarity in apolar media, facilitating membrane permeation~\citep{wang2021cyclicpermeability,linker2022polar}.} 
However, target-conditioned design of cyclic peptides remains challenging due to data scarcity.
High-resolution complex structures are severely limited compared with the vast space of peptide sequences, lengths, and cyclization modes~\citep{yang2026cpsea,rettie2025afcycdesign,rettie2025rfpeptides}. 
Moreover, minor changes near the cyclization site can shift the entire backbone, yielding multiple low-energy states and binding poses. 
Therefore, a target-conditioned design model must jointly solve target docking, backbone closure, sequence design, and side-chain packing under sparse, heterogeneous supervision~\citep{zhou2025cpsde,wang2026muco}.

For cyclic peptides, existing target-conditioned design methods generally fall into two paradigms, as shown in Figure~\ref{fig:paradigm} (a) and (b).
On one hand, full-atom generators such as PepFlow~\citep{li2024pepflow}, PepGLAD~\citep{kong2024pepglad}, DiffPepBuilder~\citep{wang2024diffpepbuilder}, and cyclic-specific models like APCyc~\citep{zhao2026apcyc} (adapted from dyMEAN~\citep{kong2023dymean}) offer high throughput by amortizing design into generative trajectories. 
However, the data scarcity issue limits their generalizability, and their outputs are often geometrically plausible but fail to recover confident interfaces under cofolding. 
On the other hand, predictor-assisted methods, e.g., RFPeptides~\citep{rettie2025rfpeptides}, AfCycDesign~\citep{rettie2025afcycdesign}, and HFGuidedDesign~\citep{hu2026hfguideddesign}, integrate structure prediction into the proposal, optimization, and filtering pipeline, enriching candidates with high self-consistency. 
While effective, repeated and exhaustive cofolding incurs prohibitive computational costs, severely restricting candidate exploration under a fixed compute budget.

To bridge the gap between direct generation and predictor-assisted design and overcome their drawbacks, we propose \method{}, a staged generative framework built on a delayed-coupling flow matching strategy. 
In principle, \method{} is based on a key fact: \textbf{while side-chain interactions supply critical binding affinity at the PPI interface, the backbone scaffold dictates their optimal spatial presentation}~\citep{clackson1995hotspot}.
% Therefore, predicting sequence and side chains before the backbone scaffold takes shape lacks essential geometric context and often leads to poor designs.
Motivated by this, delayed-coupling flow matching postpones sequence and side-chain flows until the backbone scaffold takes shape, improving cyclic peptide design by enriching essential geometric context.
As shown in Figure~\ref{fig:paradigm} (c), once each of the flows is activated, it couples with the backbone in a shared network, allowing backbone geometry and residue identities to co-evolve over time.
To complement cyclic-complex supervision with general structure-conditioned sequence statistics, we augment a frozen ProteinMPNN prior~\citep{dauparas2022proteinmpnn} with trainable residual modules. 
Crucially, this staged formulation naturally unlocks \emph{cascaded test-time scaling}, an adaptive search policy that selects diverse backbones mid-generation, guides sequence sampling along the trajectory, and prunes unpromising cofolding optimization early. 

We comprehensively evaluate \method{} on the LNR benchmark~\citep{tsaban2022harnessing} as a full-atom generator.
In addition, we test it on the design tasks in the published HFGuidedDesign targets~\citep{hu2026hfguideddesign} and a challenging TL1A trimer case~\citep{zwolak2022stable} as a cofolding proposal generator. 
We also provide systematic ablations of our asynchronous staging, prior integration, and test-time search strategy. 
Experiments show that \method{} can reliably discover cyclic peptide conformations with high prediction confidence and superior binding interfaces. 
It achieves the highest cyclization success on LNR, leads cofolding baselines in strict-hit rate and mean Rosetta $\Delta G_{\mathrm{sep}}$, and further boosts these metrics through cascaded search.

\begin{figure}[t]
\centering
\includegraphics[width=\textwidth]{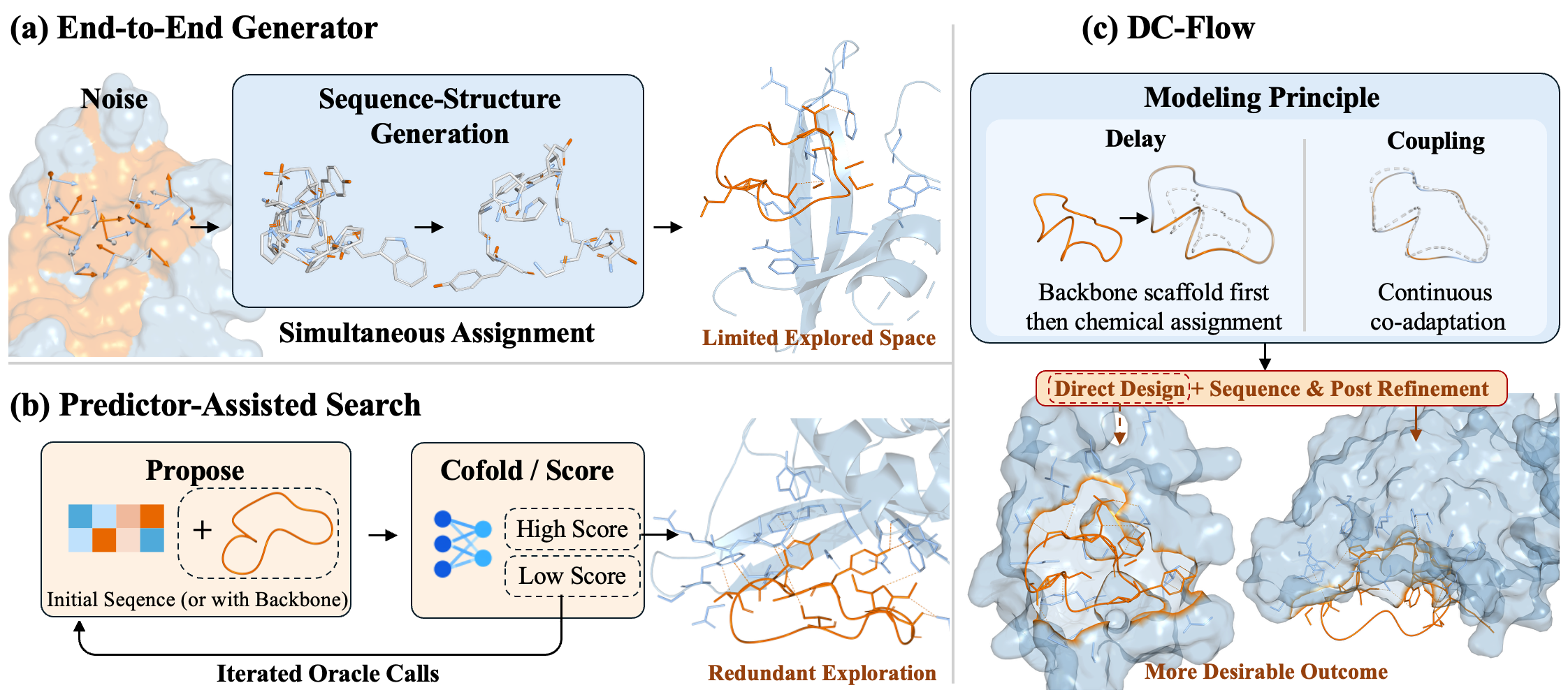}
\caption{\textbf{\method{} and Two Design Paradigms.} \textbf{(a)} End-to-end generators simultaneously update sequence and structure from noise, resulting in early disorder and limited explore space. 
\textbf{(b)} Predictor-assisted workflows rely on iterative oracle evaluations without dynamic structure-sequence feedback, resulting in redundant exploration. 
\textbf{(c)} \textsc{DC-Flow} resolves early structural ambiguity by delaying chemical assignment, then couples their flows to achieve continuous co-adaptation for both end-to-end design and predictor-assisted refinement.}
\label{fig:paradigm}
\vspace{-10pt}
\end{figure}

\vspace{-3pt}
\section{Related Work} 
\label{sec:related_work} 
\vspace{-3pt}

\textbf{Target-conditioned generative peptide design.} 
Generative approaches aim to generate protein sequences and structures conditioned on receptor pockets.
For example, APM~\citep{chen2025apm} models all-atom multi-chain complexes for conditional generation and inverse folding, and Genie~3~\citep{lin2026genie3} enhances atomistic $\mathrm{SE}(3)$-equivariant diffusion for efficient binder design. 
Focusing on peptides, PepGLAD~\citep{kong2024pepglad} embeds variable-size residues for receptor-conditioned diffusion. 
PepFlow~\citep{li2024pepflow} defines multiple flows over rigid frames, discrete residues, and side-chain torsions to support full-atom design. 
DiffPepBuilder~\citep{wang2024diffpepbuilder} utilizes an $\mathrm{SE}(3)$-equivariant diffusion model to design peptide sequences and conformations with disulfide stabilization. 
Focusing on cyclic peptides, APCyc~\citep{zhao2026apcyc} extends dyMEAN~\citep{kong2023dymean} by incorporating an expanded residue vocabulary for cyclization sites and applying Bayesian optimization. 
CPSea~\citep{yang2026cpsea} benchmarks these methods on cyclic peptide design, but it relies on pseudo-complexes rather than real ones. 
Notably, these generators simultaneously update conformations and sequences from scratch along a single trajectory, while \method{} stages the generation of backbone, sequence, and side chains in a delayed-coupling way.

\textbf{Predictor-guided design and cofolding pipelines.} 
A complementary paradigm incorporates structure predictors and cofolding models to guide candidate search, framing design as heuristic optimization over an energy landscape. 
At one end, AfCycDesign~\citep{rettie2025afcycdesign} introduces cyclic relative positional encodings into AlphaFold2~\citep{jumper2021alphafold} to establish a cyclic cofolding oracle, but its de novo design relies on time-consuming sequence hallucination and is prone to getting trapped in local optima.
At the other extreme, RFPeptides~\citep{rettie2025rfpeptides} adopts a propose-and-filter pipeline, where RFdiffusion~\citep{watson2023novo} proposes backbones, ProteinMPNN predicts sequences, and cofolding models~\citep{rettie2025afcycdesign} filter sequences. 
HFGuidedDesign~\citep{hu2026hfguideddesign} represents a middle ground by steering discrete sequence diffusion with online HighFold evaluations~\citep{zhang2024highfold}, but it operates purely in sequence space and relies on the external conformation predictors. 
\method{} bridges these approaches, generating coupled backbone-sequence candidates with in-trajectory guidance.

\textbf{Test-time scaling in biomolecular design.} 
Recently, Proteina-Complexa~\citep{didi2026complexa} combines a pretrained atomistic flow model with various searching and steering strategies, demonstrating that generative initializations benefit predictor-assisted sequence hallucination. 
In protein inverse folding, ProtInvTree~\citep{liu2026protinvtree} formulates sequence design as reward-guided tree search to increase search breadth and depth without retraining. 
MuCO~\citep{wang2026muco} addresses a sequence-conditioned cyclic peptide setting by decomposing cyclization into backbone generation, side-chain packing, and all-atom refinement. 
Building on this concept, \method{} proposes cascaded test-time scaling to concentrate expensive refinement compute on the most promising candidates.
% it selects diverse backbones mid-generation, branches guided sequences along the trajectory, and applies early pruning to downstream cofolding. 
% This policy concentrates expensive refinement compute exclusively on the most promising candidates.

\vspace{-3pt}
\section{Proposed Method}
\label{sec:method}
\vspace{-3pt}

\subsection{Problem Formulation and Modeling Principle}
\label{sec:problem_formulation}

In principle, we formulate the target-conditioned cyclic peptide design problem through the lens of conditional generative modeling, learning flow-based generators to capture the evolution of the \textit{design state} over time.
In particular, given a fixed receptor pocket $R = (S^R, \bm{X}^R)$ comprising its residue sequence $S^R$ and 3D coordinates $\bm{X}^R$, we aim to generate a cyclic peptide of length $L$ conforming to cyclization specification $\Gamma$ (e.g., head-to-tail or disulfide closure), summarized as conditions $\xi=(L,\Gamma)$. 
At time $t\in[0,1]$, the peptide design state is represented by $Y_t=(B_t, S_t, C_t)$, where $B_t$ denotes backbone scaffold, $S_t$ is a continuous carrier for residue identities over alphabet $\mathcal{A}$ yielding discrete sequence $S_1\in\mathcal{A}^L$, and $C_t$ describes side-chain conformations. 
Starting from an initial state sampled from an uninformative prior distribution, i.e., $Y_0\sim p_0$, the model with trainable parameters $\theta$ learns the conditional distribution $p_\theta(Y_1\mid R, \xi)$.

As aforementioned, at PPI interfaces, side-chain interactions determine critical binding affinity, but their spatial presentation is fundamentally dependent on backbone scaffold~\citep{clackson1995hotspot}. 
Therefore, generating all modalities (backbones, side chains, and sequences) simultaneously forces sequence and side chains to develop over disordered, unformed backbone scaffolds. 
In contrast, decoupling backbone generation with side-chain packing and sequence design restricts the exploration of the design space and thus leads to reduced search efficiency.

Motivated by the above analysis, \method{} decomposes the generative trajectory into three sequential but overlapping stages demarcated by activation thresholds $0=\tbb < \tseq < \tsc < 1$:
\begin{eqnarray}
%\begin{aligned}
p_\theta(Y_1 | R, \xi)\!=\!\!\int\!
\underbrace{p_\theta(B_{\tseq} | R, \xi)}_{\text{\scriptsize Backbone sampler}}
\underbrace{p_\theta(B_{\tsc}, S_{\tsc} | B_{\tseq}, R, \xi)}_{\text{\scriptsize Sequence generator}}
\underbrace{p_\theta(Y_1 | B_{\tseq}, B_{\tsc}, S_{\tsc}, R, \xi)}_{\text{\scriptsize Full-atom packer}}
\mathrm{d}B_{\tseq}\mathrm{d}B_{\tsc}\mathrm{d}S_{\tsc}.\!\!\!
%\end{aligned}
\label{eq:dc_factorization}
\end{eqnarray}
% \xu{explain $\tau$'s and $B_{\tau}$'s. BTW, is it reasonable to reuse $p_{\theta}$ on the right-hand side of the equation?}
Here, $\tau$'s mark the modality activation times, $B_\tau$ and $S_\tau$ denote the backbone and sequence states at activation time $t=\tau$. This formulation first establishes a receptor-conditioned backbone to resolve structural ambiguity before sequence assignment. 
It activates sequence and side chains over the formed scaffold while keeping the backbone flexible, co-evolving backbone and residue identities.

\begin{figure}[t]
\centering
\includegraphics[width=\textwidth]{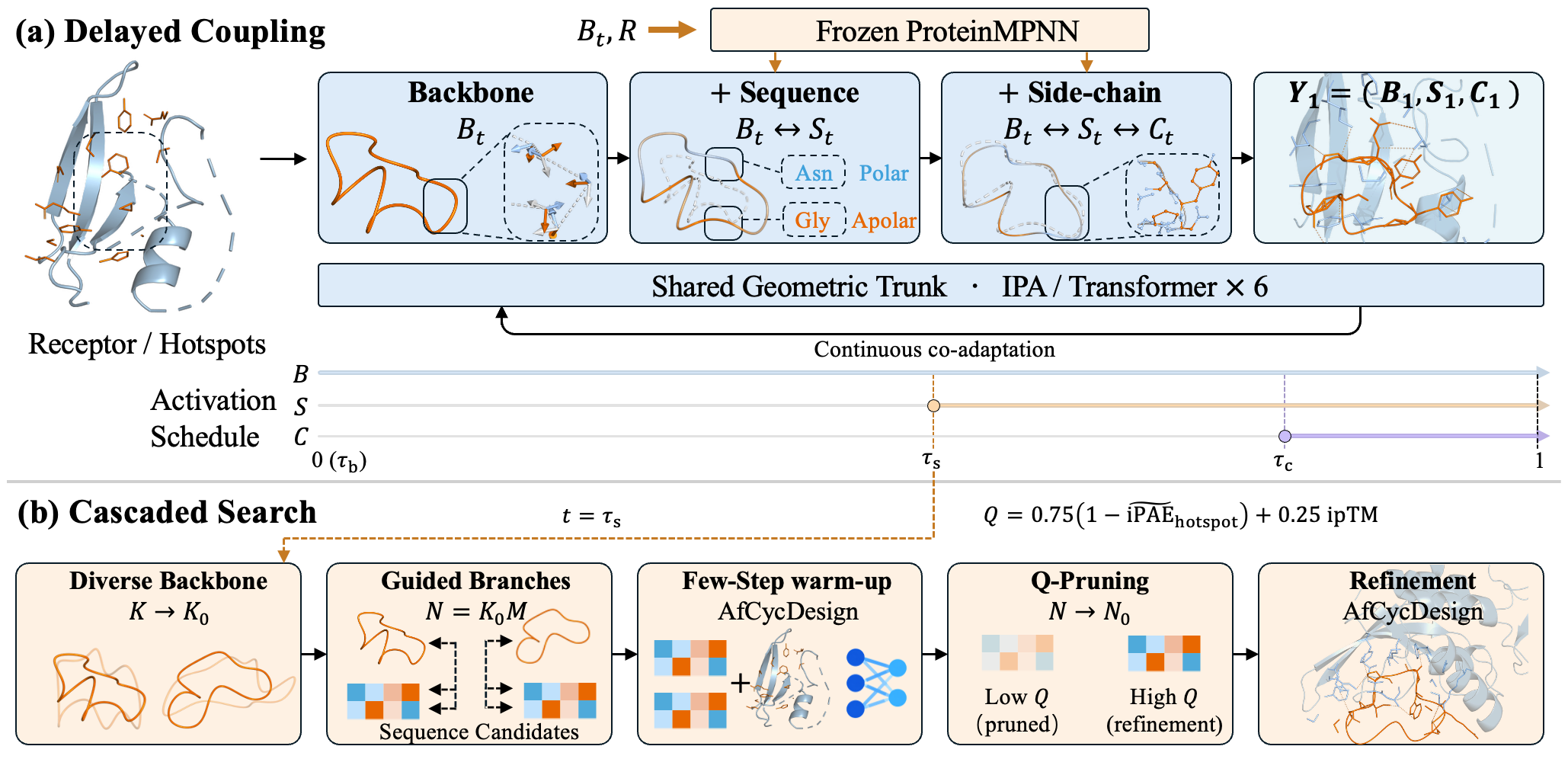}
\caption{\textbf{The overview of \method{}.} \textbf{(a) Delayed coupling:} Modality clocks activate backbone, sequence (augmented by ProteinMPNN prior), and side chains sequentially, enabling continuous co-adaptation in a shared network. 
\textbf{(b) Cascaded search:} At $\tseq$, backbones are pruned to diverse roots, each branching into $M$ oracle-guided continuations ($N=K_0 M$). After a few-step warm-up, candidates are screened by score $Q$ to restrict expensive refinement.}
\label{fig:method}
\vspace{-10pt}
\end{figure}

Figure~\ref{fig:method} provides an overview of \method{} across both standalone full-atom generation and cofolding pipelines, which contains two cohesive mechanisms: 
(i) delayed-coupling flow matching for staged backbone, sequence, and side-chain generation, conditioned on an external ProteinMPNN prior; 
and (ii) cascaded test-time scaling for efficient cofolding refinement.

\subsection{Delayed-Coupling Flow Matching}
\label{sec:delayed_coupling}

% Existing multimodal diffusion/flow models typically update backbone, sequence, and side-chain variables concurrently from $t=0$, forcing chemical identities to emerge over severely disordered coordinates. 
% \method{} instead introduces \emph{delayed coupling}. 
\method{} learns a flow-based model to generate backbones, sequences, and side chains asynchronously.
For modality $m\in\{\mathrm{b},\mathrm{s},\mathrm{c}\}
$, let $\tau_m$ denote the threshold for activating its generative flow. 
Accordingly, independent modality progress clocks are defined as $s_m(t) = \operatorname{clip}(\frac{t-\tau_m}{1-\tau_m},0,1)$, with activation boundaries $0 = \tbb < \tseq < \tsc < 1$ (by default, $0, 0.5, 0.8$), which leads to the proposed delayed-coupling mechanism that corresponds to two complementary design principles:
%\xu{I suggest use $\mathrm{b}$, $\mathrm{s}$, $\mathrm{c}$ in the whole paper and figures, for concise notations and reducing space.}

\textbf{Delaying: Resolving Structural Ambiguity.}
Before $\tseq$, the flow-based generator evolves only the backbone geometry $B_t$, while sequence inputs are replaced by uninformative mask tokens and side-chain inputs use a fixed zero-angle encoding. 
Under the reference-path assumptions in Appendix~\ref{app:delayed_proof1}, earlier backbone states are statistically less informative observations of the target structure $B_1$. 
Along this reference path, the residual sequence ambiguity, measured by conditional mutual information $D(t) \triangleq I(S_1; B_1 \mid B_t, R, \xi)$, is non-increasing as structural noise decreases, with $I(a;\, b \mid c) \triangleq H(a \mid c) - H(a \mid b, c)$ measuring the excess Bayes sequence risk.
Delaying sequence activation to $\tseq$ avoids conditioning amino-acid assignment in highly ambiguous states of the early trajectory.
% \xu{$D(t)$ is undefined. We can introduce the proposition here, leaving its proof in Appendix. (Wang: \triangleq defines the notation)} 

\textbf{Coupling: Continuous Mutual Co-Adaptation.}
Crucially, delaying sequence does \emph{not} imply a hard, decoupled two-stage pipeline (e.g., generating a rigid backbone and subsequently fixing it for inverse folding). 
Once sequence activates at $\tseq$, it enters the shared geometric trunk to jointly update the remaining trajectory alongside $B_t$. 
Formally, letting $R_B$ and $R_{BS}$ denote Bayes risks of predicting backbone target without and with sequence $S_t$, we have $R_B - R_{BS} \triangleq \Delta_t^{B \leftarrow S} \ge 0$ (Proposition~B.2, proof in Appendix~\ref{app:coupling_proof}), where the error reduction is strict whenever sequence alters the conditional mean of the target. 
This continuous coupling allows the backbone scaffold to remain flexible, actively adjusting its local geometry to accommodate specific side-chain packing requirements and emerging residue interactions. 
At $\tsc$, side-chain torsions $C_t$ activate to finalize all-atom packing.
% \xu{The theorem itself can be shown here. Leave its proof in Appendix.}

\textbf{Staged Training Objective.}
The generator is optimized end-to-end using flow matching and cross-entropy objectives. 
Let $\ell_B$ train backbone rigid frames and coordinates, $\ell_S$ denote sequence cross-entropy, $\ell_C$ supervise side-chain torsion angles, and $\ell_{\mathrm{cyc}}$ penalize cyclic closure errors (head-to-tail and disulfide constraints). 
With nonnegative loss weights $\lambda_B,\lambda_S,\lambda_C$ and indicator function $\mathbf 1[\cdot]$, modality-specific loss terms activate in accordance with their respective time thresholds. 
The grouped objective is:
\begin{equation}
\mathcal L = \mathbb E_{t\sim \mathcal{U}[0.01,0.99]}\!\left[
\lambda_B \ell_B
+ \mathbf 1[t\geq\tseq]\lambda_S\ell_S
+ \mathbf 1[t\geq\tsc]\lambda_C\ell_C
+ \ell_{\mathrm{cyc}}
\right].
\label{eq:dc_objective}
\end{equation}
% \xu{The definitions of the notations in the above equation should be given here. % 我觉得这里写的还是比较清楚吧，具体公式太长了放附录了也可以根据ref直接点过去}
The disulfide component of $\ell_{\mathrm{cyc}}$ is active only after side-chain activation; detailed mathematical definitions and decompositions of all individual loss terms are provided in Appendix~\ref{app:architecture_dynamics}.

\textbf{Leveraging ProteinMPNN as a Prior of Sequence.}
In both training and inference, \method{} incorporates a frozen ProteinMPNN~\citep{dauparas2022proteinmpnn} $F_\phi$ as a sequence prior, which successfully compensates for the scarcity of cyclic peptide complex structures.
In particular, for $t > \tseq$, the geometric trunk outputs residue features $h_{i,t}$, from which a sequence head predicts base logits $\ell^B_{i,t} = f_{\mathrm{AA}}(h_{i,t})$. 
Concurrently, $F_\phi$ evaluates the evolving intermediate backbone $B_t$ and receptor $R$, outputting log-probabilities $\ell^P_{i,t} = F_\phi(B_t; R)_i$. 
We center $\ell^P_{i,t}$ across alphabet $\mathcal{A}$ to form zero-mean residual logits $\widetilde\ell^P_{i,t}$. 
With a trainable scalar gate $g \in \mathbb R$ (initialized to $0.1$), the fused logits are
\begin{equation}
\ell_{i,t} = \ell^B_{i,t} + g\,\widetilde\ell^P_{i,t}, \qquad \text{where} \quad \widetilde\ell^P_{i,t,a} = \ell^P_{i,t,a} - \frac{1}{|\mathcal A|}\sideset{}{_{a'\in\mathcal A}}\sum\ell^P_{i,t,a'}.
\label{eq:dc_prior_fusion}
\end{equation}
Disallowed residues (e.g., non-cysteines at disulfide sites) are masked before softmax normalization.
\begin{itemize}[leftmargin=*,itemsep=0pt,topsep=0pt]
    \item \textbf{Training: Structural Transfer without Gradient Leakage.}
    During training, the sequence cross-entropy loss $\ell_S = -\frac{1}{L}\sum_{i=1}^L \log \pi_{i,t, a_i^\ast}$ supervises the fused distribution $\pi_{i,t} = \operatorname{softmax}(\ell_{i,t})$ against native residues $a_i^\ast$. 
    Crucially, evaluating ProteinMPNN on the noisy intermediate backbone $B_t$ rather than clean coordinates teaches the network sequence--structure compatibility throughout generation. 
    We apply a stop-gradient operator to $B_t$ at the input of $F_\phi$, ensuring that $\ell_S$ optimizes only \method{}'s native trunk, sequence head, and gate $g$ without differentiating through the external prior.
    \item \textbf{Inference: Dynamic In-Trajectory Guidance.}
    At inference time, residual fusion induces a product-of-experts distribution over admissible amino acids. 
    Suppressing indices $i,t$, let $\pi^B = \operatorname{softmax}(\ell^B)$ and $q^P = \operatorname{softmax}(\ell^P)$. 
    As shown in Appendix~\ref{app:mpnn_transfer_proof}, the fused categorical distribution satisfies
    \begin{equation}
    \pi(a) \propto \pi^B(a)[q^P(a)]^g,
    \label{eq:prior_product_form}
    \end{equation}
    directly scaling the injected structural log-odds by $g$.
\end{itemize}
\vspace{-3pt}
Unlike conventional pipelines (e.g., RFPeptides~\citep{rettie2025rfpeptides}) that apply ProteinMPNN as an isolated post-hoc step on fixed coordinates, \method{} embeds the prior \emph{inside the generative trajectory}. 
Discrete sequences sampled from $\pi$ feed back into the trunk at subsequent flow steps, steering continuous backbone relaxation and side-chain packing. 
This closes an active sequence--structure feedback loop during generation.

\vspace{-10pt}
\subsection{Cascaded Test-Time Scaling}
\label{sec:cascaded_search}

Although \method{} supports direct full-atom generation, cofolding~\citep{rettie2025afcycdesign} with generated proposals is still bottlenecked by expensive oracle evaluations and iterative refinement. 
We therefore introduce \emph{cascaded test-time scaling}, dynamically allocating inference compute across structural exploration, sequence branching, and selective refinement (Appendix~\ref{app:cascaded_search_theory}).

\begin{itemize}[leftmargin=1.3em,itemsep=0pt,topsep=0pt]
\item \textbf{Backbone Selection ($t=\tseq$).}
We initiate $K$ parallel backbone trajectories up to $\tseq$. 
Before activating sequence generation, we downsample to $K_0 < K$ diverse roots via kernel-based structural crowding (Appendix~\ref{app:backbone_selection}), pruning geometrically redundant backbones.
\item \textbf{Guided Sequence Branching ($t\in[\tseq, 1]$).}
Each retained root branches into $M$ independent sequence continuations ($N=K_0M$ trajectories total) while backbones continue evolving (Appendix~\ref{app:branching_checkpoints}). 
Concurrently, an external cyclic cofolding oracle periodically evaluates single-residue substitutions, biasing logits $\ell_{i,t}$ toward favorable hotspot interactions~\citep{hu2026hfguideddesign}. 
Through sequence--structure coupling, this guidance simultaneously steers subsequent backbone updates (Appendices~\ref{app:sequence_coverage} and~\ref{app:search_discovery}).
\item \textbf{Early-Pruned Cofolding Refinement.}
At $t=1$, the $N$ generated designs undergo a brief 5-step AfCycDesign warm-up. 
We then probabilistically retain $N_0 < N$ candidates weighted by an interface quality score
\begin{equation}
Q = 0.75\bigl(1-\widetilde{\mathrm{iPAE}}_{\mathrm{hotspot}}\bigr) + 0.25\,\mathrm{ipTM},
\label{eq:dc_search_quality}
\end{equation}
where $\widetilde{\mathrm{iPAE}}_{\mathrm{hotspot}}$ and $\mathrm{ipTM}$ denote the normalized hotspot interface PAE and interface TM-score, respectively. 
Only these $N_0$ survivors proceed to full refinement, subject to early stopping if the score plateaus for five consecutive soft-stage updates (Appendix~\ref{app:cascade_refinement}). 
This funnel concentrates expensive refinement on high-potential candidates, improving discovery efficiency under fixed final-candidate budget.
\end{itemize}

\vspace{-7pt}

\section{Experiments}
\label{sec:experiments}

\vspace{-5pt}

\subsection{Experimental Setup}
\label{sec:experimental_setup}

\textbf{Data Construction.}
We train \method{} and all direct full-atom baselines on the same
stereochemically filtered CPSea-derived corpus. The complete data sources,
quality-control criteria, and filtering statistics are provided in
Appendix~\ref{app:data}. Raw full-atom generation is evaluated on our
processed Large Non-Redundant (LNR) benchmark~\citep{tsaban2022harnessing},
which contains 93 receptor-peptide complexes.

To quantify generalization with respect to the training receptors, we use
MMseqs2~\citep{steinegger2017mmseqs2} to compare every LNR receptor against
all receptors in the training set. For a test receptor $\Rec$, let
\begin{equation}
    \rho(\Rec)
    \triangleq
    \sideset{}{_{\Rec'\in\mathcal D_{\mathrm{train}}}}\max
    \operatorname{SeqId}_{\mathrm{MMseqs2}}
    \left(
        \operatorname{seq}(\Rec),
        \operatorname{seq}(\Rec')
    \right),
    \label{eq:lnr_sequence_similarity}
\end{equation}
where $\mathcal D_{\mathrm{train}}$ denotes the set of training receptors,
$\operatorname{seq}(\Rec)$ is the amino-acid sequence of $\Rec$, and
$\operatorname{SeqId}_{\mathrm{MMseqs2}}$ is the sequence identity returned
by MMseqs2. We define 37 receptors with $\rho(\Rec)<0.50$ as the
out-of-distribution subset (\textsc{ood}) and the remaining 56 receptors
with $0.50\le\rho(\Rec)\le0.70$ as the in-distribution subset
(\textsc{id}). No LNR receptor has greater than 70\% sequence identity to a
training receptor. For every target, all methods receive the same receptor,
peptide-length condition, and generation budget.

The cofolding comparison uses the 12 published HFGuidedDesign targets,
which cover peptide lengths and receptor interfaces of varying
complexity~\citep{hu2026hfguideddesign}. Each method contributes the same
number of final candidates per target, and all candidates are evaluated
through a common cofolding and scoring pipeline.

\textbf{Baselines and Implementation.}
The direct full-atom baselines comprise
PepFlow~\citep{li2024pepflow},
PepGLAD~\citep{kong2024pepglad},
DiffPepBuilder~\citep{wang2024diffpepbuilder},
APM~\citep{chen2025apm},
Genie~3~\citep{lin2026genie3}, and
APCyc~\citep{zhao2026apcyc}.
All six methods are retrained using the same training corpus and data split
as \method{}, with method-specific architectures and sampling procedures
otherwise retained.
For the predictor-guided and coarse-to-refine comparison, we include
AfCycDesign~\citep{rettie2025afcycdesign} as cofolding predictor,
RFPeptides~\citep{rettie2025rfpeptides}, and
HFGuidedDesign~\citep{hu2026hfguideddesign} as proposal generator baselines.
We follow their released inference configurations and evaluate their final
designs with the same candidate budget and post-generation scoring pipeline
used for \method{}.
On LNR, each baseline generates 32 candidates per target; \method{} generates 32 candidates separately for head-to-tail (HT) and terminal disulfide (SS) cyclization. Generated complexes undergo a common closure-constrained Rosetta FastRelax protocol. On the 12-target cofolding benchmark, each configuration contributes 32 final candidates per target. Generation-time guidance, subsequent AfCycDesign optimization, and cascaded search are identified separately. Implementation details and evaluation protocols are provided in Appendix~\ref{app:implementation_metrics}.

\textbf{Evaluation Metrics.}
As a full-atom generator, we report five metrics: cyclization success (Cyc.), Rosetta total energy (Stability), sequence diversity ($D_{\mathrm{seq}}$), structural diversity ($D_{\mathrm{struct}}$), and Rosetta interface energy ($dG$)~\citep{zhao2026apcyc,alford2017rosetta}. The diversity scores measure the fractions of sequence and structure clusters among valid candidates. Higher success and diversity, and lower energies, are preferred. Cyclization criteria, validity filtering, clustering thresholds, and receptor-wise aggregation are specified in Appendix~\ref{app:full_atom_metrics}. 
As a cofolding proposal generator, we report ipTM, peptide pLDDT, interface and hotspot iPAE, composite confidence scores, and strict-hit rates~\citep{hu2026hfguideddesign}. The main comparison uses the interface-based composite $S$, whereas the search comparison uses its hotspot-based counterpart $S_{\mathrm{hotspot}}$. Rosetta evaluation additionally reports interface energy ($\Delta G_{\mathrm{sep}}$), buried surface area ($\Delta\mathrm{SASA}$), area-normalized energy ($100\Delta G_{\mathrm{sep}}/\Delta\mathrm{SASA}$), interface hydrogen bonds, buried unsatisfied polar groups, shape complementarity (SC), and packing quality (Packstat). Definitions, normalization, hit thresholds, and Mean/Best aggregation are given in Appendix~\ref{app:cofolding_metrics}.

\input{tables/main_lnr}

\subsection{Main Results}
\label{sec:main_results}

\textbf{Direct full-atom generation on LNR.}
As shown in Table~\ref{tab:lnr_main}, \method{} combines high cyclization success with complementary sequence
and conformational diversity under the common relaxation protocol.
The HT configuration achieves the most favorable interface energy and
highest sequence diversity on OOD receptors, supporting its applicability
beyond training-related targets. The SS configuration achieves the
highest structural diversity on both splits, although its sequence
diversity is lower. The All/OOD/ID results in
Appendix~\ref{app:experiment} confirm this topology-dependent balance:
HT favors sequence exploration and interface energetics, while SS
retains broader conformational variation. These results support direct
generation as a source of diverse cyclic-complex hypotheses for
downstream evaluation.

\textbf{Cofolding-based comparison.}
In the predictor-assisted workflow, \method{} achieves the strongest
Mean and Best confidence scores among the compared pipelines.
Its strict-hit rate reaches 36.7\%, compared with 33.3\% for
HFGuidedDesign and 29.2\% for RFPeptides.
The guidance and refinement configurations in
Appendix~\ref{app:experiment} clarify this result: unrefined proposals
remain substantially less confident, and the combination of
generation-time guidance and subsequent optimization performs best.
This supports the complementary role of predictor assistance in
turning the generator's geometry-conditioned sequence proposals into
confident predicted complexes.

\input{tables/main_cofold.tex}

\begin{figure}[ht]
\centering
\includegraphics[width=\textwidth]{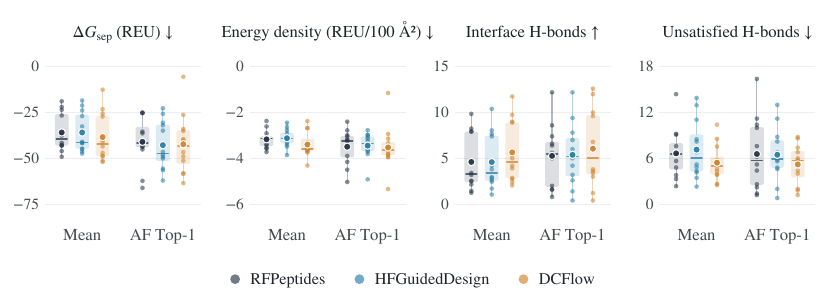}
\caption{\textbf{Target-wise Rosetta interface quality across 12 targets.}
Mean averages four sampled candidates per target; AF Top-1 uses the
AF-selected candidate. Each candidate's scores are averaged over five
models. Rounded bands show interquartile ranges, with median lines and
1.5-IQR whiskers; dots denote targets and larger circles denote
across-target means. Energy density is
$100\Delta G_{\mathrm{sep}}/\Delta\mathrm{SASA}$.
Complete metrics are reported in Table~\ref{tab:app_cofold_rosetta}.}
%\xu{The subfigures should be in the same shape and size.} % Resolved}
\label{fig:cofold_rosetta}
\end{figure}

Figure~\ref{fig:cofold_rosetta} provides a physical assessment of these
outputs. \method{} achieves more favorable energy per buried interface
area, more interface hydrogen bonds, and fewer buried unsatisfied
polar groups in both the Mean and AF Top-1 summaries
(Table~\ref{tab:app_cofold_rosetta}). The improvements therefore extend
beyond confidence scores to complementary measures of interface quality.

\vspace{-5pt}
\subsection{Ablation Study and Analysis of Test-Time Scaling}
\label{sec:ablation}
\vspace{-5pt}

We examine the three generative mechanisms through controlled ablations, with a separately trained model for each setting, retaining the receptor conditions, sampling-step budget, and candidate counts:
\begin{itemize}[leftmargin=1.3em,itemsep=2pt,topsep=2pt]
    \item \textbf{Sequence activation schedule} varies $\tseq\in\{0,0.2,0.5\}$ while retaining $\tsc=0.8$, assessing how the timing of chemical assignment balances early backbone disambiguation against conformational exploration; $\tseq=0$ corresponds to fully synchronous generation from noise.
    \item \textbf{w/o coupling} freezes the backbone scaffold at $B_{\tseq}$ for all $t\geq\tseq$, mimicking conventional decoupled two-stage pipelines (i.e., fixed-scaffold inverse folding) to quantify the benefit of continuous mutual co-adaptation.
    \item \textbf{w/o ProteinMPNN prior} disables the external prior by setting $g=0$ in \eqref{eq:dc_prior_fusion}, isolating the generator's native sequence trunk to evaluate how effectively the structural prior compensates for complex-data scarcity.
\end{itemize}

\input{tables/ablation_lnr.tex}

\textbf{Continued coupling and the structural prior.}
Freezing the backbone causes the largest energetic deterioration:
for OOD HT designs, interface energy worsens from $-28.37$ to
$-4.54$ REU while sequence and structural diversity remain nearly
unchanged. Backbone adaptability therefore contributes to chemical
compatibility beyond maintaining candidate diversity.
Removing the ProteinMPNN prior also weakens overall interface energies
for both topologies, although its diversity effects vary
(Appendix~\ref{app:ablation}). This is consistent with a structural prior
that favors compatible residue choices without uniformly increasing
the number of sequence modes.

\textbf{Delayed activation balances exploration and compatibility.}
Simultaneous backbone--sequence activation weakens HT interface
energetics, particularly on OOD targets, whereas the SS response is
mixed. Earlier activation at $\tseq=0.2$ increases overall sequence
and structural diversity relative to the default
(Appendix~\ref{app:ablation}). The default $\tseq=0.5$ instead gives
the highest overall cyclization success for both topologies and
the best HT energy scores. Thus the schedule controls a trade-off
between exploration and geometric compatibility; the benefit of
delayed coupling cannot be reduced to maximizing diversity alone.

\textbf{Cascaded search enriches the final candidate pool.}
We next evaluate the complete search policy with the generator and
guidance/refinement backends unchanged. With 32 final candidates per
target, search raises the strict-hit rate from 32.6\% to 75.5\%
(Figure~\ref{fig:cofold_scaling}). Mean and Best both improve, with
larger changes in Mean, showing that search enriches the returned
candidate population as well as improving the strongest designs.
The additional results in Appendix~\ref{app:ablation} show more
favorable Rosetta energies and packing, alongside weaker
hydrogen-bond metrics. These findings support the cascade as an integrated
policy for allocating exploration and refinement, with its clearest
benefit in the yield of confident predicted interfaces.

\begin{figure}[!ht]
\centering
\includegraphics[width=\textwidth]{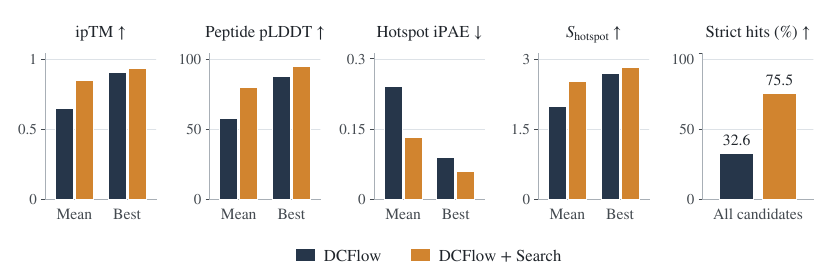}
\caption{\textbf{Effect of inference-time search on cofolding.}
Both \method{} configurations use guidance and AfCycDesign optimization.
Search improves Mean and Best on all four confidence metrics and
increases the strict-hit rate from 32.6\% to 75.5\%.
Best averages target-wise Top-1 candidates selected by
$S_{\mathrm{hotspot}}$; strict hits are measured over all 384 candidates.
Detailed results appear in Table~\ref{tab:app_scaling_main}.}
%\xu{The title of each subfigure should be in one line, for a better layout.} % Resolved}
\label{fig:cofold_scaling}
\end{figure}

\section{Conclusion and Future Work}
\label{conclusion}

We presented \method{}, a target-conditioned cyclic peptide design framework combining delayed coupling, a frozen ProteinMPNN residual prior, and cascaded test-time search. Its staged trajectory introduces sequence and side-chain information after initial backbone formation while preserving continuous co-adaptation, supporting both direct full-atom generation and predictor-assisted design. Experiments show a favorable balance of cyclization success, diversity, and interface energetics, while integrated search enriches the final candidate pool with confident predicted complexes.

\textbf{Limitations and Future Work.}
Despite geometric filtering, CPSea-derived supervision remains largely synthetic and may bias learned conformational and interface preferences. Reliance on AF2-Multimer confidence for guidance and evaluation may also favor predictor-specific solutions without corresponding improvements in experimental performance. Moreover, the current cofolding pipeline does not enforce terminal disulfide closure, limiting the use of \method{}'s SS design capability in this workflow. We plan to pair generated complex hypotheses with wet-lab measurements in an iterative design--experiment loop, using experimental feedback to guide dataset curation and model updates. This aims to improve the reliability of direct full-atom design and reduce dependence on costly cofolding refinement, accelerating the workflow as experimental evidence accumulates.

\clearpage
\input{declarations}

\bibliography{DCFlow}

\clearpage
\appendix
\startcontents[appendix]

\section*{Appendix Contents}
\begingroup
  \hypersetup{linkcolor=black}
  \setlength{\parskip}{4pt}
  \printcontents[appendix]{}{1}{\setcounter{tocdepth}{2}}
\endgroup

\clearpage

\section{Notation and Modeling Assumptions}
\label{app:notation}
\input{tables/app_notation}

\clearpage
\section{Theory of Delayed-Coupling Flow Matching}
\label{app:delayed_coupling_theory}
\input{tables/app_math}

\clearpage
\section{DC-Flow Formulation and Architecture}
\label{app:architecture}
\input{tables/app_architecture.tex}

\clearpage
\section{CPSea Quality Filtering}
\label{app:data}
\input{tables/app_data}

\clearpage
\section{Detailed Implementation and Evaluation Metrics}
\label{app:implementation}
\input{tables/app_implementation.tex}

\clearpage
\section{Additional Experiment Results}
\label{app:experiment}
\input{tables/app_lnr}
\input{tables/app_cofold}

\clearpage
\section{Additional Ablation Results}
\label{app:ablation}
\input{tables/app_ablation_lnr}
\input{tables/app_cofold_scaling}

\clearpage
\section{Challenging TL1A-Trimer Case}
\label{sec:tl1a_case}
\input{tables/app_tl1a.tex}

\clearpage
\section{Visualization}
\label{sec:visualization}

\begin{figure}[!ht]
    \centering
    \includegraphics[width=0.85\linewidth]{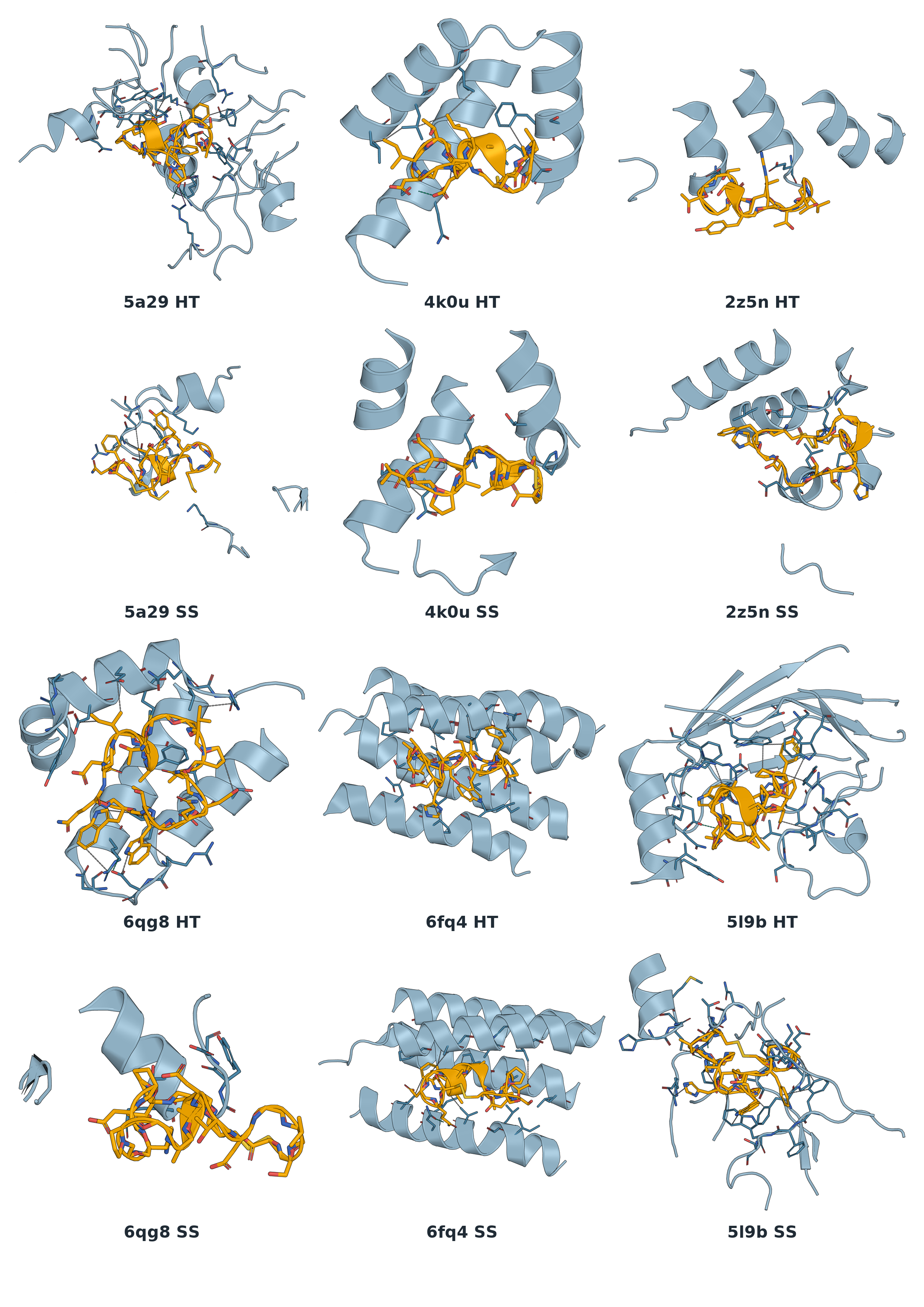}
    \caption{\textbf{Cyclic peptide designs generated directly by \method{} across 6 targets and 2 cyclization modes in LNR.} Local interface views show the target in light blue and the designed peptide in orange. Dashed lines indicate hydrogen-bond/polar contacts (green), salt bridges (magenta), and hydrophobic contacts (gray), inferred from relaxed model geometry.}
    \label{fig:app_visulization1}
\end{figure}

\clearpage
\begin{figure}[!ht]
    \centering
    \includegraphics[width=0.85\linewidth]{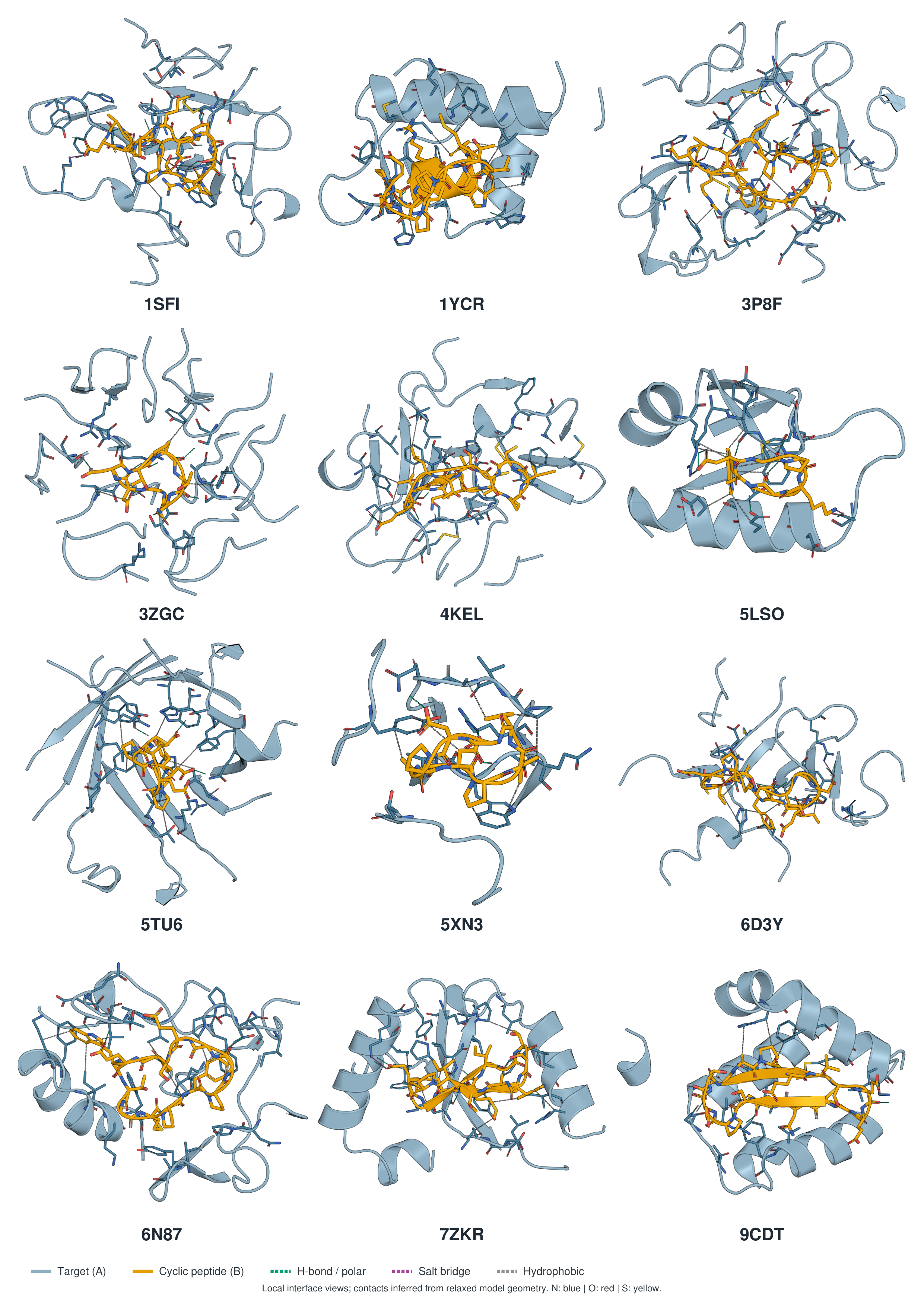}
    \caption{\textbf{Cyclic peptide designs generated by \method{} (Search) across 12 targets in HFGuidedDesign test set.} Local interface views show the target in light blue and the designed peptide in orange. Dashed lines indicate hydrogen-bond/polar contacts (green), salt bridges (magenta), and hydrophobic contacts (gray), inferred from relaxed model geometry.}
    \label{fig:app_visulization2}
\end{figure}

\end{document}

%% file: math_commands.tex
\usepackage{amsmath,amsfonts,bm}

\def\eqref#1{equation~\ref{#1}}
\def\Eqref#1{Equation~\ref{#1}}
\def\1{\bm{1}}

\DeclareMathAlphabet{\mathsfit}{\encodingdefault}{\sfdefault}{m}{sl}
\SetMathAlphabet{\mathsfit}{bold}{\encodingdefault}{\sfdefault}{bx}{n}

%% file: tables/main_lnr.tex
\begin{table*}[ht]
\centering
\small

\caption{\textbf{Direct full-atom cyclic peptide generation on LNR.} Evaluated across in-distribution (ID) and out-of-distribution (OOD) targets. Both \method{} variants attain 100\% cyclization success on ID targets. \method{}(SS) achieves the highest closure fidelity and structural diversity ($D_{\text{struct}}$) across splits, whereas \method{}(HT) provides superior binding energetics (Rosetta dG) and sequence exploration ($D_{\text{seq}}$), highlighting their complementary topological profiles.}
\label{tab:lnr_main}

\setlength{\tabcolsep}{3pt}
\begin{tabular*}{\textwidth}{@{\extracolsep{\fill}} l *{5}{rr} @{}}
\toprule
\multirow{2}{*}{Method}
& \multicolumn{2}{c}{Cyc. $\uparrow$}
& \multicolumn{2}{c}{Stability $\downarrow$}
& \multicolumn{2}{c}{$D_{\mathrm{seq}}\uparrow$}
& \multicolumn{2}{c}{$D_{\mathrm{struct}}\uparrow$}
& \multicolumn{2}{c}{Rosetta $dG\downarrow$} \\
\cmidrule(lr){2-3}
\cmidrule(lr){4-5}
\cmidrule(lr){6-7}
\cmidrule(lr){8-9}
\cmidrule(lr){10-11}
& \textsc{ood} & \textsc{id}
& \textsc{ood} & \textsc{id}
& \textsc{ood} & \textsc{id}
& \textsc{ood} & \textsc{id}
& \textsc{ood} & \textsc{id} \\
\midrule

PepFlow
& 92.9 & 94.6
& \textbf{-961.10} & -590.27
& 0.686 & 0.619
& 0.948 & 0.670
& -24.78 & -17.24 \\

PepGLAD
& \underline{97.5} & 87.4
& -720.20 & -579.45
& \underline{0.895} & \textbf{0.872}
& 0.932 & \underline{0.817}
& -15.59 & -13.50 \\

DiffPepBuilder$^{\dagger}$
& 92.2 & 94.6
& -939.25 & -574.79
& 0.446 & 0.286
& \underline{0.979} & 0.732
& -20.66 & -18.05 \\

APM
& 97.3 & \textbf{100.0}
& -869.51 & -497.36
& 0.173 & 0.120
& 0.938 & 0.754
& \underline{-25.06} & -16.50 \\

Genie~3$^{\dagger}$
& 83.6 & 81.6
& -726.29 & -399.28
& 0.253 & 0.242
& 0.823 & 0.796
& -14.07 & -10.16 \\

APCyc
& 95.4 & 73.9
& -713.21 & \textbf{-616.07}
& 0.867 & 0.831
& 0.849 & 0.705
& -22.79 & \textbf{-19.91} \\

\midrule

\textbf{\method (HT)}
& \underline{97.5} & \textbf{100.0}
& \underline{-940.05} & \underline{-604.28}
& \textbf{0.940} & \underline{0.845}
& 0.934 & 0.459
& \textbf{-28.37} & \underline{-19.14} \\

\textbf{\method (SS)}
& \textbf{97.6} & \textbf{100.0}
& -807.36 & -509.06
& 0.668 & 0.205
& \textbf{0.991} & \textbf{0.943}
& -21.17 & -15.39 \\

\bottomrule
\end{tabular*}%
\par\vspace{0.35em}
\parbox{\textwidth}{\footnotesize
$^{\dagger}$ DiffPepBuilder and Genie~3 require the Rosetta side-chain packer.
}

\vspace{-3pt}
\end{table*}

%% file: tables/main_cofold.tex
\begin{table}[!ht]
\centering
\caption{\textbf{Cofolding performance across 12 targets.}
All methods use AFCyc optimization; HFGuidedDesign and \method{} also use
generation-time guidance. Mean summarizes all 384 candidates; Best averages
the target-wise Top-1 candidates selected by composite score.}
\label{tab:cofold_main}
\begingroup
\small
\setlength{\tabcolsep}{3pt}
\renewcommand{\arraystretch}{1.10}
\begin{tabular*}{\textwidth}{@{\extracolsep{\fill}}llrrrrrr@{}}
\toprule
Method & Summary
& ipTM $\uparrow$
& \shortstack{Peptide\\pLDDT $\uparrow$}
& \shortstack{Interface\\iPAE $\downarrow$}
& \shortstack{Hotspot\\iPAE $\downarrow$}
& \shortstack{Composite\\$S\uparrow$}
& \shortstack{Strict hits\\(\%) $\uparrow$} \\
\midrule
\multirow{2}{*}{RFPeptides}
& Mean & 0.666 & 61.21 & \underline{0.334} & \underline{0.234} & 1.944
& \multirow{2}{*}{29.2} \\
& Best & 0.877 & 84.90 & 0.170 & 0.115 & 2.556 & \\
\addlinespace[3pt]
\multirow{2}{*}{HFGuidedDesign}
& Mean & \underline{0.674} & \underline{61.60} & \underline{0.334}
& \underline{0.234} & \underline{1.957} & \multirow{2}{*}{\underline{33.3}} \\
& Best & \underline{0.889} & \underline{86.57} & \underline{0.161}
& \underline{0.104} & \underline{2.593} & \\
\addlinespace[3pt]
\multirow{2}{*}{\textbf{\method}}
& Mean & \textbf{0.683} & \textbf{62.50} & \textbf{0.326}
& \textbf{0.228} & \textbf{1.982} & \multirow{2}{*}{\textbf{36.7}} \\
& Best & \textbf{0.903} & \textbf{87.26} & \textbf{0.146}
& \textbf{0.091} & \textbf{2.629} & \\
\bottomrule
\end{tabular*}
\par\vspace{3pt}
\parbox{\textwidth}{\footnotesize\raggedright
Note: Both iPAE metrics are normalized. Strict-hit rates
use all 384 candidates (12 targets $\times$ 32 samples); full settings and evaluation details are in
Table~\ref{tab:app_cofold_main}.}
\endgroup
\vspace{-5pt}
\end{table}

%% file: tables/ablation_lnr.tex
\begin{table*}[!ht]
\centering
\small
\caption{\textbf{Ablations on LNR.}
The default $\tseq=0.5$ achieves the highest overall cyclization success within each topology and the best HT Stability and Rosetta dG scores.
Freezing backbone motion substantially worsens Stability for both topologies, while removing the MPNN prior weakens interface scores and removing the delay produces topology-dependent effects.
Earlier activation at $\tseq=0.2$ improves overall sequence and structure diversity, indicating a trade-off between diversity and interface scores. OOD/ID correspond to 37/56 receptors.}
\label{tab:ablation}

\setlength{\tabcolsep}{3pt}
\begin{tabular*}{\textwidth}{@{\extracolsep{\fill}} l *{5}{rr} @{}}
\toprule

\multirow{2}{*}{Inference setting}
& \multicolumn{2}{c}{Cyc. $\uparrow$}
& \multicolumn{2}{c}{Stability $\downarrow$}
& \multicolumn{2}{c}{$D_{\mathrm{seq}}\uparrow$}
& \multicolumn{2}{c}{$D_{\mathrm{struct}}\uparrow$}
& \multicolumn{2}{c}{Rosetta $dG\downarrow$} \\
\cmidrule(lr){2-3}
\cmidrule(lr){4-5}
\cmidrule(lr){6-7}
\cmidrule(lr){8-9}
\cmidrule(lr){10-11}

& \textsc{ood} & \textsc{id}
& \textsc{ood} & \textsc{id}
& \textsc{ood} & \textsc{id}
& \textsc{ood} & \textsc{id}
& \textsc{ood} & \textsc{id} \\
\midrule

w/o coupling (HT)
& 97.1 & 99.5
& 112.60 & -255.79
& \underline{0.937} & 0.824
& 0.933 & 0.439
& -4.54 & -12.92 \\

w/o coupling (SS)
& 97.1 & 99.6
& -167.87 & -249.78
& 0.639 & 0.213
& 0.989 & \underline{0.936}
& -7.97 & -12.95 \\

\midrule

w/o MPNN prior (HT)
& 97.3 & 99.8
& \underline{-914.12} & -583.86
& 0.933 & 0.836
& 0.948 & 0.427
& \underline{-27.56} & \underline{-17.36} \\

w/o MPNN prior (SS)
& 97.1 & 99.7
& -808.71 & -503.13
& 0.639 & 0.271
& 0.948 & \underline{0.936}
& -18.88 & -16.37 \\

\midrule

w/o delay, $\tseq=0$ (HT)
& 97.1 & 99.3
& -905.72 & -591.57
& 0.934 & 0.829
& 0.940 & 0.453
& -20.40 & -17.17 \\

w/o delay, $\tseq=0$ (SS)
& 97.2 & 99.7
& -838.82 & -537.96
& 0.668 & 0.219
& 0.982 & 0.924
& -19.61 & -16.92 \\

\midrule

$\tseq=0.2$ (HT)
& \textbf{97.6} & 99.8
& -913.84 & \underline{-592.83}
& \textbf{0.940} & \textbf{0.903}
& 0.934 & 0.590
& -23.38 & -16.30 \\

$\tseq=0.2$ (SS)
& \underline{97.5} & \underline{99.9}
& -812.85 & -513.08
& 0.672 & 0.205
& \textbf{1.000} & \textbf{0.943}
& -19.59 & -16.36 \\

\midrule

$\tseq=0.5$ (HT)
& \underline{97.5} & \textbf{100.0}
& \textbf{-940.05} & \textbf{-604.28}
& \textbf{0.940} & \underline{0.845}
& 0.934 & 0.459
& \textbf{-28.37} & \textbf{-19.14} \\

$\tseq=0.5$ (SS)
& \textbf{97.6} & \textbf{100.0}
& -807.36 & -509.06
& 0.668 & 0.205
& \underline{0.991} & \textbf{0.943}
& -21.17 & -15.39 \\

\bottomrule
\end{tabular*}%
\end{table*}

%% file: declarations.tex
\subsection*{AI Use Statement}
In this work, generative AI tools were used to assist with language polishing, phrasing improvements, and the \LaTeX{} formatting and typesetting of mathematical notations and proofs. Additionally, generative AI was utilized to assist with code organization and formatting; all code changes were manually inspected, executed, and verified by the authors. All scientific ideation, methodological designs, theoretical formulations, and mathematical proofs (Appendix~\ref{app:dc_math}) were derived and completed by the authors. Model architectures, experimental implementations, and data analyses were likewise executed exclusively by the authors. The authors manually inspected and verified all AI-formatted text revisions against their original ones, and take full responsibility for the content, technical claims, code, and findings presented in this paper.

\subsection*{Ethics Statement}
This work introduces \method{}, a computational generative framework for target-conditioned cyclic peptide binder design aimed at facilitating therapeutic peptide discovery against challenging protein-protein interaction (PPI) interfaces. The research was conducted entirely \textit{in silico} and relies exclusively on publicly accessible, anonymized structural repositories, including the Protein Data Bank (PDB), the CPSea database, and the open-source LNR benchmark. No human participants, personally identifiable health information, or vertebrate animals were involved in this study; therefore, no Institutional Review Board (IRB) or animal care approval was required. The methodology is designed as a foundational tool for computational biophysics and molecular design and is not intended for, nor tested on, the generation of biological toxins, dual-use hazardous agents, or pathogen engineering. The authors foresee no direct negative societal, discriminatory, or biosecurity implications from this work.

\subsection*{Reproducibility Statement}
To ensure the reproducibility of all findings, models, and algorithms in this paper, we have provided:
\begin{itemize}[leftmargin=*,itemsep=1pt,topsep=1pt]
    \item \textbf{Code and Checkpoints:} An anonymized codebase implementing \method{}—including delayed-coupling flow matching, the ProteinMPNN prior integration, and the cascaded search pipeline—is provided in the supplementary material.
    \item \textbf{Architectures and Hyperparameters:} Detailed network structures (geometric trunk, IPA modules), loss weights, and training schedules are fully specified in Appendix~\ref{app:architecture} and Appendix~\ref{app:implementation}.
    \item \textbf{Data Curation and Benchmarks:} Complete stereochemical filtering scripts for the CPSea corpus (amide/disulfide geometry audits) and the MMseqs2-based sequence-identity splits for the LNR benchmark are documented in Appendix~\ref{app:data}.
    \item \textbf{Evaluation and Seeds:} All sampling trajectories use fixed 200-step schedules. Evaluation protocols for AlphaFold-Multimer-v3 and Rosetta FastRelax are detailed in Appendix~\ref{app:implementation}.
    \item \textbf{Theoretical Proofs:} Rigorous proofs and regularity assumptions for all theoretical propositions (delayed activation, coupling benefit, and search coverage) are provided in Appendix~\ref{app:dc_math}.
\end{itemize}

%% file: tables/app_notation.tex
Table~\ref{tab:notation} summarizes the principal symbols used in the main text and theoretical analysis. Additional architectural variables are defined locally in Appendix~\ref{app:architecture}. Time-indexed variables denote intermediate generative states, with $t=1$ indicating the endpoint. The receptor remains fixed throughout each trajectory, while peptide generation is conditioned on the specified length and cyclization mode.

\begin{table}[htbp]
\centering
\small
\caption{Principal notation for generation, prior integration, and cascaded search.}
\label{tab:notation}
\begin{tabular}{@{}p{0.30\textwidth}p{0.66\textwidth}@{}}
\toprule
Symbol & Meaning\\
\midrule
$R$ & Receptor structure and its representation\\
$L,\Gamma,\xi$ & Peptide length, cyclization specification, and joint conditions $\xi=(L,\Gamma)$\\
$\mathcal A$ & Amino-acid alphabet\\
$t\in[0,1]$ & Global generation time, distinct from the local modality clocks\\
$B_t$ & Peptide backbone frames and carbonyl-orientation state at time $t$\\
$S_t,S_1$ & Continuous sequence carrier during generation and decoded endpoint sequence $S_1\in\mathcal A^L$\\
$C_t$ & Side-chain torsion state; full heavy-atom coordinates are reconstructed from $Y_1$\\
$Y_t$ & Joint peptide state $(B_t,S_t,C_t)$\\
$p_0,p_\theta$ & Initial sampling distribution and learned conditional generative distribution\\
$\tbb,\tseq,\tsc$ & Backbone, sequence, and side-chain activation times; defaults $0$, $0.5$, and $0.8$\\
$m,s_m(t)$ & Modality index $m\in\{\mathrm{b},\mathrm{s},\mathrm{c}\}$ and its local progress clock\\
$\theta,\phi$ & Trainable generator parameters and frozen ProteinMPNN parameters\\
$F_\phi$ & Frozen ProteinMPNN structural sequence prior\\
$\ell^B_{i,t},\ell^P_{i,t},\widetilde\ell^P_{i,t}$ & Native sequence logits, prior log-probabilities, and centered prior residuals at residue $i$\\
$\ell_{i,t},\pi_{i,t}$ & Fused sequence logits and the resulting categorical distribution\\
$\pi^B,q^P$ & Native and prior distributions over admissible amino acids; position and time indices suppressed\\
$g$ & Trainable scalar gate controlling the prior contribution; initialized to $0.1$\\
$\ell_B,\ell_S,\ell_C,\ell_{\mathrm{cyc}}$ & Backbone, sequence, side-chain, and cyclic-closure loss terms\\
$\lambda_B,\lambda_S,\lambda_C$ & Nonnegative weights of the modality-specific losses\\
$K,K_0$ & Number of initial backbone hypotheses and retained roots, with $K_0<K$\\
$M,N$ & Continuations per retained root and total proposals $N=K_0M$\\
$N_0$ & Number of candidates retained after refinement warm-up, with $N_0<N$\\
$Q$ & Interface quality score for probabilistic early screening, defined in \eqref{eq:dc_search_quality}\\
$Z=(R,\xi)$ & Receptor and design conditions grouped for the theoretical analysis\\
$\overline{B}_t$ & Reference backbone observation used in the information-theoretic analysis\\
$H_t$ & Complete sampler checkpoint, including active states and memory needed for continuation\\
\bottomrule
\end{tabular}
\end{table}

The temporal factorization in \eqref{eq:dc_factorization} specifies a computational sampling schedule, not a physical backbone-first binding pathway or independence among modalities. The information decomposition in \eqref{eq:information_conservation} is derived under the reference-path assumptions in Appendix~\ref{app:delayed_proof1}. More generally, the theoretical results state conditions under which delayed activation, continued coupling, prior transfer, and structural diversification can be beneficial; their practical effects are evaluated empirically. In cascaded refinement, $Q$ is used for early candidate selection, whereas early stopping uses the composite score reported by AfCycDesign.

%% file: tables/app_math.tex
\providecommand{\Rec}{R}
\providecommand{\B}{B}
\providecommand{\Sseq}{S}
\ifdefined\dcappproposition\else
  \newtheorem{dcappproposition}{Proposition}[section]
  \newtheorem{dcappcorollary}[dcappproposition]{Corollary}
\fi

\label{app:dc_math}
This appendix analyzes the four mechanisms used in
Sections~\ref{sec:delayed_coupling}--\ref{sec:cascaded_search}:
delayed activation, continued coupling, residual-prior transfer, and
cascaded inference-time allocation. Each result specifies its probability
law, assumptions, and implication for the implemented method. We write
$Z=(\Rec,\xi)$ for the receptor and design conditions and use natural
logarithms. The finite sequence alphabet ensures that the sequence entropies
below are finite.

The first two parts concern a joint endpoint distribution $p^\star$ and a
specified interpolation or training coupling. The third concerns prediction
risk on a fixed distribution of intermediate contexts. The fourth concerns
the learned sampler and its conditional continuation distributions.
Transferring a statement between these settings requires the stated
additional assumptions.

\paragraph{Staged generation as composition of conditional kernels.}
\Eqref{eq:dc_factorization} gives a compact stage-wise decomposition in terms of the modality states exposed at the activation boundaries. For the implemented sampler, continuation can additionally depend on persistent source variables and other sampling memory; we therefore use the complete checkpoint \(H_t\) below to state the corresponding transition law exactly. Let $H_t$ be the complete sampler checkpoint at time $t$. It contains the
visible active state and any source variables or other memory needed to
continue sampling. This distinction matters when updates depend on a fixed
source such as $x_0$. Write $\mu_0(\mathrm dh_0\mid Z)$ for the initialized
checkpoint law and $P^\theta_{a:b}$ for the conditional transition kernel
between times $a$ and $b$, including intermediate categorical draws.
If $D(h_1)=Y_1$ denotes terminal decoding, then for any measurable output set
$E$,
\begin{equation}
\begin{aligned}
p_\theta(Y_1\in E\mid Z)
={}&\int \mu_0(\mathrm dh_0\mid Z)\,
P^\theta_{0:\tseq}(\mathrm dh_S\mid h_0,Z)\\
&\quad {}\times
P^\theta_{\tseq:\tsc}(\mathrm dh_C\mid h_S,Z)\,
P^\theta_{\tsc:1}(\mathrm dh_1\mid h_C,Z)\,
\mathbf 1\{D(h_1)\in E\}.
\end{aligned}
\label{eq:app_staged_kernel_composition}
\end{equation}
This is the marginal law induced by the three stages. A product of the
three kernels describes their joint path law before integrating out
intermediate states. Replacing complete checkpoints by $\B_{\tseq}$ or
$(\B_{\tsc},\Sseq_{\tsc})$ alone additionally requires that those variables
are sufficient for the future sampler. Temporal staging does not assume
that the final backbone, sequence, and side chains are independent.

% ============================================================
\subsection{Delayed activation: structural information and sequence risk}
\label{app:delayed_proof1}

\subsubsection{Additional information beyond receptor conditioning}

Let
\begin{equation}
J_B=I(\Sseq_1;\B_1\mid Z),\qquad
D(t)=I(\Sseq_1;\B_1\mid\overline{\B}_t,Z),\qquad
G(t)=I(\Sseq_1;\overline{\B}_t\mid Z).
\label{eq:app_information_quantities}
\end{equation}
The bar distinguishes a reference backbone observation from the complete
learned sampler state. In particular, the analysis does not assume that
the receptor is uninformative. The residual quantity satisfies
\begin{equation}
J_B=\mathbb E\!\left[
D_{\mathrm{KL}}\!\left(
p^\star(\Sseq_1\mid\B_1,Z)
\,\Vert\,p^\star(\Sseq_1\mid Z)
\right)\right].
\label{eq:jb_kl}
\end{equation}
Thus $J_B>0$ precisely when backbone conditioning changes sequence
preferences on a set of positive probability, beyond the information
already contained in $Z$.

\paragraph{Assumption D (ordered reference observations).}
For every $t$, assume
$\Sseq_1\perp\overline{\B}_t\mid(\B_1,Z)$.
For $t_1<t_2$, assume there is a stochastic kernel that maps the
observation at $t_2$ to one with the conditional law of the observation at
$t_1$, without access to $\B_1$ or $\Sseq_1$.
Equivalently, the two observation laws admit a coupling with conditional
Markov chain
\[
\Sseq_1\longrightarrow\B_1\longrightarrow
\overline{\B}_{t_2}\longrightarrow\overline{\B}_{t_1}
\quad\text{given }Z.
\]
Assume the observation at zero is independent of the endpoints given
$Z$, and $\overline{\B}_1=\B_1$.

\begin{dcappproposition}[Information ordering and optimal sequence loss]
\label{prop:1}
Under Assumption D, $G(t)$ is non-decreasing and $D(t)$ is non-increasing,
with
\begin{equation}
J_B=G(t)+D(t),\qquad
D(0)=J_B,\quad D(1)=0.
\label{eq:information_conservation}
\end{equation}
Define the optimal sequence log-loss using the intermediate observation by
\begin{equation}
\mathcal L^\star_{\mathrm{seq}}(t)
=\inf_q\mathbb E[-\log q(\Sseq_1\mid\overline{\B}_t,Z)].
\end{equation}
Then
\begin{equation}
\begin{aligned}
\mathcal L^\star_{\mathrm{seq}}(t)
&=H(\Sseq_1\mid\overline{\B}_t,Z),\\
\mathcal L^\star_{\mathrm{seq}}(t)
-\mathcal L^\star_{\mathrm{seq}}(1)
&=D(t).
\end{aligned}
\label{eq:app_delay_risk_gap}
\end{equation}
For $t_1<t_2$, the loss reduction equals $G(t_2)-G(t_1)$ and is strict
exactly when this increment is positive.
\end{dcappproposition}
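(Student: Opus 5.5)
The proof is a chain-rule calculation combined with the data-processing inequality supplied by Assumption D. First I will establish the decomposition $J_B=G(t)+D(t)$. Apply the chain rule for conditional mutual information to the pair $(\B_1,\overline{\B}_t)$ in two orders:
\[
I(\Sseq_1;\B_1,\overline{\B}_t\mid Z)=I(\Sseq_1;\B_1\mid Z)+I(\Sseq_1;\overline{\B}_t\mid \B_1,Z)=G(t)+D(t).
\]
The first assumption of Assumption D, $\Sseq_1\perp\overline{\B}_t\mid(\B_1,Z)$, kills the term $I(\Sseq_1;\overline{\B}_t\mid\B_1,Z)$. This leaves $J_B=G(t)+D(t)$.

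The boundary values come next. At $t=0$, independence of $\overline{\B}_0$ from the endpoints given $Z$ gives $G(0)=0$, hence $D(0)=J_B$. At $t=1$, $\overline{\B}_1=\B_1$ makes $D(1)=I(\Sseq_1;\B_1\mid\B_1,Z)=0$. Finiteness of $\mathcal A^L$ makes every entropy finite, so these subtractions are legitimate.

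For monotonicity, take $t_1<t_2$ and use the coupling from Assumption D. The coupling realizes the Markov chain $\Sseq_1\to\B_1\to\overline{\B}_{t_2}\to\overline{\B}_{t_1}$ given $Z$. Because the kernel sees neither $\B_1$ nor $\Sseq_1$, the pairs $(\Sseq_1,\overline{\B}_{t_i})$ keep their original joint laws given $Z$. This invariance is the step I expect to need the most care: $G$ and $D$ depend only on those laws, so their values computed under the coupling equal the original ones. Conditional data processing then gives $G(t_1)\le G(t_2)$. Since the sum $J_B$ does not depend on $t$, it follows that $D$ is non-increasing.

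For the loss statement, write any admissible predictor as $q(\cdot\mid\overline{\B}_t,Z)$. Gibbs' inequality gives
\[
\mathbb E[-\log q]=H(\Sseq_1\mid\overline{\B}_t,Z)+\mathbb E\,D_{\mathrm{KL}}\bigl(p^\star(\Sseq_1\mid\overline{\B}_t,Z)\,\Vert\,q\bigr).
\]
The infimum is therefore attained at the true conditional and equals $H(\Sseq_1\mid\overline{\B}_t,Z)$.

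The gap identity follows from this. Using $\overline{\B}_1=\B_1$,
\[
\mathcal L^\star_{\mathrm{seq}}(t)-\mathcal L^\star_{\mathrm{seq}}(1)=H(\Sseq_1\mid\overline{\B}_t,Z)-H(\Sseq_1\mid\B_1,Z).
\]
Write $H(\Sseq_1\mid\B_1,Z)=H(\Sseq_1\mid\B_1,\overline{\B}_t,Z)$, which holds by the conditional independence in Assumption D. The difference is then exactly $D(t)$ by the definition of conditional mutual information.

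Finally, expand $H(\Sseq_1\mid\overline{\B}_t,Z)=H(\Sseq_1\mid Z)-G(t)$. This gives
\[
\mathcal L^\star_{\mathrm{seq}}(t_1)-\mathcal L^\star_{\mathrm{seq}}(t_2)=G(t_2)-G(t_1)=D(t_1)-D(t_2).
\]
So the loss reduction is strict exactly when this increment is positive, as claimed.
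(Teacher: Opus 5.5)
Your proposal is correct and follows essentially the same route as the paper. Both arguments obtain $J_B=G(t)+D(t)$ from the chain rule together with the conditional independence in Assumption D, get monotonicity from conditional data processing under the degradation coupling, and derive the risk identities from the entropy-plus-KL decomposition of log-loss. The only cosmetic differences are that you obtain the gap $D(t)$ directly via $H(\Sseq_1\mid\B_1,Z)=H(\Sseq_1\mid\B_1,\overline{\B}_t,Z)$ rather than as $\bigl(H(\Sseq_1\mid Z)-G(t)\bigr)-\bigl(H(\Sseq_1\mid Z)-J_B\bigr)$, and that you spell out why the coupling preserves the joint laws of $(\Sseq_1,\overline{\B}_{t_i})$ given $Z$, which the paper leaves implicit.
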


\begin{proof}
Conditional independence and the chain rule give
\begin{equation}
\begin{aligned}
I(\Sseq_1;\B_1,\overline{\B}_t\mid Z)
&=I(\Sseq_1;\B_1\mid Z)\\
&=I(\Sseq_1;\overline{\B}_t\mid Z)
 +I(\Sseq_1;\B_1\mid\overline{\B}_t,Z).
\end{aligned}
\end{equation}
This proves the decomposition. Conditional data processing under the
degradation coupling gives $G(t_1)\leq G(t_2)$. The endpoint identities
follow from the endpoint assumptions.

For any conditional predictor $q$, its log-loss decomposes as
\begin{equation}
\begin{aligned}
\mathbb E[-\log q(\Sseq_1\mid\overline{\B}_t,Z)]
={}&H(\Sseq_1\mid\overline{\B}_t,Z)\\
&+\mathbb E D_{\mathrm{KL}}\!\left(
p^\star(\Sseq_1\mid\overline{\B}_t,Z)
\,\Vert\,q(\Sseq_1\mid\overline{\B}_t,Z)\right).
\end{aligned}
\end{equation}
Minimizing the nonnegative KL term proves the first risk identity.
Furthermore,
$H(\Sseq_1\mid\overline{\B}_t,Z)
=H(\Sseq_1\mid Z)-G(t)$, and the analogous endpoint expression subtracts
$J_B$. Their difference is $D(t)$. The strictness statement follows by
subtracting risks at the two times.
\end{proof}

\subsubsection{A reference path satisfying the assumption}
\label{app:delayed_proof2}

Consider a Euclidean backbone representation with
\begin{equation}
\overline{\B}_t=(1-t)\B_0+t\B_1,\qquad
\B_0\sim\mathcal N(0,I),\qquad
\B_0\perp(\B_1,\Sseq_1,Z).
\label{eq:app_linear_path}
\end{equation}
For $t>0$, the invertible rescaling
$V_t=\overline{\B}_t/t=\B_1+\sigma_t\B_0$ has
$\sigma_t=(1-t)/t$. For $0<t_1<t_2$, generate
\begin{equation}
V_2=\B_1+\sigma_{t_2}\varepsilon_2,\qquad
V_1=V_2+\sqrt{\sigma_{t_1}^2-\sigma_{t_2}^2}\,\varepsilon',
\end{equation}
where $\varepsilon_2,\varepsilon'$ are independent standard Gaussian
variables independent of $(\B_1,\Sseq_1,Z)$.
The conditional marginal laws of $V_1,V_2$ equal those of
$V_{t_1},V_{t_2}$, and $V_1$ is degraded from $V_2$.
Consequently, this path satisfies Assumption D. Under this constructed
coupling,
\begin{equation}
D(t_1)-D(t_2)
=I(\Sseq_1;V_2\mid V_1,Z)\geq0.
\label{eq:exact_ambiguity_reduction}
\end{equation}
The actual shared-source interpolation need not be Markov across time;
the argument compares its observation channels through an equivalent
coupling.

\paragraph{Implication for delayed activation.}
The quantity $D(t)$ is an exact excess Bayes log-loss from using an
intermediate backbone instead of the endpoint backbone. If $G(t)$ grows
before $\tseq$, later sequence assignment has a lower information-limited
risk. For finite predictors, write their risk as
$\widehat{\mathcal L}(t)=\mathcal L^\star_{\mathrm{seq}}(t)+e(t)$,
where $e(t)\geq0$ is prediction excess risk. Then
\begin{equation}
\widehat{\mathcal L}(t_1)-\widehat{\mathcal L}(t_2)
=G(t_2)-G(t_1)+e(t_1)-e(t_2).
\label{eq:app_delay_finite_predictor}
\end{equation}
Thus the gain persists whenever the information increment exceeds any
increase in excess risk. For a factorized residue head, the same argument
applies to each residue and the losses sum; it need not model the full
sequence joint distribution.

\paragraph{Scope.}
The Gaussian construction applies to Euclidean coordinates, or whitened
coordinates on the relevant translation subspace. It does not prove
information ordering for rotations, the complete nonlinear learned
trajectory, or a guided trajectory that already uses sequence feedback.
Those cases require their own version of Assumption D. Nor does a fixed
$\tseq$ guarantee a confident or physically valid backbone. If $J_B=0$,
this information-based benefit vanishes. An unrestricted synchronous
network can in principle learn to ignore early uninformative variables;
the result supports explicit timing as an inductive bias, not universal
dominance over all synchronous architectures.

% ============================================================
\subsection{Continued coupling: sequence-conditioned backbone prediction}
\label{app:coupling_proof}

Fix an active time $t\geq\tseq$ and a joint law of
$(U_t^B,\B_t,\Sseq_t,Z)$.
Here $U_t^B$ is the backbone regression target in a common Euclidean or
tangent-space representation. Assume
$\mathbb E\|U_t^B\|^2<\infty$. Let
\begin{equation}
\begin{aligned}
m_B&=\mathbb E[U_t^B\mid\B_t,Z],&
\mathcal R_B&=\inf_f\mathbb E\|U_t^B-f(\B_t,Z)\|^2,\\
m_{BS}&=\mathbb E[U_t^B\mid\B_t,\Sseq_t,Z],&
\mathcal R_{BS}&=\inf_f\mathbb E\|U_t^B-f(\B_t,\Sseq_t,Z)\|^2.
\end{aligned}
\end{equation}
The infima range over square-integrable measurable predictors under the
same target law.

\begin{dcappproposition}[Available benefit from continued coupling]
\label{prop:2}
The optimal risks satisfy
\begin{equation}
\mathcal R_B-\mathcal R_{BS}
=\mathbb E\|m_{BS}-m_B\|^2
\triangleq\Delta_t^{B\leftarrow S}\geq0.
\label{eq:conditional_pythagorean}
\end{equation}
The decrease is strict if and only if
$\mathbb P(m_{BS}\neq m_B)>0$.
\end{dcappproposition}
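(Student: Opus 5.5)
The plan is the standard $L^2$ projection (Pythagorean) argument for nested conditional expectations. Since $\mathbb E\|U_t^B\|^2<\infty$, both conditional means $m_B$ and $m_{BS}$ exist and are square integrable. They are orthogonal projections of $U_t^B$ onto the closed subspaces
\[
L^2(\sigma(\B_t,Z))\subseteq L^2(\sigma(\B_t,\Sseq_t,Z)).
\]
I would first record that each infimum is attained by the corresponding conditional mean, so that $\mathcal R_B=\mathbb E\|U_t^B-m_B\|^2$ and $\mathcal R_{BS}=\mathbb E\|U_t^B-m_{BS}\|^2$. To see this, take any square-integrable $f(\B_t,Z)$ and expand
\[
\mathbb E\|U_t^B-f\|^2=\mathbb E\|U_t^B-m_B\|^2+\mathbb E\|m_B-f\|^2 .
\]
The cross term vanishes by the tower property: condition on $(\B_t,Z)$, and note that $m_B-f$ is measurable with respect to that $\sigma$-algebra. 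The same argument applies to $m_{BS}$ with $\sigma(\B_t,\Sseq_t,Z)$.

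Next, I would apply the identical decomposition with the coarser predictor plugged into the finer problem. Since $m_B$ is a function of $(\B_t,Z)$, it is also a measurable function of $(\B_t,\Sseq_t,Z)$. Hence
\[
\mathbb E\|U_t^B-m_B\|^2=\mathbb E\|U_t^B-m_{BS}\|^2+\mathbb E\|m_{BS}-m_B\|^2 .
\]
The cross term $\mathbb E\langle U_t^B-m_{BS},\,m_{BS}-m_B\rangle$ vanishes after conditioning on $(\B_t,\Sseq_t,Z)$. Substituting the two optimal-risk identities gives $\mathcal R_B-\mathcal R_{BS}=\mathbb E\|m_{BS}-m_B\|^2\ge 0$, which is \eqref{eq:conditional_pythagorean}.

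For strictness, note that $\mathbb E\|m_{BS}-m_B\|^2$ is the expectation of a nonnegative random variable. It is therefore zero if and only if $m_{BS}=m_B$ almost surely, so the gap is positive exactly when $\mathbb P(m_{BS}\neq m_B)>0$.

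The only step needing care is justifying the vanishing cross terms. This requires integrability of the inner products, which follows from Cauchy--Schwarz and the finite second moments of $U_t^B$, $m_B$, $m_{BS}$, and $f$; the latter two have finite second moments by conditional Jensen. It also requires that conditional expectations are applied componentwise in the Euclidean or tangent-space representation, which the fixed common representation guarantees. Beyond that the argument is routine.
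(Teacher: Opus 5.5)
Your proposal is correct and follows essentially the same route as the paper's proof. Both identify the conditional means as the optimal predictors, split $U_t^B-m_B=(U_t^B-m_{BS})+(m_{BS}-m_B)$, remove the cross term by conditioning on $\sigma(B_t,S_t,Z)$, and read off strictness from the expectation of a nonnegative squared norm; you also spell out why the infima are attained and why the cross terms are integrable, which the paper leaves implicit.
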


\begin{proof}
Conditional expectations attain the two optimal squared risks.
Decompose
$U_t^B-m_B=(U_t^B-m_{BS})+(m_{BS}-m_B)$.
The second summand is measurable with respect to
$\sigma(\B_t,\Sseq_t,Z)$, whereas the first has zero conditional mean.
Their expected inner product is zero. Expanding the squared norm yields
\eqref{eq:conditional_pythagorean}; a squared norm has strictly
positive expectation exactly under the stated condition.
\end{proof}

\paragraph{When is the condition substantive?}
For the linear reference path and $t<1$, the velocity target is
$U_t^B=(\B_1-\B_t)/(1-t)$, giving
\begin{equation}
\Delta_t^{B\leftarrow S}
=\frac{1}{(1-t)^2}
\mathbb E\left\|
\mathbb E[\B_1\mid\B_t,\Sseq_t,Z]
-\mathbb E[\B_1\mid\B_t,Z]
\right\|^2.
\label{eq:backbone_refinement_gain}
\end{equation}
Thus strict gain requires sequence information to change the preferred
endpoint geometry, not merely to correlate with it.
For example, if
\[
U_t^B=m_B+A_t\zeta_t+\varepsilon_t,
\qquad
\mathbb E[\zeta_t\mid\B_t,Z]=0,\qquad
\mathbb E[\varepsilon_t\mid\B_t,\Sseq_t,Z]=0,
\]
where $A_t$ is measurable given $(\B_t,Z)$ and
$\zeta_t=\zeta(\Sseq_t,\B_t,Z)$, then
$\Delta_t^{B\leftarrow S}=\mathbb E\|A_t\zeta_t\|^2$.
Nonzero sequence-dependent geometric corrections therefore give a
strict benefit. Positive terminal mutual information alone does not
establish this conditional-mean condition.

\paragraph{Finite models and late side-chain information.}
If fitted predictors have excess risks $e_B,e_{BS}\geq0$ above their
respective Bayes risks, then
\begin{equation}
\widehat{\mathcal R}_B-\widehat{\mathcal R}_{BS}
=\Delta_t^{B\leftarrow S}+e_B-e_{BS}.
\end{equation}
A fitted coupled predictor is better when its additional approximation
or estimation error is smaller than the available conditioning gain.
At $t\geq\tsc$, the same orthogonal decomposition applies to the nested
information sets $(\B_t,\Sseq_t,Z)$ and $(\B_t,\Sseq_t,C_t,Z)$.

\paragraph{Implication for the architecture.}
Freezing the backbone after $\tseq$ prevents it from realizing further
sequence-dependent corrections. Continued evolution retains that
possibility whenever $\Delta_t^{B\leftarrow S}>0$.
The proposition compares regression information under a fixed law; it
does not by itself prove lower terminal docking error or a lower
force-field energy for the trained sampler. Together with the previous
part, it supports delaying access to immature modalities while preserving
their subsequent influence.

% ============================================================
\subsection{ProteinMPNN prior: transfer mechanism and prediction benefit}
\label{app:mpnn_transfer_proof}
\label{app:prior_tree_theory}

\subsubsection{Residual fusion and admissible support}

Fix an active prediction context, omitting residue and time subscripts.
The prior below is supplied by the frozen ProteinMPNN model
used in the Method~\citep{dauparas2022proteinmpnn}.
Let $\mathcal A_i\subseteq\mathcal A$ be the allowed amino acids at residue
$i$. On this support, let $p(a)=\pi^B(a)>0$ be the native categorical
distribution and $q(a)=q^P(a)>0$ the normalized frozen-prior distribution.
Restricting and renormalizing a full-alphabet prior to $\mathcal A_i$
does not change the fused law below. Let
$z(a)=\log q(a)$ and
$\widetilde\ell^P_a=\ell^P_a-|\mathcal A|^{-1}
\sum_{a'\in\mathcal A}\ell^P_{a'}$.

\begin{dcappproposition}[Exact transfer of prior log odds]
\label{prop:mpnn_transfer}
For every finite scalar gate $g$,
\begin{equation}
p_g(a)
=\frac{p(a)q(a)^g}{\sum_{b\in\mathcal A_i}p(b)q(b)^g},
\qquad
\log\frac{p_g(a)}{p_g(b)}
=\log\frac{p(a)}{p(b)}
+g\log\frac{q(a)}{q(b)}.
\label{eq:app_product_distribution}
\end{equation}
Moreover, $p_0=p$. At a fixed context with unrestricted base logits,
every strictly positive distribution on $\mathcal A_i$ can be represented
for any fixed finite $g$.
\end{dcappproposition}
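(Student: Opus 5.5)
I plan to work directly from the fusion rule \eqref{eq:dc_prior_fusion} together with the masking step. Fix the context and write $\ell^B_a$ for the base logits, so that $p(a)=e^{\ell^B_a}/\sum_{b\in\mathcal A_i}e^{\ell^B_b}$ on the admissible support $\mathcal A_i$. Because ProteinMPNN outputs log-probabilities, $\ell^P_a=\log q_{\mathrm{full}}(a)$. I also set $c=|\mathcal A|^{-1}\sum_{a'}\ell^P_{a'}$, which is a constant in $a$. Masking followed by the softmax gives $p_g(a)\propto \exp(\ell^B_a+g\,\ell^P_a-gc)$ on $\mathcal A_i$.

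First, the factor $e^{-gc}$ does not depend on $a$, so it cancels under normalization. Second, I write $e^{\ell^B_a}$ as $p(a)$ times its normalizer, which cancels for the same reason. This leaves $p_g(a)\propto p(a)\,q_{\mathrm{full}}(a)^g$.

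Next I handle restriction to the support. If $q(a)=q_{\mathrm{full}}(a)/\sum_{b\in\mathcal A_i}q_{\mathrm{full}}(b)$, then $q(a)^g=q_{\mathrm{full}}(a)^g\,\kappa^{-g}$ with $\kappa$ constant. That factor also cancels, so the fused law is unchanged, as the statement claims. Normalizing then yields the first identity in \eqref{eq:app_product_distribution}. Taking the ratio for $a,b\in\mathcal A_i$ makes the normalizer cancel, and taking logarithms gives the log-odds identity. Setting $g=0$ gives $q^0\equiv1$, hence $p_0=p$.

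For representability, fix a finite $g$ and a target strictly positive distribution $r$ on $\mathcal A_i$. I choose base logits $\ell^B_a=\log r(a)-g\,\widetilde\ell^P_a$ for $a\in\mathcal A_i$; this is allowed because the base logits are unrestricted and all quantities are finite, since $q>0$. The fused logits then equal $\log r(a)$, so the softmax over $\mathcal A_i$ returns $r$.

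The only delicate point is bookkeeping rather than any substantive obstacle. One must check that centering is done over the full alphabet $\mathcal A$ while normalization is done over $\mathcal A_i$, and that both contribute only $a$-independent constants. The positivity assumption on $q$ is what keeps $\log q$ and $q^g$ finite, including for negative $g$.
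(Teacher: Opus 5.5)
Your proposal is correct and follows essentially the same route as the paper: the centering offset and the softmax normalizers cancel as $a$-independent constants, the ratio gives the log-odds identity, $g=0$ is immediate, and representability uses exactly the paper's choice $\ell^B_a=\log r(a)-g\,\widetilde\ell^P_a$. Your explicit check that renormalizing the prior to $\mathcal A_i$ contributes only the constant factor $\kappa^{-g}$ fills in a point the paper asserts without proof.
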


\begin{proof}
In the softmax of $\ell^B+g\widetilde\ell^P$, centering contributes the
same scalar to every category and cancels between numerator and
denominator. Writing each uncentered exponential in terms of its
normalized distribution cancels the remaining normalizers, yielding
the first identity. Taking a ratio gives the second.
The case $g=0$ is immediate. For a desired distribution $r$, choose
$\ell^B_a=\log r(a)-g\widetilde\ell^P_a$ on the allowed support.
Its fused softmax is $r$.
\end{proof}

Centering is therefore a softmax-invariant parameterization; it does not
itself improve categorical probabilities. When $g>0$, the prior reinforces
its preferred categories. A negative unconstrained gate reverses these
preferences. The pointwise construction is not a claim that a finite
shared network realizes arbitrary corrections at all contexts. The
architectural nesting at $g=0$, however, is exact.

\subsubsection{A sufficient condition for strictly lower cross-entropy}

The product identity explains how transfer occurs, but does not alone
show that it helps. We next give a prediction-risk criterion.

Let $X$ denote an active residue-time context sampled from a fixed
distribution $\nu$, with true conditional residue law $r_X$ supported
on the admissible alphabet, native
prediction $p_X$, and frozen prior $q_X$. Hold the base predictor and
context distribution fixed while varying the global gate. Set
$z_X(a)=\log q_X(a)$ and
\begin{equation}
\mathcal E(g)
=\mathbb E_{X\sim\nu}\mathbb E_{a\sim r_X}[-\log p_{g,X}(a)].
\end{equation}

\paragraph{Assumption P (bounded, aligned structural preferences).}
Assume $p_X,q_X$ have full support on $\mathcal A_i$, finite base risk, and
\begin{equation}
\max_a z_X(a)-\min_a z_X(a)\leq D_P<\infty
\quad\text{for }\nu\text{-almost every }X.
\end{equation}
Define the average prior-alignment gap
\begin{equation}
\delta_P=
\mathbb E_X\!\left[
\mathbb E_{a\sim r_X}z_X(a)
-\mathbb E_{a\sim p_X}z_X(a)\right].
\label{eq:app_prior_alignment}
\end{equation}
For positive-prior transfer, assume $\delta_P>0$.
This means that the true residue distribution receives a larger expected
prior log score than the current native prediction does.

\begin{dcappproposition}[Cross-entropy benefit of an aligned prior]
\label{prop:app_prior_risk_gain}
Under Assumption P,
\begin{equation}
\begin{aligned}
\mathcal E(g)-\mathcal E(0)
&=\mathbb E_X\left[
\log\mathbb E_{a\sim p_X}e^{g z_X(a)}
-g\mathbb E_{a\sim r_X}z_X(a)\right],\\
\mathcal E'(0)&=-\delta_P,\qquad
\mathcal E''(g)=
\mathbb E_X\operatorname{Var}_{a\sim p_{g,X}}[z_X(a)]
\leq D_P^2/4.
\end{aligned}
\label{eq:app_prior_risk_derivatives}
\end{equation}
Consequently,
\begin{equation}
\mathcal E(g)
\leq\mathcal E(0)-g\delta_P+\frac{D_P^2g^2}{8}.
\label{eq:app_prior_risk_bound}
\end{equation}
If $D_P>0$ and $\delta_P>0$, every
$0<g<8\delta_P/D_P^2$ gives a strict reduction, and the choice
$g=4\delta_P/D_P^2$ yields a reduction of at least
$2\delta_P^2/D_P^2$.
\end{dcappproposition}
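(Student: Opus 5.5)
The plan is to work pointwise in the context $X$ and then average over $\nu$. By Proposition~\ref{prop:mpnn_transfer}, the fused law is $p_{g,X}(a)=p_X(a)e^{g z_X(a)}/\Lambda_X(g)$, where $\Lambda_X(g)=\mathbb E_{a\sim p_X}e^{g z_X(a)}$ is finite because $\mathcal A_i$ is finite. Hence
\[
-\log p_{g,X}(a)=-\log p_X(a)-g z_X(a)+\log\Lambda_X(g).
\]
Taking $\mathbb E_{a\sim r_X}$ and then $\mathbb E_X$, and subtracting the $g=0$ value, gives the first line of \eqref{eq:app_prior_risk_derivatives}. The finite base risk guarantees that $\mathcal E(0)$ is finite, so the subtraction is legitimate.

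The derivatives come from the standard exponential-family identities for the cumulant function $\psi_X(g)=\log\Lambda_X(g)$:
\[
\psi_X'(g)=\mathbb E_{p_{g,X}}z_X,\qquad \psi_X''(g)=\operatorname{Var}_{p_{g,X}}z_X .
\]
At $g=0$ the first identity gives $\mathbb E_{p_X}z_X$. Hence the pointwise derivative of the difference is $\mathbb E_{p_X}z_X-\mathbb E_{r_X}z_X$, whose average is $-\delta_P$. The pointwise second derivative is the variance under $p_{g,X}$. For the bound on the variance, note that $z_X$ takes values in an interval of length at most $D_P$, so Popoviciu's inequality gives $\operatorname{Var}\le D_P^2/4$.

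The step needing the most care is exchanging differentiation and $\mathbb E_X$. I would handle it by dominated convergence:
\begin{itemize}
\item $\psi_X$ is convex and smooth with $|\psi_X''|\le D_P^2/4$ almost surely.
\item For the first derivative, $\psi_X'$ and $\mathbb E_{r_X}z_X$ are individually unbounded, since $z_X$ itself is not bounded. Their difference, however, satisfies $|\psi_X'(g)-\mathbb E_{r_X}z_X|\le D_P$, because both are averages of $z_X$ over the same range.
\end{itemize}
So the integrand of the first line, and its first two $g$-derivatives, are dominated by integrable constants on bounded $g$-intervals. To avoid differentiating under the integral at all, one can argue pointwise: apply Taylor's theorem with integral remainder to each $X$, bound the remainder by $D_P^2g^2/8$, and then integrate. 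This yields \eqref{eq:app_prior_risk_bound} directly. The statements $\mathcal E'(0)=-\delta_P$ and the formula for $\mathcal E''$ then follow by dominated convergence.

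The consequences follow from minimizing the quadratic $-g\delta_P+D_P^2g^2/8$. It is negative exactly for $0<g<8\delta_P/D_P^2$. At its vertex $g=4\delta_P/D_P^2$ it equals $-2\delta_P^2/D_P^2$, which gives the stated strict reduction.
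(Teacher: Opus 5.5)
Your proposal is correct and follows essentially the same route as the paper. The shared steps are the exponential-tilting form of the fused law, the cumulant identities for the tilted mean and variance, the range bound $\operatorname{Var}\le D_P^2/4$ (which the paper derives from $(W-a)(b-W)\ge 0$, i.e.\ Popoviciu's inequality), Taylor's theorem, and completing the square, with your bound $|\psi_X'(g)-\mathbb E_{r_X}z_X|\le D_P$ playing exactly the role of the paper's per-context subtraction of $\min_a z_X(a)$. One small correction: $z_X$ need not be unbounded, since if $q_X$ is normalized on $\mathcal A_i$, then $\max_a q_X(a)\ge 1/|\mathcal A|$ forces $z_X\in[-\log|\mathcal A|-D_P,0]$; your argument only uses the boundedness of differences, however, so nothing depends on this.
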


\begin{proof}
Take the negative logarithm of the normalized product in
\eqref{eq:app_product_distribution}, and average under $r_X$.
Differentiation of the log normalizer gives its tilted mean and variance.
To justify differentiation even if the absolute log scores vary across
contexts, subtract $\min_a z_X(a)$ separately in each context; the risk
difference and all derivatives are unchanged, while the centered scores
are bounded in $[0,D_P]$.

For a random variable $W\in[a,b]$, the inequality
$(W-a)(b-W)\geq0$ gives
$\operatorname{Var}(W)\leq(\mathbb EW-a)(b-\mathbb EW)
\leq(b-a)^2/4$.
Apply this to the tilted prior score to obtain the curvature bound.
Taylor's theorem then gives
\eqref{eq:app_prior_risk_bound}; the asserted interval and
quantitative reduction follow by completing the quadratic.
\end{proof}

\paragraph{Interpretation and training.}
The assumption is a calibration condition on current intermediate
backbones, not on clean structures alone. The result supplies an explicit
way in which externally learned preferences can reduce sequence loss:
they must correct directions in which the native predictor underweights
compatible residues. It also explains why the gate should be learned.
For the same native predictor, $g=0$ remains available; if
$\delta_P<0$, sufficiently small negative gates decrease the fixed-context
risk instead. If the prior is uniform on the support, it has no effect.
If $p_X=r_X$, then $\delta_P=0$ and no strict improvement over the Bayes
predictor is possible.

During joint training, the parameter class includes the native model
and, under Assumption P at a given native predictor, a strictly better
positive-gate prediction rule. This is a population-risk existence
statement, not a guarantee that finite-data optimization finds that rule.
The numerical gate above is an analytical witness, not a recommended
replacement for the trained gate.

\paragraph{Gradient paths and dynamic use.}
The implementation evaluates
$\operatorname{stopgrad}(F_\phi(\B_t;\Rec))$.
The stop-gradient operator blocks that computational gradient path,
while $\phi$ stays frozen. It does not make the mathematical function
$F_\phi$ constant in its coordinates. Its values can change as $\B_t$
changes, and fused categorical samples can influence later backbone
updates through the shared network.

The risk result fixes $\nu$ and the base predictions. It directly
describes prediction on sampled training contexts. During inference,
changing $g$ can also change future contexts, so terminal cofolding
confidence requires a separate trajectory-level argument or empirical
evaluation. Frozen pretrained parameters transfer learned preferences;
they do not create information beyond the input backbone in the
information-theoretic sense.

% ============================================================
\subsection{Cascaded search: structural selection, sequence coverage, and refinement}
\label{app:cascaded_search_theory}

This part analyzes the cascade as a sequence of conditional decisions.
The structural selector reduces redundant conditioning contexts before
sequence expansion. Quality-based selection occurs later, after short
AfCycDesign trials. These operations use different surrogates and have
different possible failure modes.

\subsubsection{Branchable checkpoints and conditional continuation laws}
\label{app:branching_checkpoints}

Suppose the backbone, sequence, and side-chain sources are independent
given $Z$, and inactive peptide sequence and side-chain inputs are
masked before $\tseq$. Assume also that no external sequence-dependent
guidance is applied during this prefix.
Then the prefix checkpoint relevant to backbone evolution can be
constructed without using the unrevealed sequence source.
Replacing that source at $\tseq$ leaves the generated prefix unchanged.
This follows by induction over the prefix updates: each update uses only
the previous prefix state, the conditions, and permitted prefix randomness.

Generate a root pool $\mathcal H=(H_1,\ldots,H_K)$ at $\tseq$.
Each $H_i$ contains $\B_{\tseq}^{(i)}$ and all memory needed for a valid
continuation, including source-dependent sampler variables.
Any stored unused sequence variables are replaced by the child's fresh
source; they are not held fixed as conditioning variables in $P_i$ below.
For a retained index set $A$, let
\begin{equation}
P_i(s)=
\mathbb P_\theta(\Sseq_1=s\mid H_i,Z;
\text{fixed continuation and guidance protocol}).
\label{eq:app_continuation_law}
\end{equation}
Conditional on $\mathcal H$ and $A$, assume distinct children use
independent fresh sequence sources and continuation randomness.
The external oracle is deterministic or uses independent randomness for
each child, with no cross-child adaptation.
Then the $M$ children of root $i$ are independent draws from $P_i$.
Their backbone states continue to evolve; no side-chain branching factor
is added to the cofolding policy.

The selected roots are generally dependent because selection is
history-dependent and without replacement. The sequence identities below
condition on the realized root pool and selected set; they do not
assume independent selected roots.

\subsubsection{Structural potentials and reduced redundancy}
\label{app:backbone_selection}

Let $d_{ij}$ be the declared distance between visible backbones of
roots $i,j$, preserving the task's residue correspondence.
Distances in a common receptor frame measure pose as well as shape;
independent peptide alignment measures shape alone.
The proof applies to either explicitly chosen distance but does not
identify them.

Let $\kappa_{ij}=\kappa(d_{ij})\in[0,1]$ be symmetric, with $\kappa$
non-increasing. No positive-semidefinite-kernel or determinantal-process
assumption is required. For a nonempty selected set $A$, define
\begin{equation}
c_i(A)=\frac{1}{|A|}\sum_{j\in A}\kappa_{ij},
\qquad i\in\mathcal U_A:=\{1,\ldots,K\}\setminus A.
\label{eq:app_crowding}
\end{equation}
Let $w(c)>0$ be bounded and non-increasing on $[0,1]$. Select
\begin{equation}
\mu_A(i)
=\frac{w(c_i(A))}
{\sum_{j\in\mathcal U_A}w(c_j(A))},
\qquad i\in\mathcal U_A.
\label{eq:app_root_selection}
\end{equation}
The first root is selected uniformly; subsequent draws are without
replacement until $|A|=K_0$.
This formulation covers a family of probability-based crowding rules.
The evaluated implementation must specify its distance, similarity,
and weighting function.

Potential-based reweighting at an intermediate generation checkpoint is
related to Feynman--Kac steering~\citep{singhal2025fksteering}.
Here the potential also depends on the roots already selected.
\Eqref{eq:app_root_selection} defines the conditional choice law
being analyzed; it is not asserted to sample exactly from a prescribed
terminal reward distribution.

\begin{dcappproposition}[One-step reduction in structural crowding]
\label{prop:app_crowding_reduction}
Fix a root pool and a nonempty selected set $A$. Let $u_A$ be uniform
on $\mathcal U_A$. Then
\begin{equation}
\mathbb E_{\mu_A}c_i(A)-\mathbb E_{u_A}c_i(A)
=\frac{\operatorname{Cov}_{u_A}
(c_i(A),w(c_i(A)))}{\mathbb E_{u_A}w(c_i(A))}
\leq0.
\label{eq:app_crowding_gain}
\end{equation}
The inequality is strict when two remaining crowding values differ and
the weighting function strictly distinguishes them.
Every remaining root retains positive selection probability.
\end{dcappproposition}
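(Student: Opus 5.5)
The plan is a direct computation followed by a Chebyshev-type covariance inequality. Fix the pool and $A$, and abbreviate $c_i=c_i(A)$ and $w_i=w(c_i)$ for $i\in\mathcal U_A$; write $n=|\mathcal U_A|\ge 1$. Since $w>0$ on $[0,1]$ and $c_i\in[0,1]$ (each $\kappa_{ij}\in[0,1]$), every $w_i$ is positive. Hence $\sum_j w_j>0$, the law $\mu_A$ is well defined, and $\mu_A(i)>0$ for every remaining root. This settles the last claim immediately.

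For the identity, rewrite the $\mu_A$-expectation in terms of uniform expectations:
\[
\mathbb E_{\mu_A}c_i=\frac{\sum_i c_iw_i}{\sum_j w_j}=\frac{\mathbb E_{u_A}[c_iw_i]}{\mathbb E_{u_A}w_i}.
\]
Subtract $\mathbb E_{u_A}c_i=\mathbb E_{u_A}c_i\,\mathbb E_{u_A}w_i/\mathbb E_{u_A}w_i$. The numerator becomes $\mathbb E_{u_A}[c_iw_i]-\mathbb E_{u_A}c_i\,\mathbb E_{u_A}w_i=\operatorname{Cov}_{u_A}(c_i,w_i)$, which is exactly \eqref{eq:app_crowding_gain}.

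For the sign, I use the symmetrized form of the covariance under the uniform law:
\[
\operatorname{Cov}_{u_A}(c,w(c))=\frac{1}{2n^2}\sum_{i,j\in\mathcal U_A}(c_i-c_j)\bigl(w(c_i)-w(c_j)\bigr).
\]
Because $w$ is non-increasing, each summand $(c_i-c_j)(w(c_i)-w(c_j))$ is $\le 0$, so the covariance is $\le 0$. Dividing by the positive $\mathbb E_{u_A}w$ gives the inequality.

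For strictness, suppose there are $i,j\in\mathcal U_A$ with $c_i\ne c_j$ and $w(c_i)\ne w(c_j)$; this is my reading of ``the weighting function strictly distinguishes them.'' Then that pair (together with its symmetric counterpart) contributes a strictly negative term. All other terms are nonpositive, so the sum is strictly negative.

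I expect no real obstacle. The only care points are these:
\begin{itemize}
\item using positivity and boundedness of $w$ to ensure the denominators are finite and positive;
\item noting that the case $n=1$ makes the covariance zero trivially;
\item stating the strictness hypothesis precisely, as requiring some pair with distinct crowding values and distinct weights.
\end{itemize}
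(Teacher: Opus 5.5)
Your proposal is correct and follows essentially the same route as the paper. Both write the $\mu_A$-expectation as $\mathbb E_{u_A}[c\,w]/\mathbb E_{u_A}w$, then sign the covariance through its symmetrized independent-copy form using the opposite signs of the differences, and both get strictness from a single strictly negative pair and the positivity claim from $w>0$.
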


\begin{proof}
The equality follows by writing the reweighted expectation as
$\mathbb E_{u_A}[cw]/\mathbb E_{u_A}w$.
For independent indices $I,J\sim u_A$,
\begin{equation}
2\operatorname{Cov}_{u_A}(c,w(c))
=\mathbb E[(c_I-c_J)(w(c_I)-w(c_J))]\leq0,
\end{equation}
because the differences have opposite signs.
Strictness follows if a pair gives a strictly negative product;
positivity follows from $w>0$.
\end{proof}

For example, $w(c)=\exp(-\beta c)$ with $\beta>0$ has the
regularization interpretation
\begin{equation}
\mu_A=\arg\min_{\mu}
\left\{\mathbb E_{\mu}c_i(A)
+\beta^{-1}D_{\mathrm{KL}}(\mu\Vert u_A)\right\}.
\label{eq:app_selector_regularization}
\end{equation}
A Lagrange multiplier gives this form; strict convexity gives uniqueness.
It tends to uniform selection as $\beta\downarrow0$ and concentrates on
minimum-crowding roots as $\beta\to\infty$.
The covariance result is more general and does not require using this
particular family in the implementation.

\paragraph{Scope and cost.}
The comparison uses the same root pool and selection history for both
choices. It proves a local reduction in expected redundancy, not global
optimality of the selected set or domination of a separate adaptive run.
Low crowding alone establishes neither physical validity nor sequence
diversity. With a cached similarity matrix, crowding sums can be updated
in $O(K)$ work per root, or $O(KK_0)$ total.
If a distance costs $O(L)$, the matrix costs $O(K^2L)$ time and $O(K^2)$
storage; on-demand distances to selected roots cost $O(KK_0L)$ time and
$O(K)$ auxiliary storage.

\subsubsection{From geometric redundancy to conditional sequence coverage}
\label{app:sequence_coverage}

Fix in advance a map
$\psi:\mathcal A^L\to\{1,\ldots,J\}$ describing sequence families.
It may be the identity on exact sequences or a declared partition by
sequence similarity or another fixed criterion. Let
\begin{equation}
\pi_i(v)=P_i(\psi(\Sseq_1)=v),\qquad
o_{ij}=\sum_{v=1}^J\pi_i(v)\pi_j(v).
\label{eq:app_sequence_overlap}
\end{equation}
Thus $o_{ij}$ is the probability that independent continuations from
roots $i,j$ fall in the same sequence family.
For a selected set $A$ of size $k$, write $N=kM$.
Let $D_A$ count distinct families among the $N$ children, and $C_A$
count unordered pairs of children in the same family.

\begin{dcappproposition}[Exact coverage and a collision-based certificate]
\label{prop:app_sequence_coverage}
Under conditional independence of the children,
\begin{equation}
\mathbb E[D_A\mid\mathcal H,A]
=\sum_{v=1}^J
\left[1-\prod_{i\in A}(1-\pi_i(v))^M\right],
\label{eq:app_exact_coverage}
\end{equation}
and
\begin{equation}
\mathbb E[C_A\mid\mathcal H,A]
=\binom M2\sum_{i\in A}o_{ii}
+M^2\sum_{\substack{i<j\\i,j\in A}}o_{ij}.
\label{eq:app_expected_collisions}
\end{equation}
Consequently, $\mathbb E D_A\geq N-\mathbb E C_A$.
If structural similarity controls between-root sequence overlap as
\begin{equation}
o_{ij}\leq a_S\kappa_{ij}+\varepsilon_S
\quad(i\neq j),\qquad a_S,\varepsilon_S\geq0,
\label{eq:app_overlap_envelope}
\end{equation}
then
\begin{equation}
\begin{aligned}
\mathbb E[D_A\mid\mathcal H,A]\geq{}&
N-\binom M2\sum_{i\in A}o_{ii}\\
&-M^2\left(
a_S\sum_{\substack{i<j\\i,j\in A}}\kappa_{ij}
+\varepsilon_S\binom k2\right).
\end{aligned}
\label{eq:app_coverage_certificate}
\end{equation}
\end{dcappproposition}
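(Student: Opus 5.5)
The proof conditions on the realized root pool $\mathcal H$ and the selected set $A$ throughout. By the branching discussion in Appendix~\ref{app:branching_checkpoints}, the $N=kM$ children are then mutually independent. Each child of root $i$ has family label $\psi(\Sseq_1)$ with law $\pi_i$. Both identities come from writing the counts as sums of indicators and using linearity of expectation; the lower bound $\mathbb E D_A\geq N-\mathbb E C_A$ comes from a deterministic counting inequality.

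For \eqref{eq:app_exact_coverage}, I write $D_A=\sum_{v=1}^J\mathbf 1\{\text{some child lands in }v\}$. By independence, the probability that no child lands in $v$ factorizes over roots and over each root's $M$ children, giving $\prod_{i\in A}(1-\pi_i(v))^M$. Taking complements and summing over $v$ gives the formula.

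For \eqref{eq:app_expected_collisions}, I write $C_A=\sum_{\{c,c'\}}\mathbf 1\{\psi(c)=\psi(c')\}$ over unordered pairs of distinct children. A pair from the same root $i$ consists of two independent draws from $\pi_i$, so it collides with probability $o_{ii}$; there are $\binom M2$ such pairs per root. A pair from distinct roots $i<j$ collides with probability $o_{ij}$, and there are $M^2$ such pairs. Summing gives the formula.

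For the inequality $D_A\geq N-C_A$, I argue pointwise. Suppose family $v$ receives $n_v\geq1$ children. Then it contributes $1$ to $D_A$ and $\binom{n_v}{2}\geq n_v-1$ to $C_A$, since $\binom{n}{2}-(n-1)=(n-1)(n-2)/2\geq0$. Summing over occupied families gives $C_A\geq\sum_v(n_v-1)=N-D_A$. Taking conditional expectations yields the stated bound.

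The certificate \eqref{eq:app_coverage_certificate} follows by substituting the envelope \eqref{eq:app_overlap_envelope} into the between-root term of $\mathbb E C_A$. Since $M^2\geq0$, the inequality direction is preserved. Summing the constant $\varepsilon_S$ over the $\binom k2$ pairs gives the last term.

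The main subtlety is in the factorization step, not in the algebra. The selected set $A$ depends on the pool, so the roots are not independent. The argument needs only the conditional independence of children given $(\mathcal H,A)$, and I will state explicitly that all expectations are taken conditionally on $(\mathcal H,A)$, so that no independence across roots is needed.
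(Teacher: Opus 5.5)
Your proposal is correct and follows the paper's proof essentially step for step: the indicator decomposition with the factorized miss probability $\prod_{i\in A}(1-\pi_i(v))^M$ for $D_A$, the pair counting with collision probabilities $o_{ii}$ and $o_{ij}$ for $C_A$, the pointwise bound $(n_v-1)_+\le\binom{n_v}{2}$ giving $N-D_A\le C_A$, and substitution of the overlap envelope for the certificate. Your remark that only independence of the children conditional on $(\mathcal H,A)$ is needed, with no independence across selected roots, matches how the paper frames the result.
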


\begin{proof}
A family $v$ is absent exactly when every child misses it.
Conditional independence gives the product in
\eqref{eq:app_exact_coverage}; sum the inclusion indicators.
Within root $i$, $\binom M2$ pairs have collision probability $o_{ii}$.
Between roots $i,j$, $M^2$ pairs have collision probability $o_{ij}$.
Linearity of expectation proves the collision identity.
If family $v$ occurs $n_v$ times, then
$(n_v-1)_+\leq\binom{n_v}{2}$.
Summing gives $N-D_A\leq C_A$.
Substitute \eqref{eq:app_overlap_envelope} for the final bound.
\end{proof}

\Eqref{eq:app_overlap_envelope} is a cross-modal calibration
assumption: backbone similarity must be informative about overlap between
conditional sequence distributions. Coupling and the backbone-conditioned
prior provide a mechanism for such dependence, but do not prove this
bound. Arbitrarily large $a_S$ or $\varepsilon_S$ can make the
certificate vacuous and do not establish useful cross-modal calibration.
With the within-root term held fixed or controlled, reducing the kernel
sum improves the coverage lower bound.

A stronger local calibration yields a reduction in actual expected
sequence collisions. Fix $A$ with $r=|A|>0$, and suppose
\begin{equation}
\begin{gathered}
o_{ii}=d_A\quad\text{for all }i\in\mathcal U_A,\\
o_{ij}=a_S\kappa_{ij}+b_A+e_{ij},
\quad |e_{ij}|\leq\varepsilon_S,\quad
i\in\mathcal U_A,\ j\in A.
\end{gathered}
\label{eq:app_local_overlap_calibration}
\end{equation}
Let
$\Delta_c(A)=\mathbb E_{u_A}c_i(A)-\mathbb E_{\mu_A}c_i(A)\geq0$.

\begin{dcappcorollary}[Reduced sequence redundancy under calibration]
\label{cor:app_sequence_deredundancy}
Under \eqref{eq:app_local_overlap_calibration}, the expected
number of new colliding child pairs after adding one root satisfies
\begin{equation}
\mathbb E_{\mu_A}[\Delta C_i]
-\mathbb E_{u_A}[\Delta C_i]
\leq-M^2r\bigl(a_S\Delta_c(A)-2\varepsilon_S\bigr).
\label{eq:app_local_collision_gain}
\end{equation}
It is strictly smaller under the crowding selector if
$a_S\Delta_c(A)>2\varepsilon_S$.
\end{dcappcorollary}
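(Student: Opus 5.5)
The plan is to write the increment in expected collisions that comes from adding one root $i\in\mathcal U_A$ to $A$. Then we compare its average under the crowding law $\mu_A$ of \eqref{eq:app_root_selection} with its average under the uniform law $u_A$. Following the collision identity \eqref{eq:app_expected_collisions} in Proposition~\ref{prop:app_sequence_coverage}, I define $\Delta C_i$ as the expected number of new colliding child pairs, conditional on $\mathcal H$ and $A$, created when root $i$ is added. These new pairs come in two kinds: the $\binom M2$ pairs among the $M$ new children themselves, and the $M\cdot rM$ pairs between a new child and a child of an already selected root. Conditional independence of the children then gives
\begin{equation*}
\Delta C_i=\binom M2 o_{ii}+M^2\sum_{j\in A}o_{ij}.
\end{equation*}

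Next I substitute the local calibration \eqref{eq:app_local_overlap_calibration}. The within-root term equals $\binom M2 d_A$ for every candidate $i\in\mathcal U_A$. It is therefore constant in $i$ and cancels when we subtract the two averages. For the cross term, the definition \eqref{eq:app_crowding} gives $\sum_{j\in A}\kappa_{ij}=r\,c_i(A)$. Hence
\begin{equation*}
M^2\sum_{j\in A}o_{ij}=M^2\Bigl(a_S r\,c_i(A)+r b_A+\sum_{j\in A}e_{ij}\Bigr).
\end{equation*}
The constant $r b_A$ also cancels. Using the definition of $\Delta_c(A)$, the difference of averages reduces to
\begin{equation*}
\mathbb E_{\mu_A}[\Delta C_i]-\mathbb E_{u_A}[\Delta C_i]
=-M^2 r a_S\Delta_c(A)+M^2\Bigl(\mathbb E_{\mu_A}-\mathbb E_{u_A}\Bigr)\Bigl[\sum_{j\in A}e_{ij}\Bigr].
\end{equation*}
Proposition~\ref{prop:app_crowding_reduction} guarantees $\Delta_c(A)\ge0$, so the first term is a genuine reduction.

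The only step that needs care is the error term. For each $i$ we have $|\sum_{j\in A}e_{ij}|\le r\varepsilon_S$. Both $\mu_A$ and $u_A$ are probability measures on $\mathcal U_A$, so each average of $\sum_j e_{ij}$ lies in $[-r\varepsilon_S,r\varepsilon_S]$. Their difference is therefore at most $2r\varepsilon_S$ in absolute value. This bound is where the factor $2$ in \eqref{eq:app_local_collision_gain} comes from. Substituting it yields the stated inequality. Strictness is then immediate: if $a_S\Delta_c(A)>2\varepsilon_S$, the right-hand side is strictly negative, because $M\ge1$ and $r>0$.

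I expect no real obstacle here. The one point to check is that both laws are supported on the same set $\mathcal U_A$ and that $d_A$ and $b_A$ do not depend on $i$. These two facts are exactly what make the within-root and offset terms cancel.
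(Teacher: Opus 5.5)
Your proposal is correct and follows the paper's proof essentially line by line. You write $\mathbb E[\Delta C_i]=\binom M2 o_{ii}+M^2\sum_{j\in A}o_{ij}$, cancel the constant $d_A$ and $b_A$ terms between the two selection laws, and use $\sum_{j\in A}\kappa_{ij}=r\,c_i(A)$ to turn the kernel term into $-M^2 r a_S\Delta_c(A)$. You then bound the difference of the two averaged error sums by $2r\varepsilon_S$; your explicit pair count and your check that both laws are supported on $\mathcal U_A$ make precise points the paper leaves implicit.
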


\begin{proof}
For candidate $i$,
$\mathbb E[\Delta C_i]=\binom M2o_{ii}+M^2\sum_{j\in A}o_{ij}$.
The diagonal and $b_A$ terms cancel between selection laws.
The kernel term is $-M^2r a_S\Delta_c(A)$.
The sum of errors lies in $[-r\varepsilon_S,r\varepsilon_S]$ for each
$i$, so its two expectations differ by at most $2r\varepsilon_S$.
\end{proof}

\paragraph{An exact coverage case.}
Suppose roots belong to structural classes, $\kappa_{ij}$ is one for
roots in the same class and zero otherwise, and each class corresponds
to a distinct sequence family with probability one.
Roots from a class not yet selected have crowding zero and maximal
weight $w(0)$. If $n_{\mathrm{new}}$ of the $n$ remaining roots belong
to unseen classes, then
\begin{equation}
\mathbb P_{\mu_A}(\text{new sequence family})
=\frac{n_{\mathrm{new}}w(0)}
{n_{\mathrm{new}}w(0)+
 \sum_{i:\,\mathrm{seen\ class}}w(c_i(A))}
\geq\frac{n_{\mathrm{new}}}{n}.
\end{equation}
The right-hand side is the uniform-choice probability.
Thus this separated conditional-family model directly converts
geometric selection into a greater chance of adding a new family at
the current decision. It is an example, not a universal model of the
trained generator.

\paragraph{What breadth and branching each contribute.}
Let $I$ be uniform over $A$ and $V\sim\pi_I$ a random child's family.
For $\bar\pi_A=k^{-1}\sum_{i\in A}\pi_i$,
\begin{equation}
H(\bar\pi_A)
=\frac1k\sum_{i\in A}H(\pi_i)
+\frac1k\sum_{i\in A}D_{\mathrm{KL}}(\pi_i\Vert\bar\pi_A).
\label{eq:app_sequence_mixture_entropy}
\end{equation}
This is $H(V)=H(V\mid I)+I(V;I)$.
Distinct root-conditioned laws contribute between-root information;
$M$ children explore variability within each law.
If all $P_i$ are identical, structural selection cannot change the
sequence sampling law. Also, sequence distributions may already have
full support. Broader exploration here means finite-budget coverage and
less repetition, not necessarily a larger mathematical support.

\subsubsection{When broader exploration improves candidate discovery}
\label{app:search_discovery}

For a fixed root set and a fixed per-child continuation/refinement
protocol, let $p_i(q)$ be the probability that a child of root $i$
achieves quality at least $q$. Conditional independence gives
\begin{equation}
\mathbb P\!\left(\max_{i\in A,\,m\leq M}V_{i,m}\geq q
\;\middle|\;\mathcal H,A\right)
=1-\prod_{i\in A}(1-p_i(q))^M.
\label{eq:app_exact_hit_probability}
\end{equation}
Here $V$ is a declared final cofolding score or hit indicator.
Diversity is useful only if it changes the success opportunities being
sampled; this identity alone does not favor diverse roots.

To make that requirement explicit, partition conditional sequence
exploration into regions $v$. For a chosen quality threshold, let
$\Theta_v\in[0,1]$ be region $v$'s latent success probability under a
fixed child protocol.

\paragraph{Assumption Q (distinct, unresolved quality opportunities).}
The $\Theta_v$ are independent and identically distributed across regions,
and independent of the geometry-only allocation.
Conditional on them, child successes are independent Bernoulli draws
with parameter $\Theta_v$ in region $v$.
Allocate $n_v$ children to region $v$, with $\sum_vn_v=N$ and fixed
per-child computation, without observing the latent $\Theta_v$.

\begin{dcappproposition}[Diversification reduces an averaged miss probability]
\label{prop:app_region_discovery}
Under Assumption Q, let $m(n)=\mathbb E[(1-\Theta)^n]$, with $m(0)=1$.
Then
\begin{equation}
P_{\mathrm{miss}}(n_1,\ldots,n_J)=\prod_{v=1}^Jm(n_v).
\label{eq:app_region_miss}
\end{equation}
For integers $a,b\geq1$,
\begin{equation}
m(a)m(b)\leq m(a+b).
\label{eq:app_region_split}
\end{equation}
Splitting $a+b$ trials from one region into $a$ and $b$ trials in two
independent regions therefore cannot increase the averaged miss
probability. The split factor decreases strictly when $\Theta$ is
non-degenerate.
For fixed $N$ and a fixed number of available exchangeable regions,
a balanced allocation minimizes the miss probability.
\end{dcappproposition}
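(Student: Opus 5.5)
The plan is to condition on the latent region success probabilities $\Theta_1,\ldots,\Theta_J$ first, and then average over them. Given $(\Theta_v)_v$, the $N$ child outcomes are independent Bernoulli draws. A miss means every child fails, which has conditional probability $\prod_v(1-\Theta_v)^{n_v}$. The allocation $(n_v)$ is chosen without observing $\Theta$, so it is independent of the latents. Since the $\Theta_v$ are i.i.d., the expectation of the product factorizes into $\prod_v\mathbb E[(1-\Theta)^{n_v}]=\prod_v m(n_v)$, and $m(0)=1$ covers empty regions. This gives \eqref{eq:app_region_miss}.

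For the split inequality \eqref{eq:app_region_split}, set $X=1-\Theta\in[0,1]$. The functions $x\mapsto x^a$ and $x\mapsto x^b$ are both non-decreasing on $[0,1]$, so Chebyshev's association inequality applies. I would write it exactly as in the proof of Proposition~\ref{prop:app_crowding_reduction}: take an independent copy $X'$ and note
\[
2\bigl(\mathbb E[X^{a+b}]-\mathbb E[X^a]\mathbb E[X^b]\bigr)=\mathbb E\bigl[(X^a-X'^a)(X^b-X'^b)\bigr]\geq 0,
\]
because the two differences have the same sign. This yields $m(a)m(b)\leq m(a+b)$.

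For strictness, the integrand is strictly positive whenever $X\neq X'$, since both powers are strictly increasing on $[0,1]$ for $a,b\geq1$. Non-degeneracy of $\Theta$ gives $\mathbb P(X\neq X')>0$, so the inequality is strict. The splitting claim then follows by replacing the factor $m(a+b)$ with $m(a)m(b)$ in the product formula while leaving all other factors unchanged.

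For balanced optimality, the plan is to show $\log m$ is convex on the integers. By Cauchy--Schwarz,
\[
m(n)^2=\mathbb E\bigl[X^{(n-1)/2}X^{(n+1)/2}\bigr]^2\leq m(n-1)\,m(n+1),
\]
so $n\mapsto\log m(n)$ is discretely convex. If it ever happens that $m(n)=0$ for some $n\geq1$, then $X=0$ almost surely, $m\equiv 0$ for $n\geq1$, and that case is trivial. With $J$ exchangeable regions and $\sum_v n_v=N$ fixed, I then use a transfer argument. Whenever $n_u\geq n_v+2$, moving one trial from $u$ to $v$ does not increase $\log m(n_u)+\log m(n_v)$, because discrete convexity gives $\log m(n_u)-\log m(n_u-1)\geq \log m(n_v+1)-\log m(n_v)$. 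Repeating such transfers reaches an allocation in which all $n_v$ differ by at most one. All such allocations have the same multiset of counts and hence the same value, which is therefore the minimum.

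The main obstacle is making "balanced" precise when $J\nmid N$, and handling the possible zeros of $m$. I would resolve the first by defining balanced as $|n_u-n_v|\leq1$, and I would state the degenerate case $\Theta=1$ a.s. separately.
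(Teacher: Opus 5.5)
Your proposal is correct and follows the paper's proof essentially step for step. You condition on the latent rates to obtain the product of moments, use the independent-copy identity for $m(a+b)-m(a)m(b)$, establish log-convexity by Cauchy--Schwarz with the single-trial transfer argument, and treat the case $\Theta=1$ almost surely separately. You also note that all allocations with $|n_u-n_v|\le 1$ share the same multiset of counts, which makes explicit why the transfer procedure certifies the common balanced value as the minimum; the paper leaves this step implicit.
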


\begin{proof}
Conditional on the latent probabilities, all trials fail with probability
$\prod_v(1-\Theta_v)^{n_v}$.
Independence gives the product of moments.
Let $X=1-\Theta$ and let $X'$ be an independent copy. Since both powers
are increasing on $[0,1]$,
\begin{equation}
m(a+b)-m(a)m(b)
=\tfrac12\mathbb E[(X^a-X'^a)(X^b-X'^b)]\geq0.
\end{equation}
For non-degenerate $X$, the integrand is strictly positive with positive
probability.
For balancing, Cauchy--Schwarz gives
$m(n)^2\leq m(n-1)m(n+1)$ for $n\geq1$.
Thus the increments of $\log m(n)$ are non-decreasing when the moments
are positive. If $n_u\geq n_v+2$, moving one trial from $u$ to $v$ cannot
increase $\log m(n_u)+\log m(n_v)$.
Repeated moves yield a balanced allocation.
If a positive moment vanishes, $\Theta=1$ almost surely and the miss
probability is already zero for any nonempty allocation.
\end{proof}

\paragraph{Relation to structural selection.}
The overlap assumptions link lower crowding to less redundant conditional
sequence exploration. Assumption Q then gives a regime where allocating
trials across distinct sequence regions improves discovery probability
at the same leaf budget. The assumptions play different roles; neither
follows from geometry alone. The comparison also fixes per-child
computation and does not yet include quality pruning.

This is an average over unresolved region quality, not a guarantee for
every fixed assignment of success rates.
If all regions have the same deterministic rate, spreading trials leaves
success probability unchanged. If high-quality regions are already
known, concentration can be preferable. Quality-based refinement
addresses that later information regime.
No monotonic score gain from increasing $K$ or reducing crowding alone
is asserted.

\subsubsection{Quality selection, early stopping, and total compute}
\label{app:cascade_refinement}

After $b=5$ initial AfCycDesign updates on each of $N=K_0M$ leaves,
let
\begin{equation}
Q_j=0.75\bigl(1-\widetilde{\mathrm{iPAE}}_{\mathrm{hotspot},j}\bigr)
+0.25\,\mathrm{ipTM}_j
\end{equation}
be the screening score. Let positive weights $v_j=v(Q_j)$ increase with
$Q_j$. The actual weighting rule and replacement policy must be reported.
The result below considers sequential selection of $N_0$ distinct leaves
using these fixed weights.

To separate screening error from generation, couple each candidate to
its potential outcome under the same full-budget refinement protocol.
Let $\mathcal G$ contain candidates that would reach the specified
quality threshold under that protocol; future optimizer randomness can
be fixed in advance for this definition.
The selector's random choices are independent of those potential outcomes
conditional on its observed scores.

\begin{dcappproposition}[Retention of potentially successful candidates]
\label{prop:app_quality_retention}
Condition on screening scores and potential outcomes, with
$\mathcal G\neq\varnothing$. Define
\begin{equation}
\omega_{\mathcal G}
=\frac{\sum_{j\in\mathcal G}v_j}{\sum_{j=1}^Nv_j}.
\end{equation}
Then
\begin{equation}
\mathbb P(A_Q\cap\mathcal G=\varnothing
\mid Q_1,\ldots,Q_N,\mathcal G)
\leq(1-\omega_{\mathcal G})^{N_0}.
\label{eq:app_retention_bound}
\end{equation}
\end{dcappproposition}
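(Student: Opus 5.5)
Conditional on the screening scores, the weights $v_j$ are fixed numbers. By assumption, the selector's randomness is independent of the potential outcomes given those scores, so $\mathcal G$ can be treated as a fixed nonempty index set. The selection is sequential without replacement: at step $s$, with remaining set $\mathcal U_s$ (initially all $N$ leaves), leaf $j\in\mathcal U_s$ is drawn with probability $v_j/\sum_{k\in\mathcal U_s}v_k$. I will bound the probability that all $N_0$ draws avoid $\mathcal G$ by the chain rule, step by step.

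Write $E_s$ for the event that the first $s$ draws all lie outside $\mathcal G$. On $E_{s-1}$, no element of $\mathcal G$ has been removed, so the remaining set is $\mathcal U_{s}=\mathcal G\cup \mathcal B_s$ with $\mathcal B_s\subseteq\mathcal G^c$. Hence
\[
\mathbb P(\text{draw }s\notin\mathcal G\mid E_{s-1},\ldots)
=\frac{\sum_{j\in\mathcal B_s}v_j}{\sum_{j\in\mathcal G}v_j+\sum_{j\in\mathcal B_s}v_j}.
\]
The function $x\mapsto x/(g+x)$ is increasing for $g>0$. Since $\sum_{\mathcal B_s}v_j\le\sum_{\mathcal G^c}v_j$, this conditional probability is at most $\sum_{\mathcal G^c}v_j/\sum_{j=1}^N v_j=1-\omega_{\mathcal G}$. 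The bound holds uniformly over the particular history, because it depends only on $\mathcal B_s$ being a subset of $\mathcal G^c$.

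Multiplying the $N_0$ conditional bounds via $\mathbb P(E_{N_0})=\prod_s\mathbb P(E_s\mid E_{s-1})$ gives $(1-\omega_{\mathcal G})^{N_0}$. Two edge cases remain. If $N_0$ exceeds $|\mathcal G^c|$, the event is impossible and the bound is trivial. If $\mathcal B_s$ becomes empty, the conditional probability is $0$.

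The only delicate point is the conditioning. The bound must be applied conditionally on the realized scores and $\mathcal G$, using the stated independence of the selector's randomness from the potential outcomes. Without that independence, $\mathcal G$ could correlate with the draws and the stepwise computation would fail. The monotonicity step is where the without-replacement structure is used: removing non-good leaves only increases the relative weight of $\mathcal G$. So sampling without replacement is never worse than sampling with replacement, and the with-replacement case gives exactly $(1-\omega_{\mathcal G})^{N_0}$.
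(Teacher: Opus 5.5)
Your proof is correct and follows the paper's argument. On the event that no member of $\mathcal G$ has been drawn yet, its weight is untouched while the remaining total weight can only shrink, so each draw misses $\mathcal G$ with conditional probability at most $1-\omega_{\mathcal G}$, and the chain rule multiplies these $N_0$ bounds; your treatment of the conditioning and of the edge cases adds detail to the paper's terse proof without changing the route.
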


\begin{proof}
Conditioned on having selected no member of $\mathcal G$, its total
remaining weight is unchanged, while the denominator can only decrease.
Hence the conditional probability of selecting a member at the next
draw is at least $\omega_{\mathcal G}$.
Multiply the conditional probabilities of missing it at all $N_0$ draws.
\end{proof}

The bound is useful when early scores assign appreciable weight to
candidates that benefit from further optimization.
For example, if all members of $\mathcal G$ score at least as highly as
all nonmembers, monotonicity of $v$ gives
$\omega_{\mathcal G}\geq|\mathcal G|/N$.
A high early score does not establish this calibration automatically.
Positive weights preserve a chance of retaining each candidate but do
not eliminate false negatives.

Let $\delta_{\mathrm{stop}}$ be the probability that a retained
potentially successful candidate exists but the actual stopping and
discrete-completion policy returns no success.
Combining retention with this definition gives
\begin{equation}
P_{\mathrm{cascade\ hit}}
\geq
\mathbb E\!\left[
\mathbf 1\{\mathcal G\neq\varnothing\}
\left(1-(1-\omega_{\mathcal G})^{N_0}\right)
\right]-\delta_{\mathrm{stop}}.
\label{eq:app_cascade_hit_bound}
\end{equation}
The integrand is defined as zero if $\mathcal G$ is empty.
This separates useful-candidate generation, retention, and stopping losses.
Five soft-stage steps without improvement do not by themselves bound
$\delta_{\mathrm{stop}}$; later improvement remains possible.
The patience rule is therefore an empirical budget policy, not a proven
convergence test.

\paragraph{Compute accounting.}
Let $T_{\max}$ be the full per-leaf optimizer cap including the $b$
screening updates. If survivors resume their optimizer states and receive
$\ell_j\leq T_{\max}-b$ additional updates, then
\begin{equation}
U_{\mathrm{cascade}}
=bN+\sum_{j\in A_Q}\ell_j
\leq bN+N_0(T_{\max}-b).
\label{eq:app_optimizer_budget}
\end{equation}
Relative to fully optimizing all $N$ leaves,
\begin{equation}
NT_{\max}-U_{\mathrm{cascade}}
=(N-N_0)(T_{\max}-b)
+\sum_{j\in A_Q}(T_{\max}-b-\ell_j).
\end{equation}
The first term saves refinement on rejected leaves; the second saves
work from stopping survivors early. If optimization is restarted,
repeated work must be added. Different step types or recycle counts
require measured costs rather than treating every update as equal.

With prefix cost $c_{\mathrm{pre}}$ and continuation cost
$c_{\mathrm{cont}}$, the end-to-end budget is
\begin{equation}
\begin{aligned}
C_{\mathrm{total}}={}&
Kc_{\mathrm{pre}}+K_0Mc_{\mathrm{cont}}
+C_{\mathrm{guide}}+C_{\mathrm{select}}\\
&+C_{\mathrm{screen}}+C_{\mathrm{refine}}
+C_{\mathrm{final\ evaluation}}.
\end{aligned}
\label{eq:app_total_budget}
\end{equation}
The resource-allocation principle is related to early-stopping methods
such as Hyperband~\citep{li2018hyperband}, but this optimizer, its scores,
and probabilistic selection do not inherit Hyperband's guarantees.
AfCycDesign refinement is not a flow or diffusion trajectory.

\paragraph{Implications for the integrated policy.}
Structural selection reduces conditional geometric redundancy.
When that redundancy tracks overlap between conditional sequence laws,
it can reduce repeated exploration and improve a finite-budget coverage
certificate. Under distinct unresolved quality opportunities, broader
coverage can improve discovery probability. As quality evidence becomes
available, weighted retention assigns additional refinement to promising
leaves, with the stated retention bound.
The policy trades exploration against optimization cost; this analysis
neither attributes every empirical gain to a separate stage nor proves
optimality of the chosen hyperparameters.

The results concern computational prediction and search. They do not
make AlphaFold confidence a guarantee of experimental binding, or imply
that full-atom generation cannot benefit from a different scaling policy.
The evaluated scope is the integrated cofolding cascade in
Section~\ref{sec:cascaded_search}.

%% file: tables/app_architecture.tex
\begin{figure}[!ht]
\centering
\includegraphics[width=0.9\textwidth]{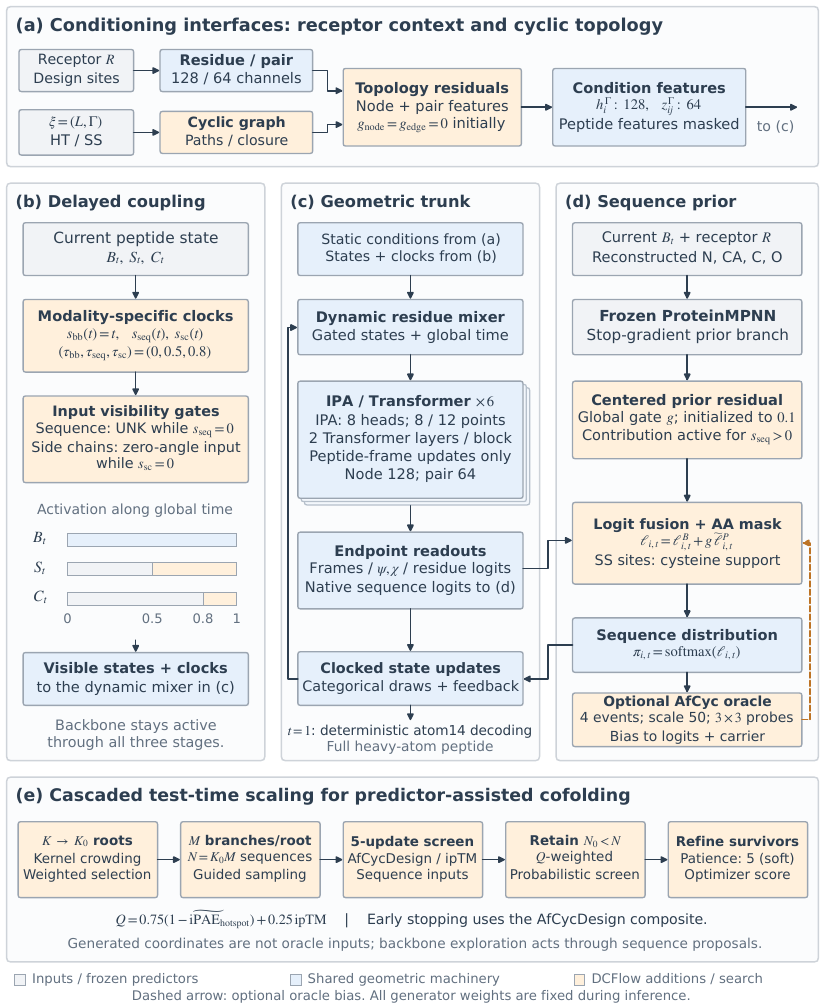}
\caption{\textbf{Detailed architecture and inference interfaces of \method{}.}
(a) Receptor features and cyclic graph features form topology-conditioned
residue and pair representations.
(b) Modality clocks control both state progression and input visibility.
(c) A shared geometric trunk predicts backbone, angular, and sequence
endpoints; sampled sequences re-enter subsequent predictions while the
backbone remains active.
(d) Frozen ProteinMPNN predictions from the current intermediate backbone
enter sequence logits through a trainable residual gate. Optional oracle
feedback supplies persistent sequence biases.
(e) The cofolding cascade selects backbone roots, branches guided
continuations, and allocates further refinement after a short screen.}
\label{fig:dcflow_architecture_detailed}
\end{figure}

This appendix specifies how the three mechanisms in Section~\ref{sec:method}
are instantiated within a shared generative network. Figure~\ref{fig:dcflow_architecture_detailed}
shows the conditioning interfaces, delayed modality activation, geometric
trunk, and residual sequence prior, together with their connection to the
cofolding cascade. The theoretical analysis is given in
Appendix~\ref{app:dc_math}; training settings, refinement protocols, and
evaluation metrics are reported in Appendix~\ref{app:implementation_metrics}.

\subsection{State Representation and Modality-Specific Paths}
\label{app:architecture_state}

\paragraph{Geometric and chemical variables.}
We retain the main-text state $Y_t=(B_t,S_t,C_t)$.
For peptide residue $i\in\{1,\ldots,L\}$, let
$x_{i,t}\in\mathbb R^3$ be its C$\alpha$ translation and
$O_{i,t}\in\mathrm{SO}(3)$ its local orientation, defined using
N--C$\alpha$--C. We use $O_{i,t}$ to distinguish a residue rotation
from the receptor $R$ and the screening score $Q$.
The backbone state $B_t$ also includes a carbonyl-orientation channel
$\psi_{i,t}$ computed from N--C$\alpha$--C--O. This channel controls
the local carbonyl oxygen and is distinct from the conventional
backbone dihedral defined with the next residue's nitrogen.
The side-chain state $C_t$ consists of up to four torsions
$\chi_{i,t}=(\chi_{i,t}^{1},\ldots,\chi_{i,t}^{4})$ per residue,
with residue-specific validity masks.

The implementation represents $S_t$ by a continuous carrier
$u_t\in\mathbb R^{L\times|\mathcal A|}$ and a sampled categorical sequence
$a_t\in\mathcal A^L$. The carrier is mapped to probabilities by softmax;
it is not itself constrained to the probability simplex.
For a residue identity $a$, define
\begin{equation}
\operatorname{Encode}(a)=10\,\operatorname{onehot}(a)-5\mathbf 1,
\label{eq:arch_sequence_encoding}
\end{equation}
where $\mathbf 1$ is the all-ones vector over the alphabet.
The encoded category therefore has value $+5$ and the remaining categories
have value $-5$. The terminal discrete sequence is denoted $S_1$ in the
main text.

\paragraph{Training paths.}
Only peptide states are perturbed. Translation sources are centered
Gaussian samples, orientations are sampled uniformly on $\mathrm{SO}(3)$,
angular sources are uniform on the circle, and $u_0$ is Gaussian with
standard deviation 5. Native endpoint states carry subscript 1.
Using the modality clocks $s_m(t)$ defined in
Section~\ref{sec:delayed_coupling}, the interpolation paths are
\begin{equation}
\begin{aligned}
x_t &= (1-s_{\mathrm{bb}}(t))x_0+s_{\mathrm{bb}}(t)x_1,\\
u_t &= (1-s_{\mathrm{seq}}(t))u_0+s_{\mathrm{seq}}(t)u_1,\\
O_t &= \operatorname{Geo}_{\mathrm{SO}(3)}
       (O_0,O_1;s_{\mathrm{bb}}(t)),\\
\psi_t &= \operatorname{Geo}_{\mathbb S^1}
          (\psi_0,\psi_1;s_{\mathrm{bb}}(t)),\\
\chi_t &= \operatorname{Geo}_{\mathbb S^1}
          (\chi_0,\chi_1;s_{\mathrm{sc}}(t)).
\end{aligned}
\label{eq:arch_interpolation}
\end{equation}
Here $\operatorname{Geo}_{\mathrm{SO}(3)}$ denotes geodesic interpolation
between rotations, and $\operatorname{Geo}_{\mathbb S^1}$ denotes circular
interpolation applied componentwise to the valid angular channels.
Thus the carbonyl channel follows the backbone clock, whereas side-chain
torsions follow the delayed side-chain clock.
Intermediate residue identities are sampled from
$\operatorname{softmax}(u_{i,t})$.

\subsection{Receptor and Cyclization Conditioning}
\label{app:architecture_conditioning}

\paragraph{Static residue and pair features.}
The input contains the designated receptor pocket and peptide design
positions. A residue encoder combines observed identities, local heavy-atom
coordinates, and periodic backbone-angle features into 128-dimensional
representations. A pair encoder combines residue-pair identities,
same-chain relative positions clipped to $\pm32$, atom-distance features,
and orientation features into 64-dimensional representations.
The native peptide's identities and geometric features are masked in these
static encoders during training. Observed pair geometry is retained only
when both residues belong to the visible structural context.
Peptide--receptor interactions are subsequently modeled using the evolving
peptide frames in geometric attention.

The receptor and peptide participate jointly in attention. Receptor hidden
features can therefore change, but its coordinates, identities, and angular
states remain fixed. Static condition features are computed once per
sampling trajectory, while the current peptide state is incorporated at
each network evaluation.

\paragraph{Cyclic graph residuals.}
The cyclization specification $\Gamma$ augments sequential peptide edges
with the prescribed closing edge: an N--C linkage for head-to-tail (HT)
closure or an SG--SG linkage for disulfide (SS) closure.
For each peptide residue pair, the graph encodes shortest-path distance,
whether the selected shortest path crosses a closing edge, and the type
of any direct closing edge. Equal-length paths prefer a path without
the closing edge. These features supplement the original linear relative
positions, allowing the network to distinguish sequence separation from
chemical adjacency.

Let $h_i^{0}$ and $z_{ij}^{0}$ denote the static residue and pair features.
Write $c_\Gamma$ for the topology category, $v_i^\Gamma$ for a closure-site
indicator, and $d_{ij}^\Gamma,r_{ij}^\Gamma,e_{ij}^\Gamma$ for graph
distance, path type, and direct closing-edge type. With learned embeddings
$E_{\mathrm{mode}},E_{\mathrm{site}},E_d,E_r,E_e$, conditioning is injected as
\begin{equation}
\begin{aligned}
h_i^\Gamma
&=h_i^0+g_{\mathrm{node}}
  \bigl(E_{\mathrm{mode}}(c_\Gamma)+E_{\mathrm{site}}(v_i^\Gamma)\bigr),\\
z_{ij}^\Gamma
&=z_{ij}^0+g_{\mathrm{edge}}m_{ij}^{\mathrm{pep}}
  \bigl(E_d(d_{ij}^\Gamma)+E_r(r_{ij}^\Gamma)+E_e(e_{ij}^\Gamma)\bigr).
\end{aligned}
\label{eq:arch_topology_residuals}
\end{equation}
Here $m_{ij}^{\mathrm{pep}}$ is one for peptide--peptide pairs and zero
otherwise. The node residual is applied across residue tokens, with
closure-site indicators identifying the relevant peptide positions.
The unconstrained scalar gates $g_{\mathrm{node}}$ and $g_{\mathrm{edge}}$
are initialized to zero, preserving the pretrained condition features at
initialization.
At specified SS sites, the admissible sequence categories are restricted
to cysteine before prediction probabilities are normalized.

\subsection{Delayed Visibility and the Shared Geometric Trunk}
\label{app:architecture_trunk}

\paragraph{Input visibility and clock conditioning.}
The delayed schedule affects interpolation, network inputs, supervision,
and numerical updates. When $s_{\mathrm{seq}}(t)=0$, peptide residue
identities supplied to the trunk are replaced by an unknown-residue token.
When $s_{\mathrm{sc}}(t)=0$, the side-chain inputs are replaced by fixed
zero angles before periodic encoding. The latter is a zero-angle
representation rather than a zeroed feature vector.
Consequently, inactive random sequence and side-chain states do not enter
the trunk as sample-dependent chemical context.

In addition to the global-time embedding, the network receives a residual
clock feature
\begin{equation}
c_t^{\mathrm{clock}}=
W_{\mathrm{clock}}
\left[
E_{\mathrm{seq}}(s_{\mathrm{seq}}(t));
E_{\mathrm{sc}}(s_{\mathrm{sc}}(t))
\right],
\label{eq:arch_clock_embedding}
\end{equation}
where each $E$ produces a 128-dimensional time embedding, semicolons
denote concatenation, and
$W_{\mathrm{clock}}\in\mathbb R^{128\times256}$ is a bias-free,
zero-initialized projection. The dynamic feature mixer combines these
clock features with the static conditions, current residue identities,
angular encodings, and global time.

\paragraph{Geometric updates and endpoint readouts.}
Six shared blocks process the joint receptor--peptide representation.
Each block applies invariant point attention (IPA), residual normalization,
two Transformer layers, a residue transition, and a local rigid-frame
update. Pair representations are updated between blocks, except after
the final block. IPA combines scalar feature similarity, pair-feature
biases, and distances between points transformed by the current frames.
Its geometric interactions therefore use the evolving peptide pose in
the fixed receptor coordinate system.

Frame updates are restricted to valid peptide residues within every block.
This constraint preserves receptor geometry throughout the forward pass,
rather than restoring it only after the pass.
The final frames provide backbone endpoint predictions; separate
multilayer perceptrons produce 20 residue logits and five angular endpoints
(one carbonyl and four side-chain channels).
These heads share the same trunk. Once sequence or side chains become
visible, their features can affect subsequent backbone predictions without
requiring a separate backbone-refinement network.

\begin{table}[t]
\centering
\small
\caption{Architecture settings of the shared generator. The frozen
ProteinMPNN branch is external to the six geometric blocks.}
\label{tab:dcflow_architecture_settings}
\begin{tabular}{@{}lp{0.48\linewidth}@{}}
\toprule
Component & Configuration\\
\midrule
Residue / pair channels & 128 / 64\\
Geometric blocks & 6\\
IPA heads & 8\\
IPA query/key / value points & 8 / 12 per head\\
Transformer layers / heads & 2 per block / 4\\
Transformer feed-forward width & 128; dropout 0\\
Sequence head & $128\rightarrow128\rightarrow128\rightarrow20$\\
Angular head & $128\rightarrow128\rightarrow128\rightarrow5$\\
Additional clock projection & $256\rightarrow128$; zero initialization\\
Topology residual gates & Two trainable scalars; zero initialization\\
ProteinMPNN residual gate & Global scalar $g$; initialized to $0.1$\\
\bottomrule
\end{tabular}
\end{table}

\subsection{Dynamic ProteinMPNN Residual Prior}
\label{app:architecture_prior}

The ProteinMPNN branch reconstructs peptide N, C$\alpha$, C, and O
coordinates from the \emph{current} frames and carbonyl angles, and combines
them with the observed receptor backbone. It evaluates sequence-independent,
backbone-conditioned residue probabilities using the frozen model
$F_\phi$~\citep{dauparas2022proteinmpnn}.
Neither the native peptide coordinates nor the predicted clean endpoint
is supplied to this branch.

The 20 standard-amino-acid log-probabilities are centered across categories
and fused with the native sequence head according to
\eqref{eq:dc_prior_fusion}. Subtracting the category mean removes a
softmax-invariant offset; it does not change relative residue preferences.
The gate $g$ is a learned global scalar, and the prior contribution is
enabled only when $s_{\mathrm{seq}}(t)>0$.
This activation controls the contribution: the implementation can still
evaluate the frozen branch before activation and multiply its residual by
zero.

ProteinMPNN parameters and its coordinate-input path are excluded from
backpropagation. The fused sequence loss trains the native trunk,
sequence head, and gate $g$. After fusion, disallowed residue categories
are masked before softmax normalization.
Sampling the resulting distribution changes the sequence context provided
at subsequent steps, allowing transferred structural preferences to
influence later backbone and side-chain predictions through the shared
trunk.

\subsection{Supervision, Numerical Updates, and Reconstruction}
\label{app:architecture_dynamics}

\paragraph{Supervision follows modality activation.}
The grouped objective in \eqref{eq:dc_objective} combines translation endpoint error, rotational matching, reconstructed-backbone error, sequence cross-entropy, angular matching, and cyclic-closure supervision. Given the predicted endpoints $\widehat{Y}_1 = (\widehat{B}_1, \widehat{S}_1, \widehat{C}_1)$ and native structures $Y_1 = (B_1, S_1, C_1)$, the individual loss terms are formally defined as follows:

\begin{itemize}[leftmargin=*, nosep]
    \item \textbf{Backbone Loss ($\ell_B$):} Combines translation, rotation, reconstructed atoms, and carbonyl orientation via $\lambda_B \ell_B = \lambda_{\text{trans}} \ell_{\text{trans}} + \lambda_{\text{rot}} \ell_{\text{rot}} + \lambda_{\text{bb}} \ell_{\text{bb}} + \lambda_{\psi} \ell_{\psi}$, where
    \begin{equation}
    \begin{aligned}
        \ell_{\text{trans}} &= \frac{1}{L} \sum_{i=1}^L \|\widehat{x}_{i,1} - x_{i,1}\|_2^2, &
        \ell_{\text{rot}} &= \frac{1}{L} \sum_{i=1}^L \|\mathrm{Log}(R_{i,1}^T \widehat{R}_{i,1})\|_2^2, \\
        \ell_{\text{bb}} &= \frac{1}{3L} \sum_{i=1}^L \sum_{a \in \{\text{N}, \text{C}\alpha, \text{C}\}} \|\widehat{x}_{i,a} - x_{i,a}\|_2^2, \quad &
        \ell_{\psi} &= \frac{1}{L} \sum_{i=1}^L \left( 1 - \cos(\widehat{\psi}_{i,1} - \psi_{i,1}) \right).
    \end{aligned}
    \end{equation}
    $\mathrm{Log}(\cdot)$ denotes the $\mathrm{SO}(3)$ logarithmic map penalizing Lie-algebra tangent deviations; $\ell_\psi$ supervises the $\text{N--C}\alpha\text{--C--O}$ dihedral throughout generation.

    \item \textbf{Sequence Loss ($\ell_S$):} Supervised via cross-entropy for $t \geq \tseq$ against native residues $a_i^*$:
    \begin{equation}
        \ell_S = - \frac{1}{L} \sum_{i=1}^L \log \pi_{i,t,a_i^*}, \quad \text{where } \pi_{i,t} = \mathrm{softmax}(\ell_{i,t}).
    \end{equation}
    Input visibility and ProteinMPNN prior fusion require $t > \tseq$.

    \item \textbf{Side-Chain Angular Loss ($\ell_C$):} Supervises valid channels $\chi = (\chi^1, \dots, \chi^4)$ under mask $m_{i,k} \in \{0, 1\}$:
    \begin{equation}
        \lambda_C \ell_C = \lambda_{\text{angle}} \ell_{\text{angle}} + \lambda_{\text{torsion}} \ell_{\text{torsion}}, \quad \text{with } \ell_{\text{angle}} = \frac{\sum_{i=1}^L \sum_{k=1}^4 m_{i,k} \left( 1 - \cos(\widehat{\chi}_{i,1}^k - \chi_{i,1}^k) \right)}{\sum_{i=1}^L \sum_{k=1}^4 m_{i,k}}.
    \end{equation}
    $\ell_{\text{torsion}}$ provides auxiliary endpoint supervision. Angular terms activate at $t \geq \tsc$ using global time scaling.

    \item \textbf{Cyclic Closure Loss ($\ell_{\text{cyc}}$):} Penalizes closing-atom distance errors on reconstructed heavy atoms:
    \begin{equation}
    \begin{aligned}
        \ell_{\text{cyc}} &= \lambda_{\text{HT}} \mathbf{1}[\Gamma=\mathrm{HT}]\ell_{\text{cyc}}^{\text{HT}} + \mathbf{1}[t \geq \tsc,\Gamma=\mathrm{SS}] \lambda_{\text{SS}} \ell_{\text{cyc}}^{\text{SS}}, \\ 
        \text{where} & \quad
        \begin{cases}
            \ell_{\text{cyc}}^{\text{HT}} = \left( \|\widehat{x}_{1, \text{N}} - \widehat{x}_{L, \text{C}}\|_2 - 1.33\,\text{\AA} \right)^2, \\
            \ell_{\text{cyc}}^{\text{SS}} = \left( \|\widehat{x}_{1, \text{SG}} - \widehat{x}_{L, \text{SG}}\|_2 - 2.03\,\text{\AA} \right)^2.
        \end{cases}
    \end{aligned}
    \end{equation}
    The disulfide term is active only when $t \geq \tsc$, as sulfur coordinates require side-chain packing. These terms supervise closure distances rather than bond-angle or steric constraints.
\end{itemize}

Following the loss configuration in Appendix~\ref{app:model_implementation}, the loss weights are assigned as: $\lambda_{\text{trans}} = 0.5$, $\lambda_{\text{rot}} = 0.5$, $\lambda_{\text{bb}} = 0.25$, $\lambda_{\psi} = 1.0$, $\lambda_S = 1.0$, $\lambda_{\text{angle}} = 1.0$, $\lambda_{\text{torsion}} = 0.5$, $\lambda_{\text{HT}} = 0.25$, and $\lambda_{\text{SS}} = 0.25$.

\paragraph{Source-dependent sampling and feedback.}
At each sampling step, the network predicts clean endpoints and the
sampler advances the active states. Translation and sequence-carrier
increments use the fixed source states, as specified in
\eqref{eq:app_source_updates}; they are not obtained by replacing
the source with the current intermediate state.
Rotations use endpoint-directed geodesic updates, and angular updates
follow the backbone and side-chain clocks.
Predicted sequence probabilities are sampled and encoded before the
carrier update; a categorical sequence is then drawn from the updated
carrier for the next trunk evaluation.
The evaluated schedule uses 200 time points on $[0.01,1]$,
corresponding to 199 update intervals and one final endpoint prediction.

The complete checkpoint $H_t$ introduced in
Appendix~\ref{app:branching_checkpoints} therefore includes the current
state and source-dependent sampling memory. Branching from a backbone
prefix retains this context while providing independent randomness for
the child continuations.
At the endpoint, residue-specific templates and predicted frames and
angles deterministically reconstruct up to 14 heavy atoms per residue.
This atom14 reconstruction is a geometric decoder, not an additional
atom-coordinate diffusion model. The full-atom designation refers to
heavy atoms; hydrogens are not explicitly generated.

\subsection{Interfaces to Guided Cofolding and Cascaded Search}
\label{app:architecture_search_interface}

\paragraph{Direct and predictor-assisted outputs.}
Direct generation returns the reconstructed peptide--receptor complex,
which is subsequently assessed using the common physical protocol.
In the cofolding workflow, the generated \emph{sequence} is passed to
the external predictor and AfCycDesign optimizer.
The generated peptide coordinates are not provided as templates or
initial poses. Structural exploration therefore affects this workflow
through the sequence distributions induced by different backbone
hypotheses.

\paragraph{In-trajectory sequence feedback.}
When guidance is enabled, the external cyclic oracle evaluates the
current sequence and single-residue alternatives at four events during
generation. Each event probes at most three positions and three
alternative identities per position, using a base scale of 50.
The resulting score-dependent biases modify later sequence logits,
carrier updates, and the final categorical prediction.
Oracle coordinates do not replace the peptide state, and generation-time
guidance does not differentiate through the predictor.
Sequence feedback can nevertheless influence later backbone updates
through the shared trunk.
The event schedule and bias rule are specified in
Appendix~\ref{app:afcycdesign_refinement_protocol}.

\paragraph{Cascade checkpoints and selection criteria.}
At $\tseq$, $K$ backbone prefixes are probabilistically reduced to
$K_0$ roots using structural crowding. Each retained root produces
$M$ guided continuations, yielding $N=K_0M$ generated sequences.
The backbones continue evolving during these continuations.
All $N$ sequences receive five initial AfCycDesign updates targeting
ipTM, after which $N_0<N$ candidates are retained probabilistically
using $Q$ from \eqref{eq:dc_search_quality}.
This empirical score allocates further refinement; early stopping instead
uses the composite score reported by AfCycDesign, with patience of five
soft-stage updates.
The final sequences undergo the common independent cofolding evaluation
in Appendix~\ref{app:cofolding_metrics}.
The geometric root selection, sequence branching, and predictor-based
retention thus operate at distinct checkpoints of one design workflow.

%% file: tables/app_data.tex
\subsection{Overview}
CPSea provides a large source of cyclic peptide-receptor complexes~\citep{yang2026cpsea}. Our training corpus combines the CPTrans, CPBind, CPCore, and CPSea-PDB source pools and applies an additional strict quality-control pipeline designed to remove obvious geometric failures while preserving useful structural diversity. 

The curation pipeline has two objectives: (i) to remove obvious geometric artifacts and conformations that are inconsistent with the intended chemical scope of the dataset, and (ii) to retain sufficient structural diversity for downstream learning tasks. To balance stereochemical conservatism against dataset coverage, we define two filtering regimes: a \textit{Strict Pipeline}, which favors conformations close to commonly observed low-strain geometries, and a \textit{Relaxed Pipeline}, which tolerates moderate ring-induced distortion while still excluding explicit geometric failures.

The qualitative locations of the preferred backbone, disulfide, and side-chain conformational regions are motivated by established peptide and protein structural chemistry. The exact numerical boundaries used below, however, are operational curation thresholds adopted for the present dataset and should not be interpreted as universal physical constants. The filtering strategy comprises four components: sequence-length constraints, head-to-tail amide-bond geometry, disulfide-bond quality audits, and the exclusion of side-chain isopeptide topologies.

\subsection{Sequence Length Constraints}

We restrict the working dataset to cyclic peptide chains with sequence length $\ell \ge 6$. This lower bound is used as an operational dataset-scope criterion rather than as a universal physical discontinuity at six residues. Classical conformational analyses show that very small cyclic peptides frequently require highly constrained backbone arrangements, unusual amide geometries, or sequence-specific turn-inducing features to achieve ring closure, whereas larger macrocycles more readily accommodate diverse turn-like and drug-relevant conformations \citep{kessler1982conformation,white2011contemporary,ji2024cyclic}. We therefore adopt $\ell \ge 6$ to reduce the representation of highly specialized, strongly strained microcycles and to focus the present dataset on the macrocyclic regime targeted by the downstream modeling tasks.

\subsection{Head-to-Tail Amide Bond Geometries}

The local geometry of the peptide backbone is governed in part by the amide torsion angle $\omega$. Because peptide bonds strongly favor approximately planar geometries, we use $\omega$ as a primary criterion for backbone plausibility. Non-proline and proline-containing peptide bonds are treated separately.

\paragraph{Non-Proline Peptide Bonds.}

For ordinary secondary amide bonds, the \textit{trans} state is strongly favored, although measurable nonplanarity and occasional substantial deviations are observed in high-resolution protein structures \citep{weiss1998peptide,berkholz2012nonplanar}. Non-proline \textit{cis} peptide bonds can occur in exceptional structural contexts, but they lie outside the canonical backbone regime targeted by this dataset. We therefore discard any conformation containing a non-proline \textit{cis} peptide bond.

For the remaining \textit{trans} non-proline peptide bonds, we measure the deviation from ideal trans geometry according to
\begin{equation}
\label{eq:omega_filter}
\left|\, |\omega| - 180^\circ \,\right| \le \Delta\omega .
\end{equation}
The values of $\Delta\omega$ are operational curation tolerances centered on the literature-supported preference for near-planar peptide bonds, rather than crystallographic pass/fail constants. We set $\Delta\omega = 10.0^\circ$ in the Strict Pipeline to favor near-planar amides, and relax the tolerance to $\Delta\omega = 15.0^\circ$ in the Relaxed Pipeline to preserve topologically closed conformers that remain chemically plausible but exhibit moderate ring-induced distortion \citep{weiss1998peptide,berkholz2012nonplanar}.

\paragraph{Proline-Containing Peptide Bonds.}

Peptide bonds preceding proline are a well-established exception to the behavior of ordinary secondary amides. The pyrrolidine ring reduces the free-energy difference between the \textit{cis} and \textit{trans} states of the preceding $X\text{-}\mathrm{Pro}$ bond, such that both isomers can be populated under biologically relevant conditions \citep{macarthur1991influence,wedemeyer2002proline,yao1994stabilization}. A \textit{cis} $X\text{-}\mathrm{Pro}$ bond can also support a tight local reversal, including type-VI-turn-like conformations, a feature that can be compatible with the conformational demands of cyclic peptide backbones \citep{kessler1982conformation,yao1994stabilization}. We therefore retain both \textit{cis} and \textit{trans} states for proline-containing peptide bonds.

To reduce conformational sampling bias in the curated dataset, we balance the \textit{cis} and \textit{trans} pools of proline-containing structures using the following length-conditioned target fractions:
\begin{equation}
\label{eq:cis_fraction}
f_{\text{cis}}(\ell) =
\begin{cases}
0.75 & \text{if } \ell = 6,\\
0.30 & \text{if } 7 \le \ell \le 10,\\
0.15 & \text{if } 11 \le \ell \le 16.
\end{cases}
\end{equation}
These values are dataset-balancing targets adopted for the present curation workflow; they are not intended to represent universal equilibrium \textit{cis} populations. For each balanced pool of \textit{cis} and \textit{trans} structures, redundant or energetically unfavorable states are pruned by ranking conformations according to their computed Rosetta all-atom final energy and retaining only the lowest-energy subset \citep{alford2017rosetta}. The energy function is used here as a physics-motivated ranking criterion for within-pool pruning rather than as an experimentally calibrated free energy.

\subsection{Disulfide Bond Geometry and Quality Audit}

Disulfide linkages ($\mathrm{Cys}\text{-}\mathrm{S}\text{-}\mathrm{S}\text{-}\mathrm{Cys}$) impose strong geometric constraints on peptide backbones and cysteine side chains. Protein disulfides occupy characteristic regions of bond-length, bond-angle, and dihedral space, although strained but physically realizable disulfides may deviate from the centers of these distributions \citep{thornton1981disulphide,katz1986crystallographically,dombkowski2003disulfide}. We therefore apply a four-part disulfide audit based on S-S distance, the two adjacent bond angles, the $\chi_3$ dihedral, and the remaining cystine torsions.

The literature motivates the centers and qualitative shapes of the accepted conformational regions. The precise \texttt{PASS}, \texttt{WARN}, and \texttt{FAIL} boundaries below are operational quality-control envelopes adopted for this dataset. Under the \textit{Strict Pipeline}, a disulfide bond is retained only if all four criteria receive \texttt{PASS}. Under the \textit{Relaxed Pipeline}, geometries in the broader but still bounded \texttt{WARN} region are retained, whereas explicit geometric failures are discarded.

\paragraph{S-S Distance.}

Protein disulfides cluster around an S-S bond length of approximately $2.03\,\text{\AA}$ \citep{thornton1981disulphide,katz1986crystallographically}. We classify the S-S distance $d_{\mathrm{S-S}}$ as
\begin{equation}
\label{eq:ss_distance}
\operatorname{Status}(d_{\mathrm{S-S}}) =
\begin{cases}
\text{PASS} & \text{if } 1.95\,\text{\AA} \le d_{\mathrm{S-S}} \le 2.12\,\text{\AA},\\
\text{WARN} & \text{if } 1.85\,\text{\AA} \le d_{\mathrm{S-S}} \le 2.20\,\text{\AA},\\
\text{FAIL} & \text{otherwise}.
\end{cases}
\end{equation}

\paragraph{Bond Angles.}

The $\mathrm{C}_\beta\text{-}\mathrm{S}\text{-}\mathrm{S}$ and $\mathrm{S}\text{-}\mathrm{S}\text{-}\mathrm{C}_\beta$ angles of protein disulfides are typically centered near $103^\circ$-$104^\circ$ \citep{thornton1981disulphide,dombkowski2003disulfide}. We evaluate each such angle $\theta$ according to
\begin{equation}
\label{eq:ss_angle}
\operatorname{Status}(\theta) =
\begin{cases}
\text{PASS} & \text{if } 95^\circ \le \theta \le 112^\circ,\\
\text{WARN} & \text{if } 90^\circ \le \theta \le 120^\circ,\\
\text{FAIL} & \text{otherwise}.
\end{cases}
\end{equation}

\paragraph{Dihedral Angle $\chi_3$.}

The central disulfide dihedral $\chi_3$ preferentially occupies near-perpendicular orientations, typically close to $\pm 90^\circ$, consistent with the stereoelectronic preferences of cystine linkages \citep{thornton1981disulphide,katz1986crystallographically}. Ring closure can broaden the accessible $\chi_3$ range, particularly in smaller macrocycles. To avoid disproportionately rejecting smaller rings, we use sequence-length-dependent operational PASS windows:
\begin{equation}
\label{eq:chi3_filter}
\operatorname{Status}(\chi_3) =
\begin{cases}
\text{PASS} &
\text{if }
\begin{cases}
|\chi_3| \in [50^\circ,110^\circ] & \text{for } \ell \le 8,\\
|\chi_3| \in [65^\circ,110^\circ] & \text{for } 9 \le \ell \le 10,\\
|\chi_3| \in [75^\circ,105^\circ] & \text{for } \ell > 10,
\end{cases}\\
\text{WARN} & \text{if outside PASS but within expanded boundaries},\\
\text{FAIL} & \text{otherwise}.
\end{cases}
\end{equation}
The length dependence and the numerical limits in \eqref{eq:chi3_filter} are curation choices centered on the preferred disulfide conformational region; they are not claimed to be universal boundaries for all naturally occurring or synthetically constrained disulfides.

\paragraph{Cystine Torsional States.}

The remaining cystine torsions $(\chi_1,\chi_2,\chi_2',\chi_1')$ are biased toward staggered conformational basins characteristic of protein side chains and disulfide linkages \citep{thornton1981disulphide,dunbrack1993backbone,lovell2000penultimate}. For each angle $\chi_{\text{rot}} \in \{\chi_1,\chi_2,\chi_2',\chi_1'\}$, we calculate the absolute difference to the nearest canonical staggered state $\theta_r \in \{g^+ \approx +60^\circ,g^- \approx -60^\circ,t \approx 180^\circ\}$:
\begin{equation}
\label{eq:rotamer_filter}
\operatorname{Status}(\chi_{\text{rot}}) =
\begin{cases}
\text{PASS} & \text{if } \displaystyle\min_r |\chi_{\text{rot}}-\theta_r| \le 30.0^\circ,\\
\text{WARN} & \text{if } \displaystyle\min_r |\chi_{\text{rot}}-\theta_r| \le 45.0^\circ,\\
\text{FAIL} & \text{otherwise}.
\end{cases}
\end{equation}
The $30^\circ$ and $45^\circ$ windows are operational tolerances used to distinguish core staggered states from progressively less typical torsional geometries. Any structure that triggers a \texttt{FAIL} status in any disulfide parameter is excluded.

\subsection{Exclusion of Side-Chain Isopeptide Cyclizations}

Cyclic peptides can be formed not only through head-to-tail backbone closure or disulfide formation, but also through side-chain-to-side-chain or side-chain-to-terminus isopeptide linkages, including Lys-to-Asp/Glu lactam bridges \citep{lundquist2002new,white2011contemporary,hayes2021approaches}. We exclude side-chain-to-side-chain lactam topologies as a dataset-scope decision intended to keep the present corpus focused on the two cyclization classes emphasized here: canonical head-to-tail macrocycles and disulfide-bridged systems. This restriction should not be interpreted as evidence that side-chain lactams lack chemical, biological, or therapeutic value.

The scope restriction is motivated by practical consistency. Head-to-tail and disulfide-linked peptides are major and widely used classes in cyclic-peptide discovery and development \citep{ji2024cyclic,hayes2021approaches}. By contrast, selective side-chain lactam formation commonly requires topology-specific protecting-group and deprotection strategies, often using orthogonal protection schemes, and may require additional route optimization to control competing coupling sites and cyclization outcomes \citep{lundquist2002new,white2011contemporary,fang2024recent}. These synthetic considerations can complicate uniform, high-throughput treatment across a heterogeneous structural database \citep{hayes2021approaches,fang2024recent,lombardi2025advances}. We therefore restrict the present dataset to head-to-tail and disulfide-bridged systems.

\subsection{Dataset Statistics and Filtering Outcomes}

To quantify the effect of the two quality regimes, we compare the retained dataset sizes under the Strict and Relaxed pipelines across all source subsets (Table~\ref{tab:filtering_results}).

\paragraph{Strict Filtering Outcomes.}

Under the Strict Quality Pipeline ($\Delta\omega = 10.0^\circ$, disulfide \texttt{PASS} only), we obtain a high-fidelity dataset of $73,519$ structurally optimized conformations. This population comprises $58,107$ head-to-tail cyclized peptides and $15,412$ disulfide-bridged structures. Within this regime, the \texttt{CPBind} subset contributes the largest share with $39,539$ conformations, followed by \texttt{CPTrans} with $30,380$, while \texttt{CPCore} and \texttt{CPSea\_PDB} provide $2,637$ and $963$ structures, respectively. This subset is intended for modeling settings that benefit from conservative stereochemical quality control and geometries close to commonly populated conformational regions.

\paragraph{Relaxed Filtering Outcomes.}

Under the Relaxed Quality Pipeline ($\Delta\omega = 15.0^\circ$, disulfide \texttt{PASS} or \texttt{WARN}), the dataset scale expands substantially to $129,198$ valid conformations, representing an approximate $75.7\%$ increase in overall data coverage. This expanded dataset contains $76,929$ head-to-tail and $52,269$ disulfide-bridged structures. Across source subsets, \texttt{CPBind} and \texttt{CPTrans} expand to $74,212$ and $48,541$ conformations, respectively, whereas \texttt{CPCore} and \texttt{CPSea\_PDB} scale to $4,724$ and $1,721$ conformations. These statistics show that allowing moderate, explicitly bounded backbone and side-chain strain recovers a substantial number of otherwise excluded topologically closed conformers, thereby increasing conformational coverage for data-intensive learning applications.

\begin{table}[ht]
\centering
\caption{Dataset statistics under the Strict and Relaxed quality filtering regimes.}
\label{tab:filtering_results}
\small
\begin{tabular}{lcccccc}
\toprule
& \multicolumn{3}{c}{\textbf{Strict Quality Pipeline}} & \multicolumn{3}{c}{\textbf{Relaxed Quality Pipeline}} \\
\cmidrule(r){2-4} \cmidrule(l){5-7}
\textbf{Sub-Dataset} & \textbf{Head-to-Tail} & \textbf{Disulfide} & \textbf{Total} & \textbf{Head-to-Tail} & \textbf{Disulfide} & \textbf{Total} \\
\midrule
CPTrans    & 26,189 & 4,191  & 30,380 & 34,391 & 14,150 & 48,541 \\
CPBind     & 29,149 & 10,390 & 39,539 & 38,897 & 35,315 & 74,212 \\
CPCore     & 1,997  & 640    & 2,637  & 2,583  & 2,141  & 4,724  \\
CPSea\_PDB & 772    & 191    & 963    & 1,058  & 663    & 1,721  \\
\midrule
\textbf{Total} & \textbf{58,107} & \textbf{15,412} & \textbf{73,519} & \textbf{76,929} & \textbf{52,269} & \textbf{129,198} \\
\bottomrule
\end{tabular}
\end{table}

\begin{figure}[ht]
    \centering
    \includegraphics[width=\linewidth]{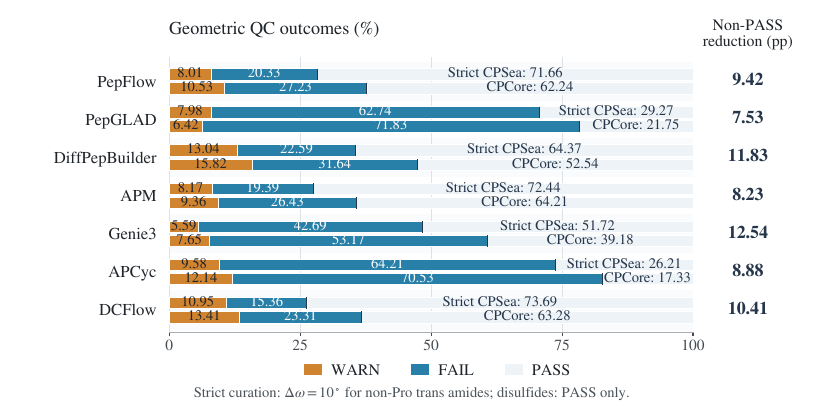}
    \caption{\textbf{Geometric QC by training corpus.} Models trained on Strict CPSea show lower FAIL and combined WARN+FAIL rates across all seven generators under the operational geometric audit described in the appendix. Right-hand values report WARN+FAIL reductions relative to CPCore in percentage points; WARN alone increases for PepGLAD.}
    \label{fig:app_data_ablation}
\end{figure}

\subsection{Ablation Study}

We compare the geometric QC outcomes of samples generated by seven models
trained on Strict CPSea or CPCore, using the audit described above
(Figure~\ref{fig:app_data_ablation}).
Training on Strict CPSea is associated with lower \texttt{FAIL} rates
for all seven generators and a reduction of 7.53--12.54 percentage points
in the combined \texttt{WARN}+\texttt{FAIL} rate.
For \method, the combined rate decreases from 36.72\% to 26.31\%.
The \texttt{WARN} rate also decreases for six generators; PepGLAD is
the exception, increasing from 6.42\% to 7.98\% despite a lower combined
non-\texttt{PASS} rate.

Here, \texttt{WARN} and \texttt{FAIL} serve solely as reference indicators
of geometric data quality under the adopted audit, not as measures of
overall model quality.
We therefore interpret these rates within each model across training
corpora rather than use them to rank different generators.
Since the corpora also differ in size and composition, this comparison
does not isolate the effect of filtering alone.

%% file: tables/app_implementation.tex
\label{app:implementation_metrics}

This appendix specifies the model implementation, generation and refinement
protocols, and metrics used in the experimental tables. Direct full-atom
evaluation assesses generated complexes after a common physical relaxation
procedure. Cofolding evaluation instead assesses the sequences produced by
each design pipeline through a common structure-prediction protocol.

\subsection{Model Training and Sampling}
\label{app:model_implementation}

\paragraph{Geometric representation and conditioning.}
\method{} represents each peptide residue by a rigid frame, a continuous
sequence carrier, and torsional variables. Local frames are constructed from
the N--C$\alpha$--C atoms. The angular representation contains one
N--C$\alpha$--C--O channel controlling carbonyl orientation and up to four
side-chain torsions. Residue templates reconstruct the full heavy-atom
structure from the predicted residue identities, frames, and angles.
Receptor coordinates and identities remain fixed throughout generation,
although receptor features participate in the shared attention network.
During training, the native peptide's identities and geometric features are
masked in the static conditioning encoder.

The shared trunk contains six geometric update blocks. Each block combines
invariant point attention, two Transformer layers, a residue transition,
and rigid-frame updates; pair features are updated between blocks.
Residue and pair representations have dimensions 128 and 64, respectively.
The attention module uses eight heads, eight query/key points and twelve
value points per head. The Transformer uses four heads, a feed-forward
dimension of 128, and zero dropout. Separate prediction heads produce
residue logits and angular endpoints.

\paragraph{Topology and modality activation.}
Cyclization conditions enter through residual node and pair features encoding
the linkage type, endpoint identities, graph distances, and closure edges.
These additions retain the original linear positional features. Topology
gates and the added clock projection are initialized to zero, allowing
initialization from a pretrained PepFlow model~\citep{li2024pepflow}.
The default activation thresholds are
$(\tbb,\tseq,\tsc)=(0,0.5,0.8)$.
Before activation, peptide sequence inputs use an unknown-residue token
and side-chain inputs use fixed zero-angle encodings. Backbone updates
continue after both later modalities become active.
For terminal disulfide generation, the designated endpoint identities are
restricted to cysteine.

A frozen ProteinMPNN~\citep{dauparas2022proteinmpnn} evaluates the current
intermediate backbone together with the observed receptor. Its centered
log-probabilities enter the native sequence logits through the global
gate $g$, initialized to $0.1$. The prior is evaluated without gradients
through either its parameters or coordinate inputs. Its contribution is
enabled when the sequence clock is positive; sequence supervision starts
at $t\geq\tseq$. The analogous side-chain loss threshold is $t\geq\tsc$.

\paragraph{Training configuration.}
Training uses the curated CPSea-derived corpus described in the dataset
appendix. The shared generative network is initialized from a PepFlow
checkpoint trained on the same corpus and optimized for 30 additional
epochs. We use four GPUs with 32 examples per GPU, giving a nominal global
batch size of 128. Adam uses coefficients $(0.9,0.999)$, zero weight decay,
and fixed learning rates of $10^{-5}$ for inherited parameters and $10^{-4}$
for the added clock, topology, and prior-gate parameters. Global gradient
norms are clipped at 100. ProteinMPNN remains frozen. The reported
configuration uses the epoch-30 checkpoint.

For each example, the interpolation time is sampled uniformly from
$[0.01,0.99)$. Translation sources are centered Gaussian samples, rotations
are sampled uniformly, torsions use uniform circular sources, and sequence
carriers use Gaussian noise with scale 5. The endpoint sequence encoding
assigns $+5$ to the native category and $-5$ to other categories.
The loss weights for translation, rotation, reconstructed backbone atoms,
sequence cross-entropy, angular matching, auxiliary endpoint torsions,
HT closure, and SS closure are, respectively,
$0.5,0.5,0.25,1.0,1.0,0.5,0.25,0.25$.
Sequence cross-entropy is averaged over peptide residues and then over
sequence-active examples. Angular supervision uses the observed and
residue-specific torsion masks. SS closure supervision is activated with
the side-chain modality.

\paragraph{Numerical generation.}
The evaluated sampler uses 200 time points spanning $[0.01,1]$, with
categorical sequence draws during the trajectory and a final endpoint
prediction. To specify the source-dependent updates, let $x_t$ denote
peptide C$\alpha$ translations and $u_t$ the continuous sequence carrier,
with fixed sources $x_0,u_0$ and predicted endpoints
$\widehat{x}_1,\widehat{u}_1$. For a step of length $\Delta t$, define
$\Delta s_{\mathrm{seq}}=s_{\mathrm{seq}}(t+\Delta t)-s_{\mathrm{seq}}(t)$.
The updates are
\begin{equation}
\begin{aligned}
x_{t+\Delta t}&=x_t+(\widehat{x}_1-x_0)\Delta t,\\
u_{t+\Delta t}&=u_t+(\widehat{u}_1-u_0)\Delta s_{\mathrm{seq}}.
\end{aligned}
\label{eq:app_source_updates}
\end{equation}
The sequence endpoint is obtained by sampling the predicted categorical
distribution and encoding the sampled identity. Rotations and angles use
their respective geometric updates, with modality clocks controlling the
angular increments. The updated sequence carrier is sampled again to
provide categorical input to the next network evaluation. Thus sequence
discretization occurs throughout sampling, rather than only at its end.

\subsection{Benchmark Conditions and Candidate Budgets}
\label{app:benchmark_budgets}

\paragraph{Direct full-atom generation.}
The LNR benchmark contains 93 receptors: 37 OOD and 56 ID targets under
the receptor-sequence-identity split in
Section~\ref{sec:experimental_setup}. All methods receive the same
target-specific receptor and peptide length. Each baseline generates
32 candidates per target. \method{} generates 32 HT candidates and
32 terminal-SS candidates per target, reported as separate configurations.
Each configuration therefore contains 2,976 candidates in total,
including 1,184 OOD and 1,792 ID candidates. HT and SS outputs are not
pooled to construct a single \method{} result.
DiffPepBuilder and Genie~3 outputs use Rosetta side-chain packing before
the common full-atom evaluation, as indicated in the tables.
The topology assigned during post-processing is described below;
baseline models are not assumed to have identical native topology controls.

\paragraph{Predictor-assisted design.}
The cofolding benchmark comprises the 12 published HFGuidedDesign
targets~\citep{hu2026hfguideddesign}. Each reported configuration contributes
32 final candidates per target, or 384 candidates in total.
The \emph{Guide} flag denotes generation-time oracle feedback;
\emph{AFCyc optim.} denotes subsequent sequence-initialized optimization;
and \emph{Search} denotes the complete cascaded policy. Equal final-candidate counts do not imply equal oracle costs:
guided proposals, intermediate branches, and refinement updates are
separate contributions to inference compute. For \method{} with inference-time search, the search procedure is configured
with parameters $K=16$, $K_0=8$, $M=8$ (yielding $N=64$), and $N_0=32$.

\subsection{Physical Relaxation and Topology Handling}
\label{app:rosetta_protocol}

\paragraph{Common relaxation protocol.}
Generated or independently predicted complexes are processed with Rosetta
FastRelax using \texttt{ref2015}~\citep{alford2017rosetta}.
For scoring, the peptide and receptor are standardized as chains A and B,
respectively, with receptor numbering and coordinates mapped to the
corresponding target. The same preparation is applied across methods.
FastRelax performs five repeats of repacking and minimization and returns
one relaxed output per input. Both peptide and receptor backbone and
side-chain degrees of freedom are permitted to relax: the protocol applies
neither receptor-specific movement restrictions nor receptor coordinate
restraints. The receptor is therefore fixed during neural generation but
flexible during this shared physical post-processing step.

\begin{table}[t]
\centering
\small
\caption{Settings of the common Rosetta relaxation and interface analysis
protocol. Closure restraints act during relaxation and are excluded from
the reported energy scores.}
\label{tab:rosetta_parameters}
\begin{tabular}{@{}lp{0.60\linewidth}@{}}
\toprule
Component & Setting\\
\midrule
Energy function & \texttt{ref2015}\\
FastRelax repeats & 5\\
Minimizer iteration cap & 1,000 per minimization stage\\
Outputs & One relaxed structure per input\\
Receptor mobility & Backbone and side chains permitted to relax\\
Closure-restraint weight & 1.0; constraint ramping disabled\\
Separated-state repacking & Disabled\\
Interface geometry & Hydrogen bonds, shape complementarity, and Packstat\\
\bottomrule
\end{tabular}
\end{table}

\paragraph{Chemical closure.}
HT structures use a declared bond between the first residue's nitrogen
and the last residue's carbonyl carbon. Terminal SS structures use a
declared disulfide between the terminal cysteine sulfur atoms, with
side-chain repacking enabled. Let $d$ be the closing-atom distance,
$d_0$ its target length, and $\sigma$ the restraint width.
The harmonic contribution is
\begin{equation}
E_{\mathrm{closure}}=\left(\frac{d-d_0}{\sigma}\right)^2.
\label{eq:app_harmonic_closure}
\end{equation}
For HT closure, $(d_0,\sigma)=(1.33,0.02)\,\text{\AA}$;
for SS closure, $(d_0,\sigma)=(2.03,0.03)\,\text{\AA}$.
The atom-pair restraint has weight 1.0 throughout relaxation.
No additional closure-specific angle, planarity, or dihedral restraints
are introduced; the remaining geometry is governed by the declared
chemical topology and the standard score function.

\paragraph{Topology fidelity and unsuccessful outputs.}
\label{app:scoring_failure_policy}
For \method{}, relaxation follows the requested HT or SS topology.
An output that changes the requested linkage type is counted as a failure.
For baseline configurations processed without a prescribed linkage type,
terminal-cysteine sequences are first attempted as SS structures;
unsuccessful SS attempts can fall back to HT closure. Other sequences are
processed as HT structures. The SS-to-HT fallback is triggered when
SS closure cannot be completed, including a relaxed sulfur distance
greater than $d_0+3\sigma$. A baseline output is accepted only if a
permitted closure attempt succeeds; failed outputs remain in the
generation denominator. This fallback policy is part of the evaluation
adapter, rather than a claim about each baseline's intrinsic capabilities.

\paragraph{Unrestrained scoring.}
After relaxation, closure-restraint contributions are excluded from
the reported \texttt{ref2015} scores. Total-complex energy supplies the
Stability metric. InterfaceAnalyzer evaluates the peptide--receptor
interface with separated-state repacking disabled; hydrogen-bond,
surface-area, shape-complementarity, and packing calculations are
enabled as required by the reported tables.
Its cross-interface score, \texttt{dG\_cross}, and separated-state score,
\texttt{dG\_separated}, are distinct quantities.
The former sums cross-interface energy contributions; the latter compares
the bound and separated states. These empirical energies are expressed
in Rosetta energy units (REU).

\subsection{Direct Full-Atom Generation Metrics}
\label{app:full_atom_metrics}

The full-atom tables report Cyc., Stability, $D_{\mathrm{seq}}$,
$D_{\mathrm{struct}}$, and Rosetta $dG$. Following the benchmark convention
adapted from APCyc~\citep{zhao2026apcyc}, all quantities are evaluated within
each receptor before aggregation over the specified All, OOD, or ID split.

\paragraph{Cyclization success candidates.}
A candidate is considered successfully cyclized if the prescribed cyclization bond is successfully formed under the topology-handling protocol described above in \eqref{eq:app_harmonic_closure}. Cyc. is defined as the percentage of all generated candidates satisfying this bond-length criterion, with all unsuccessful outputs included in the denominator.

\paragraph{Energetic metrics.}
Stability is the total \texttt{ref2015} score of the relaxed complex
without closure-restraint contributions. The Rosetta $dG$ column uses
the interface \texttt{dG\_cross} score. Lower values are preferred for
both metrics. Each is averaged over valid candidates for a receptor,
and the resulting receptor-wise means are averaged across the split.
These are conditional energy summaries of valid outputs, rather than
unconditional estimates of binding affinity.

\paragraph{Sequence and structural diversity.}
Valid candidates for a receptor are clustered separately in sequence
and structure space using single-linkage clustering. Sequence distance
is one minus sequence similarity, with threshold 0.4; structural distance
is peptide C$\alpha$ RMSD, with threshold $4.0\,\text{\AA}$.
For a receptor with $n_{\mathrm{valid}}>0$ valid candidates, let
$c_{\mathrm{seq}}$ and $c_{\mathrm{struct}}$ be the respective numbers
of clusters. The reported diversities are
\begin{equation}
D_{\mathrm{seq}}=\frac{c_{\mathrm{seq}}}{n_{\mathrm{valid}}},
\qquad
D_{\mathrm{struct}}=\frac{c_{\mathrm{struct}}}{n_{\mathrm{valid}}}.
\label{eq:app_full_atom_diversity}
\end{equation}
Higher values indicate less redundancy within the evaluated candidate set.
These cluster fractions are computed independently for each receptor
and then averaged over the reported split. Cyc. retains all attempted
samples in its denominator, whereas these diversity and energy summaries
use the valid subset.

\subsection{Cofolding Evaluation and Reported Metrics}
\label{app:cofolding_metrics}

\paragraph{Independent structure prediction.}
Final discrete sequences are independently evaluated with five
AlphaFold-Multimer-v3 models and three recycles per model using cyclic
peptide positional encoding. The input comprises the receptor, its MSA,
the peptide sequence, and the designated cyclization configuration.
Generated peptide coordinates are not supplied as structural templates.
Prediction metrics are first averaged over the five models for each
candidate. These independent evaluations, rather than scores from
intermediate optimization states, determine the confidence tables.

\paragraph{Prediction confidence.}
We report ipTM, the predicted interface TM-score, and peptide pLDDT,
the average per-residue confidence over the peptide chain on a 0--100
scale. Interface iPAE summarizes predicted alignment error across the
peptide--receptor interface; hotspot iPAE restricts the receptor-side
evaluation to the designated hotspot residues. Both raw iPAE values
are divided by 31 before reporting or inclusion in composite scores.
We denote these normalized quantities by
$\widetilde{\mathrm{iPAE}}_{\mathrm{interface}}$ and
$\widetilde{\mathrm{iPAE}}_{\mathrm{hotspot}}$.
Higher ipTM and peptide pLDDT, and lower iPAE, indicate greater predicted
confidence.

\paragraph{Composite scores.}
Let $S$ denote the interface-based score used in the main comparison,
and $S_{\mathrm{hotspot}}$ the hotspot-based score used in the search
comparison. They are defined by
\begin{equation}
\begin{aligned}
S &=
\mathrm{ipTM}
+\frac{\mathrm{pLDDT}_{\mathrm{pep}}}{100}
+1-\widetilde{\mathrm{iPAE}}_{\mathrm{interface}},\\
S_{\mathrm{hotspot}} &=
\mathrm{ipTM}
+\frac{\mathrm{pLDDT}_{\mathrm{pep}}}{100}
+1-\widetilde{\mathrm{iPAE}}_{\mathrm{hotspot}}.
\end{aligned}
\label{eq:app_composite_scores}
\end{equation}
Here $\mathrm{pLDDT}_{\mathrm{pep}}$ is peptide pLDDT.
The tables use these sums without an additional factor of $1/3$.
Hotspot iPAE is reported alongside the other metrics in the main
comparison but does not enter its interface-based score $S$.

\paragraph{Strict hits.}
A candidate is a strict hit if its model-averaged metrics satisfy all
three conditions:
\begin{equation}
\mathrm{ipTM}\geq0.7,\qquad
\mathrm{pLDDT}_{\mathrm{pep}}\geq70,\qquad
\widetilde{\mathrm{iPAE}}_{\mathrm{interface}}\leq0.3.
\label{eq:app_strict_hit}
\end{equation}
The count and percentage are computed over all 384 final candidates.
The same interface-iPAE threshold is used in the search comparison,
even when its ranking score is $S_{\mathrm{hotspot}}$.

\paragraph{Mean and Best summaries.}
Mean averages the candidate-level metrics over all 384 sequences.
For Best, one candidate is selected from the 32 candidates for each
target using the relevant composite score: $S$ in the main comparison
and $S_{\mathrm{hotspot}}$ in the search comparison. All Best columns
then summarize these same 12 selected candidates. In particular,
Best does not select a different sequence for each metric or a
different AlphaFold model for each candidate.
The main and search comparisons use separate random-seed sets;
their baseline values need not coincide.

\paragraph{Rosetta interface metrics.}
Independent cofolded structures are relaxed with the common protocol
in Appendix~\ref{app:rosetta_protocol}. Let $E_{\mathrm{bound}}$ be
the relaxed complex score and $E_{\mathrm{separated}}$ the score after
separating the peptide and receptor without repacking. The cofolding
energy column reports
\begin{equation}
\Delta G_{\mathrm{sep}}
=E_{\mathrm{bound}}-E_{\mathrm{separated}}
=\texttt{dG\_separated}.
\label{eq:app_separated_energy}
\end{equation}
Buried solvent-accessible surface area is denoted
$\Delta\mathrm{SASA}$ (the \texttt{dSASA\_int} output, in
$\text{\AA}^2$). Area-normalized energy is
$100\Delta G_{\mathrm{sep}}/\Delta\mathrm{SASA}$,
in REU per $100\,\text{\AA}^2$.
Lower raw and normalized energies are preferred.
The remaining table columns are interface hydrogen-bond count
(\texttt{hbonds\_int}), buried unsatisfied polar groups
(\texttt{delta\_unsatHbonds}, labeled \emph{Unsat.\ H-bonds},
measuring the change upon interface formation),
surface shape complementarity (SC), and interface packing quality
(Packstat). Fewer buried unsatisfied polar groups, and higher
hydrogen-bond counts, SC, and Packstat are preferred in the reported
comparisons. Buried surface area measures interface extent and is
reported separately from its energy density.

\paragraph{Physical-score aggregation.}
The Rosetta analysis uses four sampled candidates per target for
the Mean summaries and the AF-selected Top-1 candidate for Best.
Each candidate's Rosetta metrics are averaged across its five
independently predicted and relaxed models, followed by aggregation
within and across targets. AF Top-1 selection uses the corresponding
confidence composite, rather than selecting the lowest Rosetta
energy retrospectively. Thus the Rosetta summaries and the confidence
tables have different evaluation sample counts, as stated in the
interface-analysis figure. Predictor confidence and Rosetta scores
serve as computational design criteria, not measurements of
experimental affinity.

\subsection{Generation-Time Guidance and AfCycDesign Refinement}
\label{app:afcycdesign_refinement_protocol}

\paragraph{Oracle-guided sequence proposals.}
The external guide used by \method{} and HFGuidedDesign~\citep{hu2026hfguideddesign} evaluates current peptide
sequences and their single-residue alternatives with a cyclic
AfCycDesign oracle~\citep{rettie2025afcycdesign}, following the
sequence-guidance principle of HFGuidedDesign.
For efficiency, this online oracle uses
\texttt{model\_1\_multimer\_v3}, one recycle, cyclic positional offsets
of type 2, the receptor MSA, and the designated hotspots.
It receives sequences rather than the generated peptide coordinates,
and its predicted coordinates do not overwrite the generative state.

Guidance is applied at four events during the 200-point trajectory:
sampling steps 125, 150, 175, and 199, corresponding approximately
to global times $0.627,0.751,0.876,0.995$.
At each event and for each candidate, the three positions with
highest categorical entropy are considered. At each selected position,
up to three alternative amino acids are selected from the model's
high-probability categories, excluding the current identity.
Every proposal changes only one residue. Thus an event evaluates
at most nine alternatives, together with the unchanged sequence;
the setting does not denote three simultaneous mutations.
The guidance base scale is
$\lambda_{\mathrm{guide}}=50.0$.

\begin{table}[t]
\centering
\small
\caption{Generation-time AfCycDesign guidance settings for \method{}.
Position and alternative limits apply per candidate and per event.}
\label{tab:app_guide_settings}
\begin{tabular}{@{}lp{0.53\linewidth}@{}}
\toprule
Setting & Value\\
\midrule
Base guidance scale & 50.0\\
Maximum probed positions & 3\\
Alternatives per position & 3\\
Number of events & 4\\
Event steps & 125, 150, 175, 199 of the 200-point schedule\\
Maximum evaluations per event & 9 single-residue alternatives plus 1 unchanged sequence\\
Online predictor & One AlphaFold-Multimer-v3 model, one recycle\\
Bias retention at each event & 0.9\\
\bottomrule
\end{tabular}
\end{table}

\paragraph{Score-dependent probability bias.}
For the online guide, $S_{\mathrm{hotspot}}$ in
\eqref{eq:app_composite_scores} is computed from the online oracle.
Let $e_{\mathrm{base}}=|3-S_{\mathrm{hotspot}}^{\mathrm{base}}|$
and $e_{\mathrm{mut}}=|3-S_{\mathrm{hotspot}}^{\mathrm{mut}}|$
be the score gaps for the current sequence and a single-residue
alternative. At global time $t$, the proposed residue receives
the additive bias
\begin{equation}
\Delta b=
\begin{cases}
\displaystyle
\lambda_{\mathrm{guide}}t
\frac{e_{\mathrm{base}}-e_{\mathrm{mut}}}
     {\max(e_{\mathrm{base}},0.1)},
&e_{\mathrm{mut}}<e_{\mathrm{base}},\\[6pt]
\displaystyle
-0.3\lambda_{\mathrm{guide}}t
\frac{e_{\mathrm{mut}}-e_{\mathrm{base}}}
     {\max(e_{\mathrm{mut}},0.1)},
&\text{otherwise}.
\end{cases}
\label{eq:app_online_bias}
\end{equation}
Here $\Delta b$ is accumulated at the tested position and amino-acid
category. Existing accumulated biases are multiplied by 0.9 at each
guidance event before new contributions are added.
The base scale therefore controls score-dependent logit adjustments;
it is neither an optimizer learning rate nor a direct multiplier
of backbone coordinates.

The accumulated bias modifies subsequent sequence logits, the
updated continuous sequence carrier, and the final sequence prediction.
Its addition to the sequence carrier is not multiplied by the local
time increment. The biased categorical sequence re-enters the shared
trunk at subsequent steps and can thereby affect later backbone and
side-chain updates. This is a discrete proposal-and-evaluation guide,
without differentiation through the online folding predictor.

\paragraph{Post-generation optimization.}
In the fixed-budget comparisons, configurations marked
\emph{AFCyc optim.=Yes} initialize AfCycDesign with the generated
sequence and run the specified 160-update refinement protocol using
ipTM and hotspot-iPAE objectives.
Configurations marked \emph{AFCyc optim.=No} proceed directly to
independent final evaluation. This optimization stage is distinct
from the four generation-time guidance events, and the generated
coordinates are not used as pose initializations.
Final discrete sequences are reevaluated with the five-model,
three-recycle protocol above before confidence aggregation.

\paragraph{Cascaded allocation and early stopping.}
The search configuration uses the integrated policy of
Section~\ref{sec:cascaded_search}: $K$ backbone prefixes are
probabilistically reduced to $K_0$ roots, each root produces $M$
guided continuations, and $N=K_0M$ proposals receive five initial
AfCycDesign updates targeting ipTM. Of these, $N_0<N$ are
probabilistically retained using the empirical screening score
$Q=0.75(1-\widetilde{\mathrm{iPAE}}_{\mathrm{hotspot}})
+0.25\,\mathrm{ipTM}$.
The retained candidates receive further optimization, with
early stopping after five consecutive soft-stage updates without
improvement in the composite score reported by AfCycDesign.
Thus early retention uses $Q$, whereas stopping uses the optimizer's
reported composite score. The resulting discrete sequences undergo
the same independent final evaluation as other configurations.
Search does not require every proposal to exhaust the fixed-budget
refinement schedule. Oracle calls during guidance, the five-update
screen, subsequent refinement, and independent evaluation all
contribute to its total computational cost.

%% file: tables/app_lnr.tex
% Preamble:
% \usepackage{booktabs,multirow}

\begin{table}[h]
\centering
\caption{\textbf{Direct full-atom cyclic peptide generation on LNR.}
\method (SS) achieves the highest cyclization success across All/OOD/ID,
reaching 99.1\% overall, while both \method{} variants reach 100\% on ID receptors.
\method (HT) achieves the best overall Stability and Rosetta $dG$ scores,
together with the highest sequence diversity on All and OOD.
\method (SS) leads structural diversity across all three splits.
Bold and underlined values indicate the best and second-best results,
respectively, within each split.}
\label{tab:app_full_atom_results}

\begingroup
\normalsize
\setlength{\tabcolsep}{2.5pt}
\renewcommand{\arraystretch}{1.15}

% ---------- Cyclization, stability, and binding energy ----------
\begin{tabular*}{\textwidth}{@{\extracolsep{\fill}}l*{9}{r}@{}}
\toprule
\multirow{2}{*}{Method}
& \multicolumn{3}{c}{Cyc. (\%) $\uparrow$}
& \multicolumn{3}{c}{Stability $\downarrow$}
& \multicolumn{3}{c}{Rosetta $dG\downarrow$} \\
\cmidrule(lr){2-4}
\cmidrule(lr){5-7}
\cmidrule(lr){8-10}
& All & \textsc{ood} & \textsc{id}
& All & \textsc{ood} & \textsc{id}
& All & \textsc{ood} & \textsc{id} \\
\midrule

PepFlow
& 94.0 & 92.9 & 94.6
& \underline{-737.76} & \textbf{-961.10} & -590.27
& -20.23 & -24.78 & -17.24 \\

PepGLAD
& 91.4 & \underline{97.5} & 87.4
& -635.44 & -720.20 & -579.45
& -14.33 & -15.59 & -13.50 \\

DiffPepBuilder$^{\dagger}$
& 93.7 & 92.2 & 94.6
& -719.75 & -939.25 & -574.79
& -19.09 & -20.66 & -18.05 \\

APM
& 98.9 & 97.3 & \textbf{100.0}
& -645.42 & -869.51 & -497.36
& -19.90 & \underline{-25.06} & -16.50 \\

Genie~3$^{\dagger}$
& 82.4 & 83.6 & 81.6
& -530.93 & -726.29 & -399.28
& -11.73 & -14.07 & -10.16 \\

APCyc
& 82.5 & 95.4 & 73.9
& -654.07 & -713.21 & \textbf{-616.07}
& \underline{-21.07} & -22.79 & \textbf{-19.91} \\

\midrule
\textbf{\method (HT)}
& \underline{99.0} & \underline{97.5} & \textbf{100.0}
& \textbf{-737.87} & \underline{-940.05} & \underline{-604.28}
& \textbf{-22.82} & \textbf{-28.37} & \underline{-19.14} \\

\textbf{\method (SS)}
& \textbf{99.1} & \textbf{97.6} & \textbf{100.0}
& -627.74 & -807.36 & -509.06
& -17.69 & -21.17 & -15.39 \\

\bottomrule
\end{tabular*}

\par\vspace{0.7em}

% ---------- Sequence and structural diversity ----------
\begin{tabular*}{\textwidth}{@{\extracolsep{\fill}}l*{6}{r}@{}}
\toprule
\multirow{2}{*}{Method}
& \multicolumn{3}{c}{$D_{\mathrm{seq}}\uparrow$}
& \multicolumn{3}{c}{$D_{\mathrm{struct}}\uparrow$} \\
\cmidrule(lr){2-4}
\cmidrule(lr){5-7}
& All & \textsc{ood} & \textsc{id}
& All & \textsc{ood} & \textsc{id} \\
\midrule

PepFlow
& 0.646 & 0.686 & 0.619
& 0.781 & 0.948 & 0.670 \\

PepGLAD
& \underline{0.881} & \underline{0.895} & \textbf{0.872}
& \underline{0.862} & 0.932 & \underline{0.817} \\

DiffPepBuilder$^{\dagger}$
& 0.349 & 0.446 & 0.286
& 0.830 & \underline{0.979} & 0.732 \\

APM
& 0.141 & 0.173 & 0.120
& 0.827 & 0.938 & 0.754 \\

Genie~3$^{\dagger}$
& 0.247 & 0.253 & 0.242
& 0.807 & 0.823 & 0.796 \\

APCyc
& 0.845 & 0.867 & 0.831
& 0.762 & 0.849 & 0.705 \\

\midrule
\textbf{\method (HT)}
& \textbf{0.883} & \textbf{0.940} & \underline{0.845}
& 0.648 & 0.934 & 0.459 \\

\textbf{\method (SS)}
& 0.389 & 0.668 & 0.205
& \textbf{0.962} & \textbf{0.991} & \textbf{0.943} \\

\bottomrule
\end{tabular*}

\par\vspace{0.35em}
\parbox{\textwidth}{\footnotesize
$^{\dagger}$ DiffPepBuilder and Genie~3 require the Rosetta side-chain packer.
}
\endgroup

\end{table}

%% file: tables/app_cofold.tex
% Appendix tables for the existing ICLR document.
% Required preamble packages: \usepackage{booktabs,multirow,amsmath}
% Insert after \appendix with: \input{appendix_cofolding_tables}
% Tables use full \textwidth, normal body size, and page floats.
% The four labels are separate from any tables retained in the main text.
% Scaling baselines, Search/FKS names, and score definitions follow their
% respective source tables; they are not silently pooled with cofolding.

\begin{table}[p]
\centering
\caption{\textbf{Detailed cofolding performance across 12 targets.} Each configuration contains 384 candidates (32 per target), independently evaluated with five AlphaFold-Multimer-v3 models and three recycles. Mean aggregates all candidates; Best averages the target-wise Top-1 candidates selected by composite score.}
\label{tab:app_cofold_main}
\begingroup
\normalsize
\setlength{\tabcolsep}{2pt}
\renewcommand{\arraystretch}{1.12}
% Source: cofold_main.tex
\noindent\makebox[\textwidth][l]{\textbf{(a) Predicted complex quality}}\par
\vspace{3pt}
\begin{tabular*}{\textwidth}{@{\extracolsep{\fill}}lcc*{6}{r}@{}}
\toprule
\multirow{2}{*}{Method} & \multirow{2}{*}{Guide} & \multirow{2}{*}{\shortstack{AFCyc\\optim.}} & \multicolumn{2}{c}{ipTM $\uparrow$} & \multicolumn{2}{c}{Peptide pLDDT $\uparrow$} & \multicolumn{2}{c}{Interface iPAE $\downarrow$} \\
\cmidrule(lr){4-5}
\cmidrule(lr){6-7}
\cmidrule(lr){8-9}
 &  &  & Mean & Best & Mean & Best & Mean & Best \\
\midrule
\multirow{2}{*}{RFPeptides} & -- & No & 0.516 & 0.763 & 47.97 & 71.42 & 0.434 & 0.262 \\
 & -- & Yes & 0.666 & 0.877 & 61.21 & 84.90 & \underline{0.334} & 0.170 \\
\midrule
\multirow{4}{*}{HFGuidedDesign} & No & No & 0.470 & 0.627 & 41.65 & 57.39 & 0.477 & 0.365 \\
 & No & Yes & 0.662 & 0.886 & 60.27 & 85.57 & 0.341 & 0.167 \\
 & Yes & No & 0.519 & 0.720 & 46.38 & 67.36 & 0.442 & 0.295 \\
 & Yes & Yes & \underline{0.674} & \underline{0.889} & \underline{61.60} & \underline{86.57} & \underline{0.334} & \underline{0.161} \\
\midrule
\multirow{4}{*}{\method{}} & No & No & 0.448 & 0.634 & 39.76 & 58.29 & 0.493 & 0.367 \\
 & No & Yes & 0.648 & 0.886 & 58.81 & 85.99 & 0.352 & 0.165 \\
 & Yes & No & 0.502 & 0.705 & 44.69 & 65.60 & 0.454 & 0.306 \\
 & Yes & Yes & \textbf{0.683} & \textbf{0.903} & \textbf{62.50} & \textbf{87.26} & \textbf{0.326} & \textbf{0.146} \\
\bottomrule
\end{tabular*}

\par\vspace{0.7em}

% Source: cofold_main.tex
\noindent\makebox[\textwidth][l]{\textbf{(b) Hotspot alignment, composite score, and strict hits}}\par
\vspace{3pt}
\begin{tabular*}{\textwidth}{@{\extracolsep{\fill}}lcc*{6}{r}@{}}
\toprule
\multirow{2}{*}{Method} & \multirow{2}{*}{Guide} & \multirow{2}{*}{\shortstack{AFCyc\\optim.}} & \multicolumn{2}{c}{Hotspot iPAE $\downarrow$} & \multicolumn{2}{c}{Composite $S\uparrow$} & \multicolumn{2}{c}{Strict hits $\uparrow$} \\
\cmidrule(lr){4-5}
\cmidrule(lr){6-7}
\cmidrule(lr){8-9}
 &  &  & Mean & Best & Mean & Best & Count & Rate (\%) \\
\midrule
\multirow{2}{*}{RFPeptides} & -- & No & 0.318 & 0.182 & 1.562 & 2.215 & 17/384 & 4.4 \\
 & -- & Yes & \underline{0.234} & 0.115 & 1.944 & 2.556 & 112/384 & 29.2 \\
\midrule
\multirow{4}{*}{HFGuidedDesign} & No & No & 0.369 & 0.283 & 1.410 & 1.836 & 2/384 & 0.5 \\
 & No & Yes & 0.238 & 0.105 & 1.923 & 2.575 & \underline{128}/\underline{384} & \underline{33.3} \\
 & Yes & No & 0.336 & 0.223 & 1.541 & 2.099 & 9/384 & 2.3 \\
 & Yes & Yes & \underline{0.234} & 0.104 & \underline{1.957} & \underline{2.593} & \underline{128}/\underline{384} & \underline{33.3} \\
\midrule
\multirow{4}{*}{\method{}} & No & No & 0.376 & 0.268 & 1.353 & 1.851 & 4/384 & 1.0 \\
 & No & Yes & 0.248 & \underline{0.103} & 1.885 & 2.581 & 93/384 & 24.2 \\
 & Yes & No & 0.332 & 0.212 & 1.495 & 2.055 & 14/384 & 3.6 \\
 & Yes & Yes & \textbf{0.228} & \textbf{0.091} & \textbf{1.982} & \textbf{2.629} & \textbf{141/384} & \textbf{36.7} \\
\bottomrule
\end{tabular*}
\par\vspace{4pt}
\parbox{\textwidth}{\footnotesize\raggedright
\textit{Notes.} Bold and underlined values indicate the best and second-best results across the ten configurations, respectively, separately for each numeric column; ties share the same rank. Best uses the same 12 candidates for every metric: for each target, select the candidate with the highest composite score among 32, then average across targets. These are not independently selected metric-wise extrema or a global maximum. Each candidate's metrics are averaged over five models. Guide denotes generation-time guidance; AFCyc optim. denotes subsequent sequence-initialized gradient optimization. \method{} uses hotspot persistent-transition guidance. Both iPAE metrics are normalized by 31; hotspot iPAE is not included in $S$. Strict hits are counted over all 384 candidates and require $\mathrm{ipTM}\geq0.7$, peptide pLDDT $\geq70$, and interface iPAE $\leq0.3$. Optimization-trajectory scores and pose-initialized arms are excluded.
}
\endgroup
\end{table}

\begin{table}[p]
\centering
\caption{\textbf{Detailed Rosetta interface metrics for cofolding across 12 targets.} Mean and Best are shown separately for each of the ten generation and optimization configurations.}
\label{tab:app_cofold_rosetta}
\begingroup
\normalsize
\setlength{\tabcolsep}{2pt}
\renewcommand{\arraystretch}{1.12}
% Source: cofold_rosetta.tex
\noindent\makebox[\textwidth][l]{\textbf{(a) Interface energetics and buried surface area}}\par
\vspace{3pt}
\begin{tabular*}{\textwidth}{@{\extracolsep{\fill}}lcc*{6}{r}@{}}
\toprule
\multirow{3}{*}{Method} & \multirow{3}{*}{Guide} & \multirow{3}{*}{\shortstack{AFCyc\\optim.}} & \multicolumn{2}{c}{$\Delta G_{\mathrm{sep}}\downarrow$} & \multicolumn{2}{c}{$100\Delta G_{\mathrm{sep}}/\Delta\mathrm{SASA}\downarrow$} & \multicolumn{2}{c}{$\Delta\mathrm{SASA}\uparrow$} \\
 &  &  & \multicolumn{2}{c}{(REU)} & \multicolumn{2}{c}{(REU/$100\,\text{\AA}^{2}$)} & \multicolumn{2}{c}{($\text{\AA}^{2}$)} \\
\cmidrule(lr){4-5}
\cmidrule(lr){6-7}
\cmidrule(lr){8-9}
 &  &  & Mean & Best & Mean & Best & Mean & Best \\
\midrule
\multirow{2}{*}{RFPeptides} & -- & No & -30.289 & -32.745 & -2.824 & -3.079 & 1062.6 & 1071.8 \\
 & -- & Yes & \underline{-35.891} & -40.984 & \underline{-3.170} & \underline{-3.494} & 1123.4 & 1174.1 \\
\midrule
\multirow{4}{*}{HFGuidedDesign} & No & No & -31.437 & -34.868 & -2.758 & -2.942 & 1128.1 & 1166.5 \\
 & No & Yes & -34.938 & -40.634 & -3.037 & -3.412 & 1138.1 & 1186.4 \\
 & Yes & No & -34.007 & -37.760 & -2.952 & -3.150 & \textbf{1149.0} & 1198.8 \\
 & Yes & Yes & -35.882 & \textbf{-42.806} & -3.121 & -3.447 & \underline{1145.7} & \textbf{1249.5} \\
\midrule
\multirow{4}{*}{\method{}} & No & No & -29.865 & -37.771 & -2.831 & -3.435 & 1045.1 & 1101.8 \\
 & No & Yes & -35.083 & -40.063 & -3.078 & -3.365 & 1133.3 & \underline{1202.0} \\
 & Yes & No & -32.730 & -36.181 & -2.982 & -3.338 & 1087.8 & 1082.9 \\
 & Yes & Yes & \textbf{-38.386} & \underline{-42.298} & \textbf{-3.396} & \textbf{-3.524} & 1091.5 & 1154.6 \\
\bottomrule
\end{tabular*}

\par\vspace{0.7em}

% Source: cofold_rosetta.tex
\noindent\makebox[\textwidth][l]{\textbf{(b) Hydrogen bonding and interface geometry}}\par
\vspace{3pt}
\begin{tabular*}{\textwidth}{@{\extracolsep{\fill}}lcc*{8}{r}@{}}
\toprule
\multirow{2}{*}{Method} & \multirow{2}{*}{Guide} & \multirow{2}{*}{\shortstack{AFCyc\\optim.}} & \multicolumn{2}{c}{\shortstack{Interface\\H-bonds $\uparrow$}} & \multicolumn{2}{c}{\shortstack{Unsat.\\H-bonds $\downarrow$}} & \multicolumn{2}{c}{SC $\uparrow$} & \multicolumn{2}{c}{Packstat $\uparrow$} \\
\cmidrule(lr){4-5}
\cmidrule(lr){6-7}
\cmidrule(lr){8-9}
\cmidrule(lr){10-11}
 &  &  & Mean & Best & Mean & Best & Mean & Best & Mean & Best \\
\midrule
\multirow{2}{*}{RFPeptides} & -- & No & 4.083 & 5.100 & 7.367 & \underline{6.183} & 0.653 & 0.685 & 0.635 & 0.640 \\
 & -- & Yes & 4.604 & 5.267 & \underline{6.650} & 6.567 & 0.684 & 0.689 & 0.644 & 0.654 \\
\midrule
\multirow{4}{*}{HFGuidedDesign} & No & No & 3.488 & 4.550 & 7.779 & 6.950 & 0.663 & 0.668 & 0.638 & 0.642 \\
 & No & Yes & \underline{4.729} & \underline{5.550} & 7.250 & 8.683 & 0.685 & 0.701 & 0.646 & \textbf{0.666} \\
 & Yes & No & 3.992 & 4.767 & 7.742 & 8.267 & 0.661 & 0.683 & 0.641 & 0.658 \\
 & Yes & Yes & 4.592 & 5.383 & 7.142 & 6.467 & \underline{0.686} & \textbf{0.709} & \underline{0.648} & \underline{0.661} \\
\midrule
\multirow{4}{*}{\method{}} & No & No & 3.358 & 4.667 & 7.875 & 7.550 & 0.664 & 0.698 & 0.644 & 0.655 \\
 & No & Yes & 4.354 & 5.200 & 7.454 & 7.483 & 0.681 & \underline{0.702} & 0.641 & 0.657 \\
 & Yes & No & 3.946 & 4.700 & 7.650 & 6.550 & 0.672 & 0.698 & 0.635 & 0.645 \\
 & Yes & Yes & \textbf{5.658} & \textbf{6.050} & \textbf{5.433} & \textbf{5.200} & \textbf{0.702} & 0.699 & \textbf{0.649} & \textbf{0.666} \\
\bottomrule
\end{tabular*}
\par\vspace{4pt}
\parbox{\textwidth}{\footnotesize\raggedright
\textit{Notes.} Bold and underlined values indicate the best and second-best results across the ten configurations, respectively, separately for each numeric column; ties share the same rank. Guide denotes generation-time guidance; AFCyc optim. denotes subsequent sequence-initialized gradient optimization.
}
\endgroup
\end{table}

%% file: tables/app_ablation_lnr.tex
% Preamble:
% \usepackage{booktabs,multirow}

\begin{table}[h]
\centering
\caption{\textbf{Ablations on LNR.}
The default $\tseq=0.5$ achieves the highest overall cyclization success
within each topology and the best HT Stability and Rosetta $dG$ scores.
Freezing backbone motion substantially worsens Stability for both topologies,
while removing the MPNN prior weakens interface scores and removing the delay
produces topology-dependent effects.
Earlier activation at $\tseq=0.2$ improves overall sequence and structure diversity,
indicating a trade-off between diversity and interface scores.
All/OOD/ID correspond to 93/37/56 receptors.}
\label{tab:app_ablation}

\begingroup
\normalsize
\setlength{\tabcolsep}{2pt}
\renewcommand{\arraystretch}{1.15}

% ---------- Cyclization, stability, and binding energy ----------
\begin{tabular*}{\textwidth}{@{\extracolsep{\fill}}ll*{9}{r}@{}}
\toprule
\multirow{2}{*}{Inference setting}
& \multirow{2}{*}{Topo.}
& \multicolumn{3}{c}{Cyc. (\%) $\uparrow$}
& \multicolumn{3}{c}{Stability $\downarrow$}
& \multicolumn{3}{c}{Rosetta $dG\downarrow$} \\
\cmidrule(lr){3-5}
\cmidrule(lr){6-8}
\cmidrule(lr){9-11}
& & All & \textsc{ood} & \textsc{id}
& All & \textsc{ood} & \textsc{id}
& All & \textsc{ood} & \textsc{id} \\
\midrule

\multirow{2}{*}{w/o coupling} & HT
& 98.5 & 97.1 & 99.5
& -109.23 & 112.60 & -255.79
& -9.59 & -4.54 & -12.92 \\

& SS
& 98.6 & 97.1 & 99.6
& -217.19 & -167.87 & -249.78
& -10.97 & -7.97 & -12.95 \\

\midrule
\multirow{2}{*}{w/o MPNN prior} & HT
& 98.8 & 97.3 & 99.8
& -715.25 & \underline{-914.12} & -583.86
& \underline{-21.42} & \underline{-27.56} & \underline{-17.36} \\

& SS
& 98.7 & 97.1 & 99.7
& -624.71 & -808.71 & -503.13
& -17.37 & -18.88 & -16.37 \\

\midrule
\multirow{2}{*}{w/o delay, $\tseq=0$} & HT
& 98.4 & 97.1 & 99.3
& -716.55 & -905.72 & -591.57
& -18.46 & -20.40 & -17.17 \\

& SS
& 98.7 & 97.2 & 99.7
& -657.66 & -838.82 & -537.96
& -17.99 & -19.61 & -16.92 \\

\midrule
\multirow{2}{*}{$\tseq=0.2$} & HT
& 98.9 & \textbf{97.6} & 99.8
& \underline{-720.54} & -913.84 & \underline{-592.83}
& -19.12 & -23.38 & -16.30 \\

& SS
& 98.9 & \underline{97.5} & \underline{99.9}
& -632.35 & -812.85 & -513.08
& -17.64 & -19.59 & -16.36 \\

\midrule
\multirow{2}{*}{$\tseq=0.5$} & HT
& \underline{99.0} & \underline{97.5} & \textbf{100.0}
& \textbf{-737.87} & \textbf{-940.05} & \textbf{-604.28}
& \textbf{-22.82} & \textbf{-28.37} & \textbf{-19.14} \\

& SS
& \textbf{99.1} & \textbf{97.6} & \textbf{100.0}
& -627.74 & -807.36 & -509.06
& -17.69 & -21.17 & -15.39 \\

\bottomrule
\end{tabular*}

\par\vspace{0.7em}

% ---------- Sequence and structural diversity ----------
\begin{tabular*}{\textwidth}{@{\extracolsep{\fill}}ll*{6}{r}@{}}
\toprule
\multirow{2}{*}{Inference setting}
& \multirow{2}{*}{Topo.}
& \multicolumn{3}{c}{$D_{\mathrm{seq}}\uparrow$}
& \multicolumn{3}{c}{$D_{\mathrm{struct}}\uparrow$} \\
\cmidrule(lr){3-5}
\cmidrule(lr){6-8}
& & All & \textsc{ood} & \textsc{id}
& All & \textsc{ood} & \textsc{id} \\
\midrule

\multirow{2}{*}{w/o coupling} & HT
& 0.869 & \underline{0.937} & 0.824
& 0.636 & 0.933 & 0.439 \\

& SS
& 0.382 & 0.639 & 0.213
& 0.957 & 0.989 & \underline{0.936} \\

\midrule
\multirow{2}{*}{w/o MPNN prior} & HT
& 0.875 & 0.933 & 0.836
& 0.634 & 0.948 & 0.427 \\

& SS
& 0.417 & 0.639 & 0.271
& 0.941 & 0.948 & \underline{0.936} \\

\midrule
\multirow{2}{*}{w/o delay, $\tseq=0$} & HT
& 0.871 & 0.934 & 0.829
& 0.647 & 0.940 & 0.453 \\

& SS
& 0.398 & 0.668 & 0.219
& 0.947 & 0.982 & 0.924 \\

\midrule
\multirow{2}{*}{$\tseq=0.2$} & HT
& \textbf{0.918} & \textbf{0.940} & \textbf{0.903}
& 0.727 & 0.934 & 0.590 \\

& SS
& 0.391 & 0.672 & 0.205
& \textbf{0.966} & \textbf{1.000} & \textbf{0.943} \\

\midrule
\multirow{2}{*}{$\tseq=0.5$} & HT
& \underline{0.883} & \textbf{0.940} & \underline{0.845}
& 0.648 & 0.934 & 0.459 \\

& SS
& 0.389 & 0.668 & 0.205
& \underline{0.962} & \underline{0.991} & \textbf{0.943} \\

\bottomrule
\end{tabular*}
\endgroup

\end{table}

%% file: tables/app_cofold_scaling.tex
\begin{table}[p]
\centering
\caption{\textbf{Detailed performance of \method{} with and without search across 12 targets.} Both configurations use generation-time guidance and AFCyc optimization. This experiment reports the hotspot-based composite score $S_{\mathrm{hotspot}}$.}
\label{tab:app_scaling_main}
\begingroup
\normalsize
\setlength{\tabcolsep}{2pt}
\renewcommand{\arraystretch}{1.12}
% Source: scaling_main.tex
\noindent\makebox[\textwidth][l]{\textbf{(a) Prediction confidence}}\par
\vspace{3pt}
\begin{tabular*}{\textwidth}{@{\extracolsep{\fill}}lccc*{4}{r}@{}}
\toprule
\multirow{2}{*}{Method} & \multirow{2}{*}{Guide} & \multirow{2}{*}{\shortstack{AFCyc\\optim.}} & \multirow{2}{*}{Search} & \multicolumn{2}{c}{ipTM $\uparrow$} & \multicolumn{2}{c}{Peptide pLDDT $\uparrow$} \\
\cmidrule(lr){5-6}
\cmidrule(lr){7-8}
 &  &  &  & Mean & Best & Mean & Best \\
\midrule
\multirow{2}{*}{\method{}} & Yes & Yes & No & \underline{0.651} & \underline{0.904} & \underline{57.90} & \underline{87.92} \\
 & Yes & Yes & Yes & \textbf{0.849} & \textbf{0.937} & \textbf{79.91} & \textbf{94.69} \\
\bottomrule
\end{tabular*}

\par\vspace{0.7em}

% Source: scaling_main.tex
\noindent\makebox[\textwidth][l]{\textbf{(b) Hotspot score and strict hits}}\par
\vspace{3pt}
\begin{tabular*}{\textwidth}{@{\extracolsep{\fill}}lccc*{6}{r}@{}}
\toprule
\multirow{2}{*}{Method} & \multirow{2}{*}{Guide} & \multirow{2}{*}{\shortstack{AFCyc\\optim.}} & \multirow{2}{*}{Search} & \multicolumn{2}{c}{Hotspot iPAE $\downarrow$} & \multicolumn{2}{c}{$S_{\mathrm{hotspot}}\uparrow$} & \multicolumn{2}{c}{Strict hits $\uparrow$} \\
\cmidrule(lr){5-6}
\cmidrule(lr){7-8}
\cmidrule(lr){9-10}
 &  &  &  & Mean & Best & Mean & Best & Count & Rate (\%) \\
\midrule
\multirow{2}{*}{\method{}} & Yes & Yes & No & \underline{0.241} & \underline{0.091} & \underline{1.989} & \underline{2.693} & \underline{125/384} & \underline{32.6} \\
 & Yes & Yes & Yes & \textbf{0.132} & \textbf{0.061} & \textbf{2.516} & \textbf{2.823} & \textbf{290/384} & \textbf{75.5} \\
\bottomrule
\end{tabular*}
\par\vspace{4pt}
\parbox{\textwidth}{\footnotesize\raggedright
\textit{Notes.} Bold and underlined values indicate the best and second-best results across the two configurations, respectively, separately for each numeric column; ties share the same rank. Best averages 12 candidates: select the highest-composite candidate among 32 for each target, then average its metrics across targets. Strict hits are counted over all 384 candidates and require $\mathrm{ipTM}\geq0.7$, peptide pLDDT $\geq70$, and interface iPAE $\leq0.3$.
}
\endgroup
\end{table}

\begin{table}[p]
\centering
\caption{\textbf{Detailed Rosetta interface metrics for \method{} with and without Search across 12 targets.} Both configurations use generation-time guidance and AFCyc optimization.}
\label{tab:app_scaling_rosetta}
\begingroup
\normalsize
\setlength{\tabcolsep}{2pt}
\renewcommand{\arraystretch}{1.12}
% Source: scaling_rosetta.tex
\noindent\makebox[\textwidth][l]{\textbf{(a) Interface energetics and buried surface area}}\par
\vspace{3pt}
\begin{tabular*}{\textwidth}{@{\extracolsep{\fill}}lccc*{6}{r}@{}}
\toprule
\multirow{3}{*}{Method} & \multirow{3}{*}{Guide} & \multirow{3}{*}{\shortstack{AFCyc\\optim.}} & \multirow{3}{*}{Search} & \multicolumn{2}{c}{$\Delta G_{\mathrm{sep}}\downarrow$} & \multicolumn{2}{c}{$100\Delta G_{\mathrm{sep}}/\Delta\mathrm{SASA}\downarrow$} & \multicolumn{2}{c}{$\Delta\mathrm{SASA}\uparrow$} \\
 &  &  &  & \multicolumn{2}{c}{(REU)} & \multicolumn{2}{c}{(REU/$100\,\text{\AA}^{2}$)} & \multicolumn{2}{c}{($\text{\AA}^{2}$)} \\
\cmidrule(lr){5-6}
\cmidrule(lr){7-8}
\cmidrule(lr){9-10}
 &  &  &  & Mean & Best & Mean & Best & Mean & Best \\
\midrule
\multirow{2}{*}{\method{}} & Yes & Yes & No & \underline{-38.386} & \underline{-42.298} & \underline{-3.396} & \underline{-3.524} & \underline{1091.5} & \underline{1154.6} \\
 & Yes & Yes & Yes & \textbf{-38.841} & \textbf{-43.514} & \textbf{-3.567} & \textbf{-3.724} & \textbf{1107.4} & \textbf{1180.5} \\
\bottomrule
\end{tabular*}

\par\vspace{0.7em}

% Source: scaling_rosetta.tex
\noindent\makebox[\textwidth][l]{\textbf{(b) Hydrogen bonding and interface geometry}}\par
\vspace{3pt}
\begin{tabular*}{\textwidth}{@{\extracolsep{\fill}}lccc*{8}{r}@{}}
\toprule
\multirow{2}{*}{Method} & \multirow{2}{*}{Guide} & \multirow{2}{*}{\shortstack{AFCyc\\optim.}} & \multirow{2}{*}{Search} & \multicolumn{2}{c}{\shortstack{Interface\\H-bonds $\uparrow$}} & \multicolumn{2}{c}{\shortstack{Unsat.\\H-bonds $\downarrow$}} & \multicolumn{2}{c}{SC $\uparrow$} & \multicolumn{2}{c}{Packstat $\uparrow$} \\
\cmidrule(lr){5-6}
\cmidrule(lr){7-8}
\cmidrule(lr){9-10}
\cmidrule(lr){11-12}
 &  &  &  & Mean & Best & Mean & Best & Mean & Best & Mean & Best \\
\midrule
\multirow{2}{*}{\method{}} & Yes & Yes & No & \textbf{5.658} & \textbf{6.050} & \textbf{5.433} & \textbf{5.200} & \underline{0.702} & \underline{0.699} & \underline{0.649} & \underline{0.666} \\
 & Yes & Yes & Yes & \underline{5.156} & \underline{5.500} & \underline{5.807} & \underline{5.583} & \textbf{0.729} & \textbf{0.730} & \textbf{0.664} & \textbf{0.678} \\
\bottomrule
\end{tabular*}
\par\vspace{4pt}
\parbox{\textwidth}{\footnotesize\raggedright
\textit{Notes.} Bold and underlined values indicate the best and second-best results across the two configurations, respectively, separately for each numeric column; ties share the same rank. 
}
\endgroup
\end{table}

%% file: tables/app_tl1a.tex
We study TL1A as a challenging target for cyclic-peptide design against
an oligomeric protein--protein interface.
TL1A (TNFSF15) is a TNF-superfamily ligand recognized by the signaling
receptor DR3 and the soluble decoy receptor DcR3~\citep{migone2002tl1a}.
The TL1A--DcR3 crystal structure reveals a trimeric TL1A core bound by
three DcR3 molecules~\citep{zhan2011decoy}.
Our objective is to design cyclic peptides that occupy the DcR3-binding
surface of trimeric TL1A, with the intended function of competitively
blocking TL1A--DcR3 association.
Given the trimeric target and designated interface hotspots, the task is
to generate a peptide sequence and cyclic conformation that can engage
this surface in the assembled complex.

This case combines several geometric and modeling challenges.
The designated hotspots span two TL1A chains, so recognizing the target
requires a peptide pose compatible with a composite surface whose geometry
depends on the relative arrangement of neighboring subunits.
The peptide must establish contacts across this surface while avoiding
steric clashes with the surrounding trimer.
Meanwhile, the long target chains produce a large total residue context,
requiring precise peptide localization within a substantially larger assembly.
Cyclization adds a further constraint: the peptide must adopt a closed
conformation while positioning its interacting residues to engage hotspots
on both chains.
Together, these requirements make the case a demanding test of whether
cofolding can jointly resolve peptide conformation, interface placement,
and compatibility with an oligomeric target.

For all evaluated configurations, we design cyclic peptides of length 12
residues and generate 12 candidate cyclic peptides per model.
For \method{} with inference-time search, the search procedure is configured
with parameters $K=8$, $K_0=4$, $M=6$ (yielding $N=24$), and $N_0=12$.

Table~\ref{tab:tl1a_case} compares RFPeptides, HFGuidedDesign, \method{},
and \method{} with inference-time search.
Across all methods, ipTM values remain closely clustered at a high level.
However, this uniformity obscures substantial uncertainty in peptide folding:
all three baseline configurations without search yield consistently low peptide
pLDDT, indicating that a favorable target-wide ipTM alone does not guarantee a
well-resolved peptide structure.
In contrast, incorporating inference-time search leads to pronounced improvements
across the board, yielding the highest peptide pLDDT, the lowest interface and
hotspot iPAE, and the strongest hotspot scores under both Mean and AF Top-1
summaries.

These trends demonstrate that inference-time search effectively guides the model
toward more confident and geometrically consistent solutions on this multi-chain
interface.
Nevertheless, the absolute peptide confidence remains modest even after search,
reflecting the inherent difficulty of the oligomeric target.
While these computational gains indicate improved structural plausibility,
experimental binding and competition assays will be essential to verify
functional TL1A--DcR3 blockade.

\begin{table}[!ht]
\centering
\small
\caption{\textbf{Cofolding results on the challenging TL1A-trimer case.}
Each cell reports Mean / AF Top-1.
Search improves peptide confidence, interface iPAE, and hotspot-specific
metrics, while mean ipTM remains similar across methods.}
\label{tab:tl1a_case}

\begingroup
\setlength{\tabcolsep}{3pt}
\renewcommand{\arraystretch}{1.18}

\begin{tabular*}{\textwidth}{
    @{\extracolsep{\fill}}l*{5}{r}@{}
}
\toprule
Method
& ipTM $\uparrow$
& \shortstack{Peptide\\pLDDT $\uparrow$}
& \shortstack{Interface\\iPAE $\downarrow$}
& \shortstack{Hotspot\\iPAE $\downarrow$}
& \shortstack{Hotspot\\$S_{\mathrm{hotspot}}\uparrow$} \\
\midrule

RFPeptides
& \textbf{0.868} / \underline{0.873}
& 26.82 / 30.38
& \underline{0.808} / 0.725
& 0.751 / 0.623
& \underline{1.385} / 1.555 \\

HFGuidedDesign
& \underline{0.867} / 0.868
& 25.52 / 27.44
& 0.818 / 0.790
& \underline{0.749} / 0.705
& 1.374 / 1.438 \\

\midrule

\textbf{\method}
& \textbf{0.868} / \underline{0.873}
& \underline{26.96} / \underline{33.42}
& 0.810 / \underline{0.689}
& 0.753 / \underline{0.583}
& 1.384 / \underline{1.624} \\

\textbf{\method{} (Search)}
& 0.864 / \textbf{0.877}
& \textbf{38.41} / \textbf{49.79}
& \textbf{0.677} / \textbf{0.632}
& \textbf{0.576} / \textbf{0.524}
& \textbf{1.673} / \textbf{1.851} \\

\bottomrule
\end{tabular*}

\par\vspace{3pt}
\parbox{\textwidth}{\footnotesize\raggedright
Bold and underline indicate the best and second-best values,
respectively, separately for each metric and summary;
ties share the same rank.
AF Top-1 follows the AF composite-score selection used above
and is not selected independently for each metric.
}
\endgroup
\end{table}